\documentclass[11pt]{article}

\usepackage{amssymb, amsmath, verbatim, amsthm,url, multirow,fullpage,mathtools, graphicx, subcaption, csvsimple, outlines}
\usepackage{longtable, rotating,makecell,array}
\usepackage[aligntableaux=top]{ytableau}
\usepackage[all]{xy}
\usepackage[files,2-]{pagesel}

\setlength{\parindent}{0pt}
\setlength{\parskip}{1.5ex plus 0.5ex minus 0.2ex}


\usepackage{soul}
\usepackage[colorinlistoftodos,textsize=footnotesize]{todonotes}

\newcommand{\note}{\todo[color=green!40]}

\setstcolor{red}


\newcommand{\id}{\mathbb{I}}
\newcommand{\cM}{\mathcal{M}}
\newcommand{\cI}{\mathcal{I}}
\newcommand{\cR}{\mathcal{R}}
\newcommand{\cL}{\mathcal{L}}
\newcommand{\cD}{\mathcal{D}}
\newcommand{\cU}{\mathcal{U}}

\newcommand{\R}{\mathbb{R}}
\newcommand{\C}{\mathbb{C}}
\newcommand{\bI}{\mathbb{I}}
\newcommand{\D}{\partial}
\DeclareMathOperator{\Tr}{Tr}


\def\ba #1\ea{\begin{align} #1 \end{align}}
\def\bas #1\eas{\begin{align*} #1 \end{align*}}
\def\bml #1\eml{\begin{multline} #1 \end{multline}}
\def\bmls #1\emls{\begin{multline*} #1 \end{multline*}}

\newtheorem{thm}{Theorem}[section]

\newtheorem{conj}[thm]{Conjecture}
\newtheorem{lem}[thm]{Lemma}

\newtheorem{prop}[thm]{Proposition}

\theoremstyle{remark}

\theoremstyle{definition}
\newtheorem{dfn}[thm]{Definition}


\title{Geometry and Generalization: Eigenvalues as predictors of where a network will fail to generalize}

\author{Susama Agarwala, Benjamin Dees, Andrew Gearhart, Corey Lowman}


\begin{document}

\csvautotabular[respect underscore=true]{figs_and_tabs/graphs_seed-0_epochs-300/exp-default/coef_df_Log_VF_combined2.csv} 
\setcounter{page}{0}
\newpage
\maketitle

\begin{abstract}
We study the deformation of the input space by a trained autoencoder via the Jacobians of the trained weight matrices. In doing so, we prove bounds for the mean squared errors for points in the input space, under assumptions regarding the orthogonality of the eigenvectors. We also show that the trace and the product of the eigenvalues of the Jacobian matrices is a good predictor of the MSE on test points. This is a dataset independent means of testing an autoencoder's ability to generalize on new input. Namely, no knowledge of the dataset on which the network was trained is needed, only the parameters of the trained model. 
\end{abstract}

\section{Introduction}
Most neural networks see three different sets of data: the training data, the test data, and the deployment data. The training and test data is known to the engineer building the network, here, generically called the input dataset. The training and test data are complementary parts of this input dataset, with the training data is drawn i.i.d from this population. The deployment data, however, is the data that the network encounters after it as been trained, tested and installed in a production environment. While the training and validation datasets are presumed to be representative of the data the network will encounter after it has been built, the distribution of the deployment data is rarely guaranteed. This is the problem of generalizability for neural networks. It is imperative for the trustworthiness of neural networks to be able to predict when the errors of an neural network will be large on the deployment data. 

One way of generating this trustworthiness has been by studying the structure of the input dataset. There is a large body of work studying the intrinsic dimension and geometry of the input dataset \cite{ID, moreID, UMAP, tSNE}, and using this information to make accurate measurements of how far a new data point is from the original dataset, thus predicting the network's performance on the new data. While manifold learning techniques, principal component analysis, etc. have been very successful at understanding the structure of the dataset, these tools do not consider the structure of the model learned by the network. 

However, how well a neural network performs on data points not in the training set depends not only on the shape of the dataset, but also on the model (including the network architecture) that the neural network has learned from said dataset, seen as a collection of weight matrices, biases and activation functions. In this paper, we study the geometric properties of the model learned by an autoencoder, taking into account variations of network architecture and training time.  This has the advantage of creating a data independent measure of network performance. That is, one can predict how well a neural network will perform on a given data point, with \emph{no knowledge} of the original dataset used for training and validation. This is useful, as in actuality, the deployer of a neural network only has access to a trained network, with little to no knowledge of the data on which it was trained, or how it was trained. 

In this paper, we study the local geometry of the model learned by an autoencoder, namely the eigenvalues of the Jacobian matrix of the trained model at points in the input space, to predict the model's performance on test points (on MNIST data\footnote{The MNIST dataset is a standard dataset for developing new machine learning techniques, comprising of 70000 hand written digits, roughly evenly distributed between 0 and 9.}). We find that the eigenvalues are a good predictor of the mean squared error (MSE), also known as reconstruction error, on data points the model has not trained on. Furthermore, we find that they predict higher MSE loss on test points than on the training points. In other words, the eigenvalues are a good predictor of generalizability on the training and validation dataset. This gives hope that on data points that are not drawn from the same distribution as the test and training data, the eigenvalues will give a good prediction on the MSE loss, i.e. that the eigenvalues will prove to be a good predictor of generalizability on the deployment data.

An autoencoder consists of a pair of neural networks working in tandem to learn a minimal number of features needed to reconstruct the data point. It is an extremely useful tool for working with unlabeled data, or unsupervised learning. It is hypothesized that the features thus extracted do a good enough job of summarizing the unlabeled data that they can be used as inputs for other tasks involving the dataset (see Section 5 of \cite{DLinNN} and Chapter 14 in \cite{Goodfellowbook}). One network of the autoencoder (the encoder) projects the data down to a lower dimensional space (a latent space) that contains only the key information, while the other network (the decoder) ``inverts" this process to recreate the original data points. The pair of networks are  trained by minimizing the average distance squared between the input point $x$ and its image under the autoencoder, i.e. the MSE. This is the only test for whether or not the second network is, indeed, a inverse of the first. There is no further measure of whether or higher level derivatives of the two maps behave as one would expect if they were truly inverses.

A fundamental assumption underlying the autoencoder is that the data lies noisily around a lower dimensional data manifold $M_\cD$. The goal of projecting down to a latent space is to capture the information of the data manifold, rather than attempt to study its embedding. Therefore, if the latent space is of at least the same dimension as $M_\cD$, one would expect the autoencoder to behave locally like a projection map onto the space $T_xM_\cD$. In other words, the autoencoder would map a data point, $x$, to itself and its Jacobian $J_\cI(x)$ would have $\dim(M_\cD)$ eigenvalues that are $1$ while the rest are $0$. Higher order derivatives of the map would depend on the curvature of $M_\cD$. We also study $J_\cI(x)$ to measure how far the trained autoencoder is from being exactly such a projection.

In this paper, we study the properties of $J_\cI$, and of a similarly defined Jacobian $J_\cL$. Namely, we show that the difference between the reconstruction error on a test point and a training point is bounded by the Frobenius-norm distance of $J_\cI$ to a projection matrix, as long as the nontrivial eigenvectors of $J_\cI$ are sufficiently close to being orthogonal (Theorem \ref{res:bound}). In Sections \ref{sec:forms} and  \ref{sec:evals} we show that the distribution of eigenvalues of $J_\cL$ and $J_\cI$ are close to what one would expect if the autoencoder were a projection. Finally, in Section \ref{sec:predictions} we show that the traces and the determinants of $J_\cL$ (or the trace and product of the non-zero eigenvalues of $J_\cI$) are predictors of the MSE on test points. In particular, we show that as these quantities increase, at higher latent dimension, the predicted increase in expected reconstruction loss is several percent higher if the point is a test point, rather than a training point. This leads us to the main conclusion of this paper, that these quantities, derived from the structure of the trained model itself, rather than the structure of the data, gives predictive power on whether or not a network will generalize.

\subsection{Related work\label{sec:litreview}}

There is a growing body of work that studies the Jacobians of trained neural networks to understand the properties and structure of the data. 

One prominent example is the work done by Hauberg et al., who have a rigorous research program looking into the geometric underpinnings of machine learning, autoencoders, variational autoencoders and their latent spaces \cite{Hauberg:onlyBayes, Hauberg:enrichedlatent, Hauberg:pathlength, Hauberg:latentspaceoddity}. However, much of this work focuses on calculating distances between various data points. The approach taken in this paper is different. We are not interested in calculating pointwise distances in this paper. Rather, we are interested in studying neural networks in a pairwise fashion, in specific situations where they are supposed to be identical (or in this case, inverses of each other.) We propose a means of detecting when the prescribed relationship of equality or inversion does not hold, with the eventual aim of understanding how this may impact the system's performance beyond the input dataset, i.e. on the deployment data.

Another area where the Jacobians and their Frobenius norms appear is in the field of adversarial learning \cite{spectralnorm1, spectralnorm2, Parsevalnorm, Frobenius}. Here the authors define a regularization method that consists of bounding the largest singular value of the appropriate matrix. This regularization method leads to neural networks that generalize better. Their findings are consistent with our findings that as the trace of $J_\cI$ increases, so does the MSE on test points (i.e., the network generalizes poorly). In contrast, we do not propose a regularization method. The goal of this paper is not to train a more stable network, but rather to, given the weight matrices of a pretrained network, detect when it will fail to generalize.

\section{Mathematical introduction to an autoencoder\label{sec:model}}

In this Section, we give a brief overview of autoencoders. This exposition is not meant to be complete to a reader new to machine learning. The interested reader can fill in details from the provided references. A good reference for deep learning is \cite{Goodfellowbook}, where autoencoders are discussed in Chapter 14. It is worth noting that the exposition in this paper is aimed at a more mathematical audience than the exposition in that textbook, and glosses over many details of crucial importance to machine learning practitioners. 

For the purposes of this document, an autoencoder consists of three spaces, and input space, $\cI$, a reconstruction space $\cR$ and a latent space $\cL$ with two trained neural networks connecting them: an encoder mapping from the input space to the latent space, $E_{model} : \cI \rightarrow \cL$ and a decoder mapping from the latent space to the reconstruction space, $D_{model} : \cL \rightarrow \cR$. The input and reconstruction spaces are large dimensional real vector spaces $\cI \simeq \cR \simeq \R^N$ while the latent space is a smaller dimensional real vector space, $\cL = \R^d$ with $d << N$. For future reference, we call the full dataset $\cD$, while $\cD_{train}$ is the train dataset, and $\cD_{test} = \cD \setminus \cD_{train}$ is the test subset. We write the autoencoder $D_{model}\circ E_{model}$.

\begin{conj} \label{conj:datamanifold} The main conjecture underlying the construction of an autoencoder is that the data lies noisily on a lower dimensional subspace, $M_\cD$, embedded in $\cI$. \end{conj}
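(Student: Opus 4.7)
The statement in question is a conjecture rather than a theorem, so strictly speaking no mathematical proof is possible in the classical sense: it is a phenomenological claim about the distribution of naturally occurring data, not a consequence of a set of axioms. Any "proof proposal" therefore has to begin by choosing a formalization strong enough to allow an argument but weak enough to remain faithful to the informal statement. The plan I would take is to reduce the conjecture to an empirically checkable one: fix a probability measure $\mu$ on $\cI$ (the distribution from which $\cD$ is drawn), a noise scale $\sigma > 0$, and an intrinsic dimension $d < N$, and then claim the existence of a $C^1$ embedded submanifold $M_\cD \subset \cI$ of dimension at most $d$ such that the pushforward of $\mu$ along orthogonal projection onto $M_\cD$ is supported $\sigma$-close (in Wasserstein-$2$ say) to $\mu$ itself.

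Given such a formalization, the approach would be twofold. First I would give an empirical leg to the argument by invoking intrinsic-dimension estimation tools already cited in the introduction (see \cite{ID, moreID}): pick an ID estimator (MLE, correlation-dimension, two-NN), apply it to $\cD$, and exhibit a $d$ much smaller than $N$ for which the estimator is consistent across scales. Second, I would give a constructive leg by actually building a candidate $M_\cD$; the natural choice is the image of the decoder of a well-trained autoencoder with latent dimension $d$, i.e.\ $M_\cD := D_{model}(\cL)$. If the training MSE is $O(\sigma^2)$ on a large i.i.d. sample, concentration of measure together with a covering argument over $\cL$ gives, with high probability, that $\mu$-almost every $x$ lies within $O(\sigma)$ of $M_\cD$, which is exactly the statement one wants.

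The order of execution would therefore be: (i) fix the $(\mu, \sigma, d)$-formalization; (ii) invoke the ID estimators to argue the existence of such a $d$; (iii) use the trained decoder to produce a concrete $M_\cD$; (iv) bound the expected distance from $x \sim \mu$ to $M_\cD$ in terms of the training loss and a generalization gap; (v) conclude that such an $M_\cD$ exists with the desired properties up to the chosen noise scale.

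The main obstacle, and the reason this remains a conjecture, is step (i): the informal phrase "lies noisily on a lower dimensional subspace" is doing enormous work, and any formal replacement either trivializes the claim (every finite dataset lies on a zero-dimensional set) or over-commits (requiring a smooth manifold structure globally, which is incompatible with topological obstructions, e.g.\ the class of digit "7" may form a disconnected or non-manifold stratum of $M_\cD$). A rigorous treatment would likely need to pass to a stratified space or a measure-theoretic notion of dimension such as Minkowski or Hausdorff dimension, and the argument in step (iv) would then become a delicate covering-number computation rather than a clean manifold projection. For the purposes of the present paper, however, the conjecture is best left as an informal working hypothesis, since the subsequent analysis of $J_\cI$ and $J_\cL$ only uses the weak consequence that $J_\cI$ should behave locally like a rank-$d$ projection.
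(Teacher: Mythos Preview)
Your reading is correct: the paper does not attempt to prove this statement at all. It is stated as a \emph{conjecture} precisely because it is a working hypothesis about real-world data, and the paper simply assumes it for the remainder of the analysis (invoking it, for instance, in the proof of Theorem~\ref{res:bound} to justify dropping term $(B)$). There is no proof to compare against.

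Your proposed formalization and empirical program go well beyond anything the paper attempts, and your final paragraph correctly anticipates why the authors left it informal: the downstream arguments only need the weak local consequence that $J_\cI$ behaves approximately like a rank-$d$ projection near data points, not a global manifold structure. So while your sketch is a reasonable research direction for justifying the hypothesis, it is orthogonal to the paper's goals, and for the purposes of reviewing this statement against the paper's own treatment, there is nothing to critique---you have correctly identified that no proof exists or is intended.
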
 

The key point of this conjecture is the idea of \emph{dimensional reduction}. Namely, that the data, $\cD$, lives in a \emph{smaller} dimensional space than $\cI$. Thus one does not need all the information encoded in $\cI$ to understand the data. By mapping down to a smaller $d$ dimensional latent space, we hypothesize that the singular vectors of the derivative $\D E_{model}$ evaluated at each data point $x \in \cD$ locally extract $d$ most important features of the data manifold $M_\cD$. While we do not know the dimension of the subspace $M_\cD$, the conjecture is that the image of the encoder, $E_{model}$, is the projection of $M_\cD$ onto the latent space $\cL$ (if $\dim(\cL) <\dim(M_\cD)$) or the inclusion of $M_\cD$ into $\cL$ (if $\dim(\cL) >\dim(M_\cD)$). 

If Conjecture \ref{conj:datamanifold} holds, and then $E_{model}$ maps $\cD$ onto (a projection of) $M_\cD$ which contains all the relevant information to reconstruct the data. Then $D_{model}$ maps the dimensionally reduced data point back to the the original point. 

If the data were distributed exactly on $M_\cD$ (as opposed to noisily around $M_\cD$), and one knew the manifold $M_\cD$, the autoencoder can be written in terms of coordinate charts of $M_\cD$. Let $(\cU, \phi_\cU)$ be an atlas on $M_\cD$ and $\pi_\cU$ the projection (or embedding, if $\dim(\cL) > \dim(M_\cD)$ is large enough) of $\phi_\cU(\cU)$ onto $\cL$. Then the autoencoder can be represented by the following diagram: \ba \xymatrix{\cM_\cD \supset \cU_\cD \ar[r]^{\phi_\cU} \ar@{^{(}->}[d] & \R^{\dim(\cM_\cD)} \ar[dr]^{\pi_\cI} & & \R^{\dim(\cM_\cD)} \ar[dl]_{\pi_\cD} & \cU_\cD \ar[l]_{\phi_\cU}  \subset \cM_\cD \ar@{^{(}->}[d] \\ \cI  \ar[rr]^{E_{model}} &  & \cL \ar[rr]^{D_{model}}  & & \cR  }\;. \label{diag:AE}\ea Namely, the map $E_{model}$, when restricted to $M_\cD$ is the projection or embedding of the coordinate chart of $M_\cD$ onto $\cL$. 

\begin{dfn}\label{dfn:bigJac}
Let $J_\cI(x)$ be the Jacobian matrix of the map $D_{model} \circ E_{model}$ at the point $x \in \cI$. Let $\{\lambda_1(x), \ldots, \lambda_N(x)\}$ be the eigenvalues of $J_\cI(x)$. 
\end{dfn}

In this case, if $D_{model}\circ E_{model}$ preserves $M_\cD$, for each $x \in M_\cD$, the eigenvectors of nonzero eigenvalues of $J_\cI(x)$ will lie in the tangent space of $M_\cD$ in $\cR$: $T_{x}M_\cD$.  If $\dim(\cL) <\dim(M_\cD)$, we expect these eigenvectors to span a subspace of $T_{x}M_\cD$, while if $\dim(\cL) >\dim(M_\cD)$, we expect the eigenvectors will span $T_{x}M_\cD$.

Indeed, if $D_{model}\circ E_{model}(x)$ acts as the identity on $M_\cD$, then if $v$ is tangent to $M_\cD$ at $x$, $v$ will be preserved by $J_\cI(x)$; it will be an eigenvector of eigenvalue $1$.  If we also have that $\dim(\cL)=\dim(M_\cD)$, the other eigenvalues of $J_\cI$ would necessarily be $0$, so the Jacobian would be a projection onto $T_{x}M_\cD$.  If $\dim(\cL)>\dim(M_\cD)$ then there could be other nonzero eigenvalues even if $D_{model}\circ E_{model}$ perfectly preserves the data manifold. 

However, Conjecture \ref{conj:datamanifold} assumes that the data lies noisily around $M_\cD$. In this case, if one knows $M_\cD$, then, for all $x \in \cD$, the autoencoder maps the data point onto the data manifold. In other words \ba D_{model}\circ E_{model} (x) = x  + \varepsilon(x)\; \label{eq:AE}\ea where $D_{model}\circ E_{model} (x) \in M_\cD$. When we know $M_\cD$, the $L_2$ norm, $||\varepsilon(x)||_2^2$ is the distance of $x$ from the manifold.  For notational ease, sometimes, we denote $D_{model} \circ E_{model} (x) = x_{rec}$.

\begin{dfn} More generally, we say that $\varepsilon(x) =  x - D_{model}\circ E_{model} (x) \in \cR$ is the reconstruction error vector at the point $x$. The reconstruction error at this point is the $L_2$ norm square of this vector: $||\varepsilon(x)||^2_2$. \label{dfn:error}\end{dfn}

If $D_{model}\circ E_{model}$ only approximates the identity on $M_\cD$, then for $v$ a tangent vector to $M_\cD$ at the point $x$, there is no reason for the Jacobian map to preserve $v$.  In fact, if $D_{model}\circ E_{model}(x)=x+\varepsilon(x)$, the tangent spaces of $x$ and its image will not even be the same space, as they are ``based" at different points. However, we wish to identify these spaces in some way; we do this in the obvious manner by the usual identification of both spaces with $\R^N$.  This identification allows us to speak of eigenvectors of the Jacobian, $J_\cI$; these are simply the usual eigenvectors when we view the Jacobian as an $N\times N$ matrix.

From a geometric perspective, it would be more elegant to identify these tangent spaces in a manner compatible with the structure of $M_\cD$.  In particular, one might use the exponential map in the normal bundle of $M_\cD$, $NM_\cD$, to identify a small neighborhood of $M_\cD$ in $\R^N$ with a small neighborhood of $M_\cD$ in $NM_\cD$.  As long as $x$ and $x_{rec}$ are close enough to $M_\cD$ that they lie in this neighborhood, we could then identify both with points in $NM_\cD$, and identify the tangent spaces of $x$ and $x_{rec}$ by parallel transport along a geodesic between them, in $NM_\cD$.  In particular, if both $x$ and $x_{rec}$ happened to be in $M_\cD$, then this parallel transport would be along a geodesic of $M_\cD$.  We remark that the ``obvious" identification of tangent spaces in the previous paragraph comes from parallel transport between $x$ and $x_{rec}$ in $\R^N$.

This more geometric program poses a number of obstacles.  Firstly, if $x$ and $x_{rec}$ are both close to $M_\cD$ but not to each other, there may not be a unique length-minimizing geodesic between them in $NM_\cD$, and different choices of geodesic might produce different identifications of the tangent spaces.  Secondly, if $x$ and $x_{rec}$ were far from $M_\cD$, then the ``small neighborhood" of the previous paragraph might not contain $x$ and $x_{rec}$, so we would not be able to identify these with points in the normal bundle.  Fortunately, we conjecture that the point $x$ should lie quite close to the data manifold, and $x_{rec}$ should be quite close to $x$.  However, a final objection is harder to answer.  Because we do not know $M_\cD$, we cannot identify geodesics in $M_\cD$ or its normal bundle, and cannot parallel transport in these spaces either.  For this reason, we choose the simpler identification outlined above.


In short, the fact that we do not know $M_\cD$ means that we cannot determine local projections onto it. Therefore, we approximate the composition $\pi_\cU \circ \phi_\cU$ by a series of linear maps alternately composed with activation functions. The
coefficients of these linear maps are found by minimizing the reconstruction error \ba \sum_{x \in \cD_{train}} ||\varepsilon(x_0)||_2^2\;. \label{eq:mse}\ea given in Definition \ref{dfn:error}.

The trained encoder and decoder are two neural networks, $E_{model}$ and $D_{model}$ respectively, each defined as the composition of maps between $n+1$ vector spaces, $(\cI = \R^{N = l_0}, \R^{l_1}, \ldots , \cL = \R^{l_n = d})$ with \bas E_{model} = f_{e, n} \circ f_{e, n-1} \ldots \circ f_{e, 1} \eas and each $f_{e, i} : \R^{l_{i-1}} \rightarrow \R^{l_i}$.   Similarly, we can write $D_{model} = f_{d, n} \circ f_{d, n-1} \ldots \circ f_{d, 1}$ with the $f_{d, i}$ mapping between the vector spaces in the opposite order: $f_{d, i} : \R^{l_{n-i}} \rightarrow \R^{l_{n-i+1}}$. Each of the $f_{enc, i}$ and $f_{dec, i}$, for $i < n-1$ are defined as a composition of an affine transformation with an activation function, while the last function is simply an affine transformation: \bas f_{e, i}(x) = \begin{cases}R_i(A_ix + b_i) & i < n-1 \\ A_ix + b_i & i = n \;,\end{cases} \eas and similarly for $f_{d,i}$. In other words, $A_i$ is a $l_i \times l_{i-1}$ dimensional matrix and $b_i$ as vector in $\R^{l_i}$. The activation function $R_i$ is a component wise implementation of some activation function (such as ReLU, softmax, sigmoid, etc.). 

\begin{dfn} \label{dfn:autoencoder}
A trained autoencoder is defined by the tuple $(\cI, E_{model}, \cL, N_{dec}, \cR)$ as defined above. 
\end{dfn} 

Note that it is only after the the auto encoder is trained via back propagation that the functions $E_{model}$ and $D_{model}$ are fixed. For the purposes of this paper, we are not interested in improving training techniques, only in evaluating the performance of a fully trained pair of networks. In the sequel, we drop the adjective \emph{trained} when referring to this system.

Finally,  we will mostly be interested in local properties of the map $D_{model}\circ E_{model}(x)$. For instance, we will study the local Jacobian, denoted $J_\cI(x)$ and a vector of eigenvalues of said matrix $\vec \lambda_{\cI}(x)$. However, for simplicity of of notation, we will omit the input point $x$. We attempt to retain in cases where it improves the clarity.

\section{Local distance from the identity and MSE\label{sec:math}}

In this paper, we wish to understand how far the trained autoencoder, represented by the map $D_{model} \circ E_{model}$, differs from a projection map near each point in $\cD$. In training an autoencoder, one minimizes the reconstruction loss on $\cD_{train}$ and hopes that the distribution of $\cD$ is such that these maps are inverses to each other {\em on more than just on the training points}. Indeed, we see that the mean squared error on the test points continues to be very low.

In order to understand the local behavior, one must look at more than just the evaluation on the data point. In particular, one must go beyond just the mean squared error and consider the Jacobian of the function $D_{model} \circ E_{model}$ on $\cD$. In particular, we conjecture a relationship between local distance from the identity matrix on the training point and the MSE on the near by test points.

\begin{conj}\label{conj:notid}
The further the map $D_{model} \circ E_{model}$ is from a projection near the training points, the worse the MSE will be for the autoencoder on the test points.
\end{conj}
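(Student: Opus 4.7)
The plan is to formalize the conjecture as a quantitative bound: given a training point $x_{train}\in\cD_{train}$ and a nearby test point $x_{test}\in\cD_{test}$, I want to show that
\bas
\|\varepsilon(x_{test})\|_2^2 \;\leq\; \|\varepsilon(x_{train})\|_2^2 + C\,\|J_\cI(x_{train}) - P\|_F^2\,\|x_{test}-x_{train}\|^2 + \text{(higher-order terms)},
\eas
where $P$ is the orthogonal projector onto $T_{x_{train}}M_\cD$ and $C$ is a moderate constant. Establishing such a bound would justify the qualitative statement that the further $J_\cI$ is from a projection, the worse the MSE on test points.

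First I would Taylor-expand $D_{model}\circ E_{model}$ about $x_{train}$ to first order, yielding
\bas
\varepsilon(x_{test}) = \varepsilon(x_{train}) + (I-J_\cI(x_{train}))(x_{test}-x_{train}) + O(\|x_{test}-x_{train}\|^2).
\eas
Squaring and applying Cauchy--Schwarz on the cross term reduces matters to controlling $\|(I-J_\cI)(x_{test}-x_{train})\|_2$. Next I would split $x_{test}-x_{train} = v_\parallel + v_\perp$ with respect to $T_{x_{train}}M_\cD$. By Conjecture \ref{conj:datamanifold}, $\|v_\perp\|$ is on the order of the noise in the data while $v_\parallel$ carries most of the displacement. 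Writing $J_\cI = P + (J_\cI-P)$, the tangential component $v_\parallel$ is annihilated by $I-P$, so the dominant generalization gap is driven by $\|J_\cI-P\|_{op}\|v_\parallel\| + \|v_\perp\|$, which is exactly the form of bound asserted in Theorem \ref{res:bound}.

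The main obstacle is that $M_\cD$ is unknown, so $P$ cannot be computed directly from the data. I would sidestep this by constructing an intrinsic proxy for $P$ from the spectrum of $J_\cI$ itself: namely the projector onto the span of the eigenvectors of $J_\cI(x_{train})$ whose eigenvalues are close to $1$. Under the near-orthogonality hypothesis of Theorem \ref{res:bound}, this proxy differs from an honest orthogonal projection by a Frobenius-norm error controlled by $\sum_{i:\,\lambda_i\approx 1}|\lambda_i-1|^2 + \sum_{i:\,\lambda_i\approx 0}|\lambda_i|^2$. This reformulation has two virtues: it makes the bound computable from the trained weights alone, and it explains why the trace and product of eigenvalues (studied in Sections \ref{sec:evals}--\ref{sec:predictions}) are natural dataset-independent predictors of generalization failure.

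The hardest step will be uniform control over the $O(\|x_{test}-x_{train}\|^2)$ Taylor remainder across the dataset, since without a priori bounds on the Hessian of $D_{model}\circ E_{model}$ one cannot rule out a curvature regime in which this quadratic term dominates the linear ``distance from projection'' term. A secondary difficulty is that the asymmetry in the conjecture --- test MSE exceeds training MSE --- is a statistical statement about populations and does not follow from any pointwise linear bound alone; it would require a separate argument that training drives $\varepsilon$ to a near-zero locus on $\cD_{train}$ in a way that damps the $\|J_\cI-P\|_F$ contribution there while leaving it active on $\cD_{test}$. Given these obstructions I would expect the rigorous output to be a one-sided Theorem \ref{res:bound}-style inequality rather than a full proof of the conjecture, with the conjecture itself justified only in tandem with the empirical evidence of Section \ref{sec:predictions}.
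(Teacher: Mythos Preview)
The statement you were asked to address is a \emph{conjecture}, and the paper does not prove it; rather, the paper offers a partial quantitative bound (Theorem \ref{res:bound}) together with empirical support (Section \ref{sec:predictions}) as evidence toward it. Your proposal correctly anticipates exactly this: you arrive at a first-order Taylor expansion identical to Lemma \ref{res:firstorder}, you pivot from the unknown orthogonal projector onto $T_xM_\cD$ to a computable proxy built from the eigenvectors of $J_\cI$ (the paper's oblique projection $P_{J_\cI}$ of Definition \ref{dfn:oblique}), and you conclude that the rigorous output will be a one-sided Theorem \ref{res:bound}-style inequality supplemented by empirics --- which is precisely what the paper delivers.

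The only substantive difference in route is that the paper decomposes $y-x$ directly in the eigenbasis of $J_\cI$ (Lemma \ref{res:decompose}) rather than first splitting tangentially/normally to $M_\cD$ and then replacing $P$ by an eigenvector proxy; this lets the paper avoid ever invoking the unknown tangent space, at the cost of needing the near-orthogonality hypothesis on the eigenvectors from the outset. Your detour through $T_xM_\cD$ is more geometrically motivated but ultimately lands in the same place, and your identification of the two genuine obstructions (Hessian control on the Taylor remainder, and the statistical asymmetry between train and test populations) is accurate --- the paper does not resolve either, which is why the statement remains a conjecture.
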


At times in this paper, we study the effect of the two parts of the autoencoder on the latent space. That is, we study the effect of $E_{model} \circ D_{model}$ locally at the point $ y = E_{model}(x) \in \cL$. This is \emph{not} the map defined by the autoencoder. Rather, it is the map induced by trained autoencoder on the latent space $\cL$ by reversing the order of composition. However, as we show in Section \ref{sec:forms}, the geometry of this map is much easier to interpret. Therefore, we introduce a new Jacobian matrix.

\begin{dfn}
For $ y = E_{model}(x)$, define $J_\cL(y)$ to be the Jacobian matrix of the map $E_{model} \circ D_{model}(y)$. 
\end{dfn}

A natural way to consider how far a function is from a projection, locally, is to consider its Jacobian matrix.

We may now ask how far $J_\cI(x)$ is from a projection, and how far $J_\cL(x)$ is from the identity matrix $\id_d$. First we define the Frobenius norm of a matrix.

\begin{dfn}
For a matrix $A$, let \[
\|A\|_F^2:=\sum_{i=1}^m\sum_{j=1}^n|a_{ij}|^2
\] indicate the Frobenius (or Hilbert-Schmidt) norm of the matrix $A$.  

This gives us a notion of a distance between matrices, the Frobenius-norm distance, which is defined as $d_F(A,B):=\|A-B\|_F$
\label{dfn:FrobNorm}\end{dfn}

In Section \ref{sec:Frobbound} we show that the Frobenius distance of the matrix $J_\cI$ to a certain associated projection bounds the MSE of the autoencoder, as long as $J_\cI$ has eigenvectors which are close enough to orthogonal. 

We also consider the eigenvalues and eigenvectors of $J_\cL(y)$ and $J_\cI(x)$. In particular, in Section \ref{sec:forms} we discuss the eigenvalues of the Jacobians $J_\cI(x)$ and  $J_\cL(y)$ as another measure of the how far these matrices are from being projections onto the appropriate tangent spaces. 

\subsection{Frobenius Norm and MSE bounds\label{sec:Frobbound}}

In this Section, we show that the distance from the matrix $J_\cI(x)$ to a certain associated projection, as defined by the Frobenius norm, gives an upper bound for the MSE for the autoencoder, as long as the nontrivial eigenvectors are sufficiently close to being orthogonal. We begin with a few observations about the reconstruction error.

First we note that, given a training point $x \in \cD_{train}$ and a test point $y \in \cD_{test}$, we can write the difference in reconstruction errors in terms of the first order Taylor series of $D_{model} \circ E_{model}$ around $x$. 

\begin{lem}\label{res:firstorder}
Given a training point $x \in \cD_{train}$ and a test point $y \in \cD_{test}$, 
\[ \varepsilon(y)  - \varepsilon(x) \approx  x -y + J_\cI(x) (y-x) \;.\]
\end{lem}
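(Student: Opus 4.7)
The proof is essentially a direct first-order Taylor expansion, so the plan is short and the main work is bookkeeping the signs carefully.

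First I would unfold the definition of the reconstruction error vector, writing $\varepsilon(x) = D_{model}\circ E_{model}(x) - x$ as in equation \eqref{eq:AE} (this is the sign convention consistent with the claimed formula). Then subtracting gives the exact identity
\[
\varepsilon(y) - \varepsilon(x) \;=\; (x - y) + \bigl(D_{model}\circ E_{model}(y) - D_{model}\circ E_{model}(x)\bigr),
\]
with no approximation yet; this is just arithmetic.

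Next I would invoke the smoothness of $D_{model}\circ E_{model}$ (which holds whenever the activation functions $R_i$ are differentiable, and almost everywhere for piecewise-linear activations such as ReLU). Applying a first-order Taylor expansion of $D_{model}\circ E_{model}$ around the training point $x$, evaluated at $y$,
\[
D_{model}\circ E_{model}(y) \;=\; D_{model}\circ E_{model}(x) + J_\cI(x)(y - x) + O\!\bigl(\|y-x\|_2^2\bigr),
\]
where $J_\cI(x)$ is the Jacobian of $D_{model}\circ E_{model}$ at $x$ from Definition \ref{dfn:bigJac}. Substituting this into the previous display and dropping the second-order remainder gives exactly
\[
\varepsilon(y) - \varepsilon(x) \;\approx\; (x - y) + J_\cI(x)(y - x),
\]
which is the claimed statement.

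There is no real obstacle: the only subtlety is that the symbol $\approx$ must be interpreted as equality up to the quadratic Taylor remainder, so the approximation is good precisely when $y$ is close to $x$ in the input space. For activation functions that fail to be differentiable on a measure-zero set (e.g.\ ReLU), one should also note that the statement should be read away from the kink locus of the network, where $J_\cI(x)$ is well defined.
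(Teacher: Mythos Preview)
Your proof is correct and follows essentially the same route as the paper: a first-order Taylor expansion of $D_{model}\circ E_{model}$ at $x$, combined with the definition of $\varepsilon$ (using the sign convention of equation~\eqref{eq:AE}, as you rightly flag) and a straightforward subtraction. Your additional remarks on the quadratic remainder and the ReLU non-differentiability locus make the argument slightly more careful than the paper's own version, but the substance is identical.
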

\begin{proof}
Since $y$ is close to $x$, to first order, 
\[ D_{model} \circ E_{model}(y) \approx D_{model} \circ E_{model}(x) + J_\cI(x) (y-x) \;.\] For notational convenience, call $x_{rec} = D_{model} \circ E_{model}(x)$  and $y_{rec} = D_{model} \circ E_{model}(y)$ the reconstruction of the points $x$ and $y$. Subtracting $x$ and $y$ from both sides, we see \[ y_{rec}  - x - y\approx x_{rec} - x -y + J_\cI(x) (y-x) \;\] which we can rewrite \[ \varepsilon(y)  - \varepsilon(x) \approx  x -y + J_\cI(x) (y-x) \;.\]
\end{proof}

We may use this to decompose the MSE of the test point $y$ into three components, each coming either from a different part of the autoencoder or Conjecture \ref{conj:datamanifold}. 

\begin{lem}\label{res:decompose}
Given a training point $x \in \cD_{train}$ and a test point $y \in \cD_{test}$ one may decompose $\varepsilon(y)$ into three parts,  
one coming from the distortion of caused by the autoencoder, one that comes from the projection map $\pi_\cI$ in display {\eqref{diag:AE}}, and one coming from the  MSE of the nearby training point.
\end{lem}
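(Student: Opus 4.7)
My plan is to begin with the first-order approximation from Lemma \ref{res:firstorder}, which I would rearrange as
\[ \varepsilon(y) \approx \varepsilon(x) + \bigl(J_\cI(x) - I\bigr)(y-x), \]
and then split the operator $J_\cI(x) - I$ through an idealized ``target'' linear map. Concretely, let $P_x$ denote the orthogonal projection onto the tangent space $T_x M_\cD$ of the data manifold at $x$. Under Conjecture \ref{conj:datamanifold} and the commutative diagram \eqref{diag:AE}, $P_x$ is the map that $D_{model}\circ E_{model}$ would realize at first order near $x$ if the autoencoder perfectly implemented $\pi_\cI\circ\phi_\cU$ followed by its inverse on $M_\cD$: normal directions would be projected away, and tangent directions would be preserved. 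Adding and subtracting $P_x(y-x)$ then gives the three-term decomposition
\[ \varepsilon(y) \approx \varepsilon(x) + \bigl(J_\cI(x) - P_x\bigr)(y-x) + \bigl(P_x - I\bigr)(y-x). \]

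Next I would identify each summand with one of the three sources named in the statement. The first, $\varepsilon(x)$, is by Definition \ref{dfn:error} the reconstruction error at the nearby training point. The second, $(J_\cI(x) - P_x)(y-x)$, measures how far the Jacobian of the trained autoencoder at $x$ departs from the ideal tangent-space projection, and therefore captures the distortion that the learned model introduces beyond the ideal behavior predicted by Conjecture \ref{conj:datamanifold}. The third, $(P_x - I)(y-x) = -(I - P_x)(y-x)$, is precisely the component of $y-x$ normal to $T_x M_\cD$, i.e.\ the piece of the displacement that the idealized encoder $\pi_\cI$ in diagram \eqref{diag:AE} is designed to annihilate; this is the error attributable to the projection onto the latent space itself, and would be present even for a perfect autoencoder because $M_\cD$ has strictly smaller dimension than $\cI$.

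The main obstacle I expect is interpretive rather than computational. Because $M_\cD$ is unknown, the projection $P_x$ is not something one can evaluate in practice; the content of the lemma is therefore a structural statement about the origins of $\varepsilon(y)$, not a numerical recipe for splitting it. The care required lies in arguing, using only the idealized picture in diagram \eqref{diag:AE} together with Conjecture \ref{conj:datamanifold}, that $P_x$ is the correct pivot so that each of the three summands cleanly lines up with exactly one of the three sources enumerated in the lemma. Once that interpretive step is in place, the algebra reduces to a single add-and-subtract applied to Lemma \ref{res:firstorder}, with no further estimation needed.
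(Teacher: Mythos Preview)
Your decomposition is correct, but it is genuinely different from the paper's. The paper does not pivot on the orthogonal projection $P_x$ onto $T_xM_\cD$; instead it expands $y-x$ in the eigenbasis $v_1,\dots,v_N$ of $J_\cI(x)$ and obtains
\[
\varepsilon(y)\approx\underbrace{\sum_{i=1}^d(\lambda_i-1)c_iv_i}_{(A)}-\underbrace{\sum_{i=d+1}^N c_iv_i}_{(B)}+\varepsilon(x),
\]
which is exactly your add-and-subtract but with the pivot taken to be the \emph{oblique} projection $P_{J_\cI}$ onto the span of the nonzero eigenvectors of $J_\cI$ (so $(A)=(J_\cI-P_{J_\cI})(y-x)$ and $-(B)=(P_{J_\cI}-I)(y-x)$). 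The difference matters downstream: the paper's pivot is computable from the trained network alone and makes term $(A)$ depend only on the eigenvalues $\lambda_i-1$, which is precisely what is bounded by $\|J_\cI-P_{J_\cI}\|_F^2$ in Theorem~\ref{res:bound}. Your pivot $P_x$ is conceptually cleaner relative to diagram~\eqref{diag:AE}, but it is unknown (as you note) and your ``distortion'' term $(J_\cI-P_x)(y-x)$ mixes two effects---eigenvalues of $J_\cI$ differing from $1$ \emph{and} misalignment between $\mathrm{im}\,J_\cI$ and $T_xM_\cD$---so it would not feed into the Frobenius bound without further work.
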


\begin{proof}

First, we decompose $x-y$ into the eigenvectors of $J_\cI(y -x)$,
\[ y-x=\sum_{i=1}^Nc_i v_i  \label{second} \; , \]
so that 
\[ J_\cI(x)(y-x)=\sum_{i=1}^N\lambda_ic_i v_i\; . \label{third} \] 
Here, the $\lambda_i$ are the eigenvalues of $J_\cI(y -x)$. Since $\dim(\cL) = d$,  $\lambda_i=0$ for all $i>d$.

Putting these together with Lemma \ref{res:firstorder}, we get 

\[ \varepsilon(y)  - \varepsilon(x) \approx  \sum_{i=1}^N\lambda_ic_iv_i-\sum_{i=1}^Nc_iv_i =\sum_{i=1}^N(\lambda_i-1)c_iv_i =  \sum_{i=1}^d(\lambda_i-1)c_iv_i - \sum_{i=d+1}^Nc_iv_i\;.\]
 
Since the $\lambda_i$ are zero for $i>d$, we rewrite this approximation as
\[
\varepsilon(y) \approx\underbrace{\sum_{i=1}^d(\lambda_i-1)c_iv_i}_{(A)}-\underbrace{\sum_{i=d+1}^N c_iv_i}_{(B)} +  \varepsilon(x) .
\]

Thus we have three separate sources of error: the error arising from the distortion on the image of the autoencoder, $(A)$; the distance from how far $x$ is from the image of the autoencoder, $(B)$; and the original MSE on the training point $x$, $\varepsilon(x)$. 

\end{proof}

Note that as the latent dimension increases, in particular as it surpasses the dimension of the data manifold, $M_\cD$, the term (B) should also become small. In this case, the difference in the reconstruction error vectors is approximated by \bas \varepsilon(y) - \varepsilon(x)\approx  \sum_{i=1}^d(\lambda_i-1)c_iv_i \;. \eas 
 
We are now ready to give a bound for the difference in the test and training data point reconstruction errors in terms of the Frobenius norm of $J_\cI$. We begin with a few definitions. 

\begin{dfn}For $A$ an $m \times n$ matrix, the \emph{singular values} are the square roots of the nonzero eigenvalues of the matrix $AA^T$. \label{dfn:singular} \end{dfn} 

Recall that the Frobenius norm of a matrix can equivalently be defined by the equation
\[
\|A\|_F^2=\sum_{i=1}^{\min{m,n}}\sigma_i(A)^2
\]
where $\sigma_i(A)$ denotes the $i^{\text{th}}$ singular value of $A$.  

For an $n\times n$ matrix $A$, the following inequality holds for any $p>0$ and any $1\leq k\leq n$, where $|\lambda_1|\geq|\lambda_2|\geq\dots\geq|\lambda_n|$ and $\sigma_1\geq\sigma_2\geq\dots\geq\sigma_n$:
\begin{equation}\label{eq:Weyl}
\sum_{i=1}^k|\lambda_i|^p\leq\sum_{i=1}^k\sigma_i^p.
\end{equation}
This is a consequence of Weyl's Majorization Theorem.

\begin{dfn} Let $A$ be a square matrix which is diagonalizable over $\C$. Let $v_1,\dots,v_k$ be the eigenvectors corresponding to nonzero eigenvalues and $v_{k+1},\dots,v_N$ those corresponding to the eigenvalue $0$ (that is, $Av_{i}=0$ for $i=k+1,\dots,N$).  We define an oblique projection, $P_A$, by
\[
\left\{
\begin{array}{rl}
P_Av_i=v_i & 1\leq i\leq k\\
P_Av_i=0 & k+1\leq i\leq N.
\end{array}
\right.
\] \label{dfn:oblique}\end{dfn}

Like an orthogonal projection, the eigenvalues of $P_A$ are either $0$ or $1$. However, unlike an orthogonal projection, the oblique projection shares eigenvectors with $A$.  

We also remark that because almost all matrices are diagonalizable over $\C$, the Jacobian of an autoencoder is diagonalizable over $\C$ with probability $1$.  In the following theorem and the remainder of this paper, we thus consider the input and reconstruction spaces to be isometrically embedded in $\C^N$, which allows us to diagonalize $J_\cI$ (without changing the MSE).

We also recall that for two vectors in $\C^N$, $v=\sum_{i=1}^Nv_ie_i$ and $w=\sum_{i=1}^Nw_ie_i$, where $e_i$ is the standard basis of $\C^N$, their inner product is defined by
\[
\langle v,w\rangle=\sum_{i=1}^Nv_i\overline{w_i}.
\]
In particular, we observe that $\langle av,bw\rangle=a\overline{b}\langle v,w\rangle$ for any complex numbers $a,b$.  As usual, we have that $\|v\|^2=\langle v,v\rangle$.

Now we are ready to state the main theorem of this Section.

\begin{thm}\label{res:bound}
Let $J_\cI$ be the Jacobian of an autoencoder with a $d$ dimensional latent space. Let $v_1,\dots,v_d$ be a set of unit eigenvectors for the $d$ largest eigenvalues.  Suppose that $|\langle v_i, v_j\rangle|\leq\delta \leq \frac{1}{2(d-1)}$ for all $i \neq j$. Then if Conjecture \ref{conj:datamanifold} holds, for large enough $d$, the quantity $3\|J_\cI-P_{J_{\cI}}\|_F^2\|y-x\|^2$ bounds the magnitude of the difference in reconstruction error vectors where $\|y-x\|^2$ denotes the square of the distance between the test and training point.
\end{thm}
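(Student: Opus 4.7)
The plan is to start from Lemma~\ref{res:decompose}, which decomposes $\varepsilon(y)-\varepsilon(x)$ using the expansion coefficients $c_i$ of $y-x$ in the eigenbasis of $J_\cI$. Under Conjecture~\ref{conj:datamanifold}, once $d$ is at least the dimension of $M_\cD$, the nontrivial eigenvectors $v_1,\dots,v_d$ of $J_\cI$ should span a subspace containing $T_xM_\cD$, so the coefficients $c_i$ for $i>d$ (the ``term (B)'' in that lemma) become negligible. This reduces the problem to bounding $\bigl\|\sum_{i=1}^d(\lambda_i-1)c_iv_i\bigr\|^2$ in terms of $\|J_\cI-P_{J_\cI}\|_F^2\|y-x\|^2$.

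Next, I would expand both $\bigl\|\sum_{i=1}^d(\lambda_i-1)c_iv_i\bigr\|^2$ and $\|y-x\|^2=\bigl\|\sum_{i=1}^d c_iv_i\bigr\|^2$ bilinearly, separating the diagonal ($i=j$) and off-diagonal ($i\neq j$) contributions. The diagonal terms are clean because $\|v_i\|=1$, while each off-diagonal inner product is controlled by $|\langle v_i,v_j\rangle|\leq\delta$. Applying AM--GM to each cross product $|\lambda_i-1||c_i|\cdot|\lambda_j-1||c_j|$ (and to $|c_i||c_j|$) trades a product for a half-sum of squares; since each index has $d-1$ partners, this produces a correction of size $\delta(d-1)$. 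The hypothesis $\delta\leq\tfrac{1}{2(d-1)}$ then keeps this correction below $\tfrac{1}{2}$, yielding the two key inequalities
\[
\Bigl\|\sum_{i=1}^d(\lambda_i-1)c_iv_i\Bigr\|^2\leq \tfrac{3}{2}\sum_{i=1}^d|\lambda_i-1|^2|c_i|^2,\qquad \|y-x\|^2\geq\tfrac{1}{2}\sum_{i=1}^d|c_i|^2.
\]

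To convert the first inequality into the desired Frobenius-norm bound, I would use that $P_{J_\cI}$ shares its eigenvectors with $J_\cI$ by Definition~\ref{dfn:oblique}, so the nonzero eigenvalues of $J_\cI-P_{J_\cI}$ are precisely $\lambda_1-1,\dots,\lambda_d-1$. Weyl's majorization inequality~\eqref{eq:Weyl} with $p=2$ then yields $\sum_{i=1}^d|\lambda_i-1|^2\leq\|J_\cI-P_{J_\cI}\|_F^2$. Combining this with the crude bound $\max_i|c_i|^2\leq\sum_i|c_i|^2\leq 2\|y-x\|^2$ coming from the second inequality above gives
\[
\sum_{i=1}^d|\lambda_i-1|^2|c_i|^2\leq\Bigl(\sum_{i=1}^d|\lambda_i-1|^2\Bigr)\max_i|c_i|^2\leq 2\|J_\cI-P_{J_\cI}\|_F^2\|y-x\|^2,
\]
and plugging this into the $\tfrac{3}{2}$-estimate produces exactly $3\|J_\cI-P_{J_\cI}\|_F^2\|y-x\|^2$.

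The main obstacle I expect is the careful bookkeeping required to make $\delta$ pull in the same direction in both the upper bound on $\|\varepsilon(y)-\varepsilon(x)\|^2$ and the lower bound on $\|y-x\|^2$, so that the specific hypothesis $\delta\leq\frac{1}{2(d-1)}$ produces the constant $3$ (as $\tfrac32\cdot 2$) rather than anything worse; if $\delta=0$ the same argument sharpens the constant to $1$. A secondary concern is verifying that ``for large enough $d$'' in the statement is exactly what is needed to suppress the (B) term from Lemma~\ref{res:decompose}, and that the first-order Taylor approximation from Lemma~\ref{res:firstorder} does not introduce higher-order corrections that degrade the constant.
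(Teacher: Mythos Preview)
Your proposal is correct and follows essentially the same route as the paper's proof: start from Lemma~\ref{res:decompose}, drop term~(B) via Conjecture~\ref{conj:datamanifold} for large $d$, expand bilinearly and control off-diagonal terms by $|ab|\leq\tfrac{|a|^2+|b|^2}{2}$ together with $\delta\leq\tfrac{1}{2(d-1)}$ to get the $\tfrac{3}{2}$ upper bound and $\tfrac{1}{2}$ lower bound, then bound $\max_i|c_i|^2\leq 2\|y-x\|^2$ and invoke Weyl's majorization~\eqref{eq:Weyl} to pass to $\|J_\cI-P_{J_\cI}\|_F^2$. The only cosmetic difference is that the paper phrases the $\max_i|c_i|^2$ step via an auxiliary constant $C$, but the substance is identical.
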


\begin{proof}

From Lemma \ref{res:decompose}, the MSE on the test point is given by 
\[ 
||\varepsilon(y)-\varepsilon(x)||_2^2 \approx ||\sum_{i=1}^d(\lambda_i-1)c_iv_i - \sum_{i=d+1}^Dc_iv_i||_2^2  \;.\]

By Conjecture \ref{conj:datamanifold}, we expect our dataset to lie close to a manifold of small dimension. Therefore, we expect the error term $(B)$ from Lemma \ref{res:firstorder} to be small if the latent dimension is larger than the dimension of this manifold.  Hence, we expect that \ba ||\varepsilon(y)-\varepsilon(x)||_2^2 \approx ||\sum_{i=1}^d(\lambda_i-1)c_iv_i - \sum_{i=d+1}^Dc_iv_i||_2^2   \\ \approx  \sum_{i,j=1}^d \langle v_i, v_j\rangle c_i\overline{c_j}(\lambda_i-1)\overline{(\lambda_j-1)}  \;. \label{eq:mainapprox}\ea 

Because we are working over $\C$, some of our eigenvalues and eigenvectors may be complex, so some of the constants $c_i$ may also be complex.  Hence, we conjugate these when we factor them out of the second term of the inner product.

We now consider the off-diagonal terms of this sum, bounding these by the assumption that $|\langle v_i, v_j\rangle|\leq\frac{1}{2(d-1)}$ and the fact that $|ab|\leq \frac{|a|^2+|b|^2}{2}$, to conclude that
\[
\|\sum_{i\neq j}\langle v_i, v_j\rangle c_i\overline{c_j}(\lambda_i-1)\overline{(\lambda_j-1)}\|\leq\frac{1}{2(d-1)}\sum_{i\neq j}\frac{c_i^2|\lambda_i-1|^2+c_j^2|\lambda_j-1|^2}{2}.
\]
Now, each fixed $i$ appears in $2(d-1)$ pairs of the form $(i,j)$ where $i \neq j$.  Hence, when we group like terms together, we compute that
\[
\|\sum_{i\neq j}\langle v_i, v_j\rangle c_i\overline{c_j}(\lambda_i-1)\overline{(\lambda_j-1)}\|\leq\frac{1}{2}\sum_{i=1}^dc_i^2|\lambda_i-1|^2.
\]
Combining the off-diagonal terms with the diagonal terms, we have
\[
\|\sum_{i, j=1}^d\langle v_i, v_j\rangle c_i\overline{c_j}(\lambda_i-1)\overline{(\lambda_j-1)}\|\leq\frac{3}{2}\sum_{i=1}^dc_i^2|\lambda_i-1|^2.
\]

In particular, if we let $C=\max\{|c_i|^2:1\leq i\leq d\}$, we see that
\[
\sum_{i=1}^d |\lambda_i-1|^2c_i^2\leq C\sum_{i=1}^d|\lambda_i-1|^2.
\]

We bound $C$ in terms of $\|y-x\|^2$.  Recalling that $y-x=\sum_{i=1}^dc_iv_i+\sum_{i=d+1}^Nc_iv_i$, and that this latter term (analogous to term (B) in our computations above) should become negligible, we find that
\[
\|y-x\|^2\approx\|\sum_{i=1}^dc_iv_i\|^2=\sum_{i=1}^dc_i^2\|v_i\|^2+\sum_{i\neq j}c_i\overline{c_j}\langle v_i,v_j\rangle
\]
We can repeat the same arguments as above to bound the absolute value of the second, off-diagonal term by $\frac{1}{2}\sum_{i=1}^dc_i^2$, which means that
\[
\|y-x\|^2\geq\frac{1}{2}\sum_{i=1}^dc_i^2.
\]

In particular, we note that $C=\max\{|c_i|^2:1\leq i\leq d\}$ is certainly less than or equal to $\sum_{i=1}^dc_i^2$, and hence is bounded by $2\|y-x\|^2$.

Finally, we note that by the definition of $P_{J_\cI}$, the $\lambda_i-1$ are the eigenvalues of $J_\cI-P_{J_\cI}$.  Indeed, if $v_i$ is an eigenvector of $J_\cI$ corresponding to the eigenvalue $\lambda_i\neq0$, then $v_i$ is an eigenvector of $J_\cI-P_{J_\cI}$ corresponding to the eigenvalue $\lambda_i-1$.  Moreover, if $v_j$ is an eigenvector of $J_\cI$ for the eigenvalue $0$, then by the definition of $P_{J_\cI}$ it is an eigenvector of $J_\cI-P_{J_\cI}$ corresponding to the eigenvalue $0$. In particular, recalling equation (\ref{eq:Weyl}), with $p=2$ and $k=d$, we have that
\[
\sum_{i=1}^d|\lambda_i-1|^2\leq\sum_{i=1}^d\sigma_i(J_\cI-P_{J_\cI})^2\leq\|J_\cI-P_{J_\cI}\|_F^2.
\]
Here, we have used the fact that the non-zero eigenvalues are contained in ther first $d$ eigenvalues of $J_\cI-P_{J_\cI}$ ordered by magnitude, as the remainder are zero.

In particular, if the latent dimension is large enough that error term $(B)$ from Lemma \ref{res:firstorder} is small and term $(A)$ is the main contributor to error, we have by equation \eqref{eq:mainapprox}
\[
||\varepsilon(y)-\varepsilon(x)||_2^2\leq C^2(\delta(d-1)+1)\|J_\cI-P_{J_\cI}\|_F^2 \;.
\]
\end{proof}

In general, we note that the portion of the error so bounded will increase in the latent dimension.  The constant $C$ depends on the point $y$, so this bound becomes less useful the further we are from the training point $x$.

The Frobenius norm is a commonly used tool for regularizing neural networks, where it is commonly known as weight decay regularization, where it is used to guard against overfitting. See \cite{weightdecay} for a good discussion on the topic. However, unlike most neural networks, autoencoders have the advantage of having a square Jacobian matrix, $J_\cI(x)$, which we can compare to a projection map. We leave the refinement of Theorem \ref{res:bound} to future work, as it needs an ability to estimate the orthogonality of the eigenvectors $v_i$.  Furthermore, understanding the structure of the eigenvectors $v_i$ may give insight into the quality of the feature extraction aspect of an autoencoder. 

\subsection{Sums and products of eigenvalues \label{sec:forms}}

We see from Lemma \ref{res:firstorder} that the first order constraints on the autoencoder help us control the reconstruction error. It assures that the function $D_{model}\circ E_{model}$ is as close to the identity as possible on the test data points. The Jacobian matrix $J_\cI(x)$ is necessary to calculate the first order approximation of this autoencoder to the identity, when restricted to a neighborhood of $M_\cD$. 

More broadly, we are interested in how closely the encoder and decoder functions invert each other.  Even if $D_{model}\circ E_{model}(x)=x$, which says that these invert each other at the zeroth order at $x$, we can ask how close they are at higher orders.  To first order, this asks how close $J_\cI(x)$ is to being the identity.  Because this matrix factors through the latent space, it cannot be the identity on all of $T_x\cI$, as $\dim(\cL)<<\dim(\cI)$.  Hence, the most we could possibly ask is for $J_\cI(x)$ to act as the identity on its image.  In this case, the eigenvalues of $J_\cI(x)$ would be either $1$ or $0$, with $\lambda_1=\lambda_2=\dots=\lambda_d=1$, and the rest equal to zero.

In particular, we inspect the $d$ largest eigenvalues of $J_\cI(x)$ (noting that by construction, the $N -d$ smallest eigenvalues will be $0$), $\lambda_1,\lambda_2,\dots,\lambda_d$.  We propose a number metrics to compare these to $1$ in an averaged sense.  The product of these eigenvalues, $\prod_{i=1}^d\lambda_i$, measures how much $J_{\cI}(x)$ distorts the $d$ dimensional volume on its image, and thus has a somewhat geometric interpretation.  The sum of the eigenvalues, which is the trace $\Tr(J_\cI(x))$, is computationally easier to evaluate, although its geometric meaning is less easily described.  One could also consider other symmetric polynomials of the eigenvalues, which arise as the coefficients of the characteristic polynomial of $J_\cI(x)$.  However, these are less easily computed than the trace and lack the natural geometric interpretation that the product has. Therefore, for the rest of the document, we only focus on the sum and the product of the eigenvalues. 

One may look also at the eigenvalues of $J_\cL(y)$, with $y = E_{model}(x)$. The product of these $d$ eigenvalues gives the determinant of $J_\cL(y)$, which is the volume form on $T_y\cL$. This quantity has the computational advantage that the determinant is faster to calculate than the set of $d$ eigenvalues. It also has the conceptual advantage of giving a measurement of the magnitude of local distortion of the $\cL$ at the point $y = E_{model}(x)$. If $\det(J_\cL(y)) \neq 1$, then $E_{model}\circ D_{model}(x)$ distorts the tangent space $T_y\cL$. The further the quantity $\log(\det(J_\cL(y))$ is from $0$, the greater the distortion. In other words, instead of measuring the distortion of $T_xM_\cD$ for a hypothesized data manifold $M_\cD$, this measure concretely studies the distortion on the latent space. 

\begin{dfn}
Let $\vec{\lambda}_\cI$ and $\vec{\lambda}_\cL$ be the $d$ nonzero eigenvalues of $J_\cI$ and $J_\cL$ respectively, arranged in decreasing order by absolute value.  That is, if $\vec{\lambda}_\cI=\{\lambda_1,\lambda_2,\dots,\lambda_d\}$, then we have that $|\lambda_1|\geq|\lambda_2|\geq\dots\geq|\lambda_d|$. Similarly, we define $\omega_\cL(z) = \prod_{\lambda_i \in \vec{\lambda}_\cL(z)} \lambda_i$ and 
$\omega_\cI(x) = \prod_{\lambda_i \in \vec{\lambda}_\cI(x)} \lambda_i$ to be the products of these nonzero eigenvalues. 
\end{dfn}

 Note that $\omega_\cL(y)$ is the top form defined on $T_y\cL$ induced by the map $E_{model}\circ D_{model}(y)$, while $\omega_\cI(x)$ quantifies the extent to which the  map $D_{model}\circ E_{model}$ distorts the volume of the $d$ dimensional span of its Jacobian's nontrivial eigenvectors. 

Furthermore, we note that these two sets of eigenvalues are related. In particular, if $D_{model}\circ E_{model}(x)=x$ then, by the chain rule, the eigenvalues of $J_\cL(z)$ are the same as those of $J_\cI(x)$, where $y=E_{model}(x)$.

\begin{prop} \label{res:evalssame}
If $D_{model}\circ E_{model}(x)=x$, then for $y = E_{model}(x)$, the $d$ nonzero eigenvalues of $J_\cI(x)$ are the same as the eigenvalues of $J_\cL(y)$. 
\end{prop}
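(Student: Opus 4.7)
The plan is to reduce the proposition to the classical linear-algebra fact that for rectangular matrices $A$ and $B$, the products $AB$ and $BA$ share the same nonzero eigenvalues with algebraic multiplicities. The hypothesis $D_{model}\circ E_{model}(x)=x$ will be used exactly to guarantee that the two Jacobians in question are in fact of the form $BA$ and $AB$ for the \emph{same} pair of matrices.

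First I would set $A=\D E_{model}(x)$, a $d\times N$ matrix, and $B=\D D_{model}(y)$ with $y=E_{model}(x)$, an $N\times d$ matrix. Applying the chain rule to $D_{model}\circ E_{model}$ gives
\[
J_\cI(x)=\D D_{model}(E_{model}(x))\,\D E_{model}(x)=BA,
\]
an $N\times N$ matrix of rank at most $d$. Applying the chain rule to $E_{model}\circ D_{model}$ yields
\[
J_\cL(y)=\D E_{model}(D_{model}(y))\,\D D_{model}(y).
\]
Here the hypothesis enters: since $D_{model}(y)=D_{model}\circ E_{model}(x)=x$, the first factor is $\D E_{model}(x)=A$, so $J_\cL(y)=AB$, a $d\times d$ matrix. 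Without the hypothesis, the outer Jacobian of $E_{model}$ would be evaluated at $D_{model}(y)\neq x$, giving a different matrix, and the identification would fail.

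The second step is the matching of spectra. For any $v\in\C^d$ with $ABv=\lambda v$ and $\lambda\neq 0$, the vector $Bv$ is nonzero (else $\lambda v=AB v=0$), and $(BA)(Bv)=B(ABv)=\lambda(Bv)$, so $Bv$ is an eigenvector of $BA$ for the same eigenvalue; the map $v\mapsto Bv$ is injective on each nonzero eigenspace, and the reverse direction is symmetric, so the algebraic multiplicities on the nonzero part of the spectrum agree. Since $J_\cL=AB$ is $d\times d$ and thus has exactly $d$ eigenvalues counted with multiplicity, and $J_\cI=BA$ has exactly $d$ nonzero eigenvalues by the proposition's hypothesis (the remaining $N-d$ being zero by rank), the two lists coincide as multisets.

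There is no real obstacle: the only point that requires care is checking that the hypothesis is invoked precisely at the evaluation point of $\D E_{model}$ inside $J_\cL(y)$, and that the $AB$-versus-$BA$ argument is quoted with algebraic (not merely geometric) multiplicity so that it matches the counting in the statement.
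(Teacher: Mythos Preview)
Your proposal is correct and takes essentially the same approach as the paper: both arguments use the chain rule together with the hypothesis $D_{model}(y)=x$ to write $J_\cI(x)=BA$ and $J_\cL(y)=AB$ for $A=\D E_{model}(x)$, $B=\D D_{model}(y)$, and then transfer eigenvectors between $AB$ and $BA$ (the paper sends an eigenvector $w$ of $J_\cI$ to $v=Aw$, which is the same mechanism you invoke in the reverse direction). Your write-up is in fact a bit more careful than the paper's, which only exhibits one direction and does not address multiplicities; your remark that the injection on eigenspaces must be upgraded to algebraic multiplicity (e.g.\ via $\det(\lambda I_N-BA)=\lambda^{N-d}\det(\lambda I_d-AB)$) is the right caveat.
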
 
\begin{proof}
This is a result of the chain rule.

Since $D_{model}\circ E_{model}(x) = x$, let $w$ be an eigenvector of $J_\cI(x)$ with eigenvalue $\lambda$. Then \bas \D D_{model}|_z\circ \D E_{model}|_x(w) = \lambda w \;, \eas where $y = E_{model}(x)$ as above. If $v = \D E_{model}|_x(w)$ then \bas  \D E_{model}|_x \circ \D D_{model}|_z (v) =  \D E_{model}|_x \circ \D D_{model} \circ \D E_{model}|_x (w) =  \D E_{model}|_x  (\lambda w) = \lambda(v)\;. \eas
\end{proof}

We remark that this result continues to hold if $J_\cI$ has rank less than $d$, and the proof method is similar (requiring merely that we handle the zero eigenvectors separately).  

Unfortunately, the hypothesis of Proposition \ref{res:evalssame} does not generally hold, because an autoencoder may not achieve zero error on the training points.  
However, empirically, we see that these eigenvalues are not far from each other. We observe in Figure \ref{fig:eigennorms}\footnote{Here and in all figures, we only present data from a particular training epoch and seed. The data from other epochs and seed are similar and omitted for visual simplicity.} that quantities $\frac{||\vec{\lambda}_\cI- \vec{\lambda}_\cL||_1}{\dim(\cL)}$, and $\frac{||\vec{\lambda}_\cI- \vec{\lambda}_\cL||_2}{\dim(\cL)}$ i.e. the $L_1$ and $L_2$ norms of the differences in between the ordered eigenvalues over the dimension of the latent space is small, and that the standard deviations decrease with latent dimension.

\begin{figure} 
\begin{subfigure}[t]{0.3\textwidth}
\includegraphics[width=\textwidth]{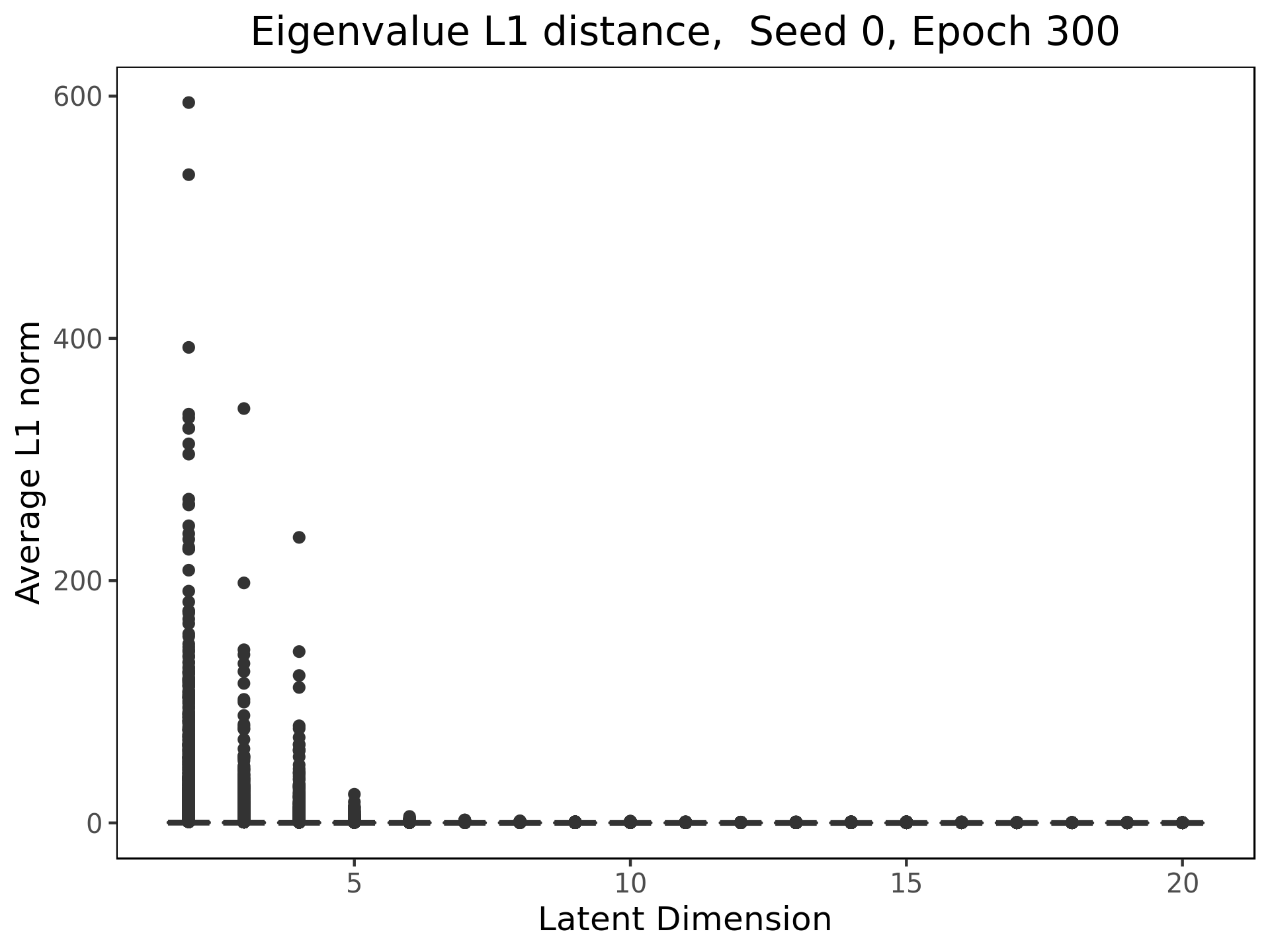}
\end{subfigure}
\begin{subfigure}[t]{0.3\textwidth}
\includegraphics[width=\textwidth]{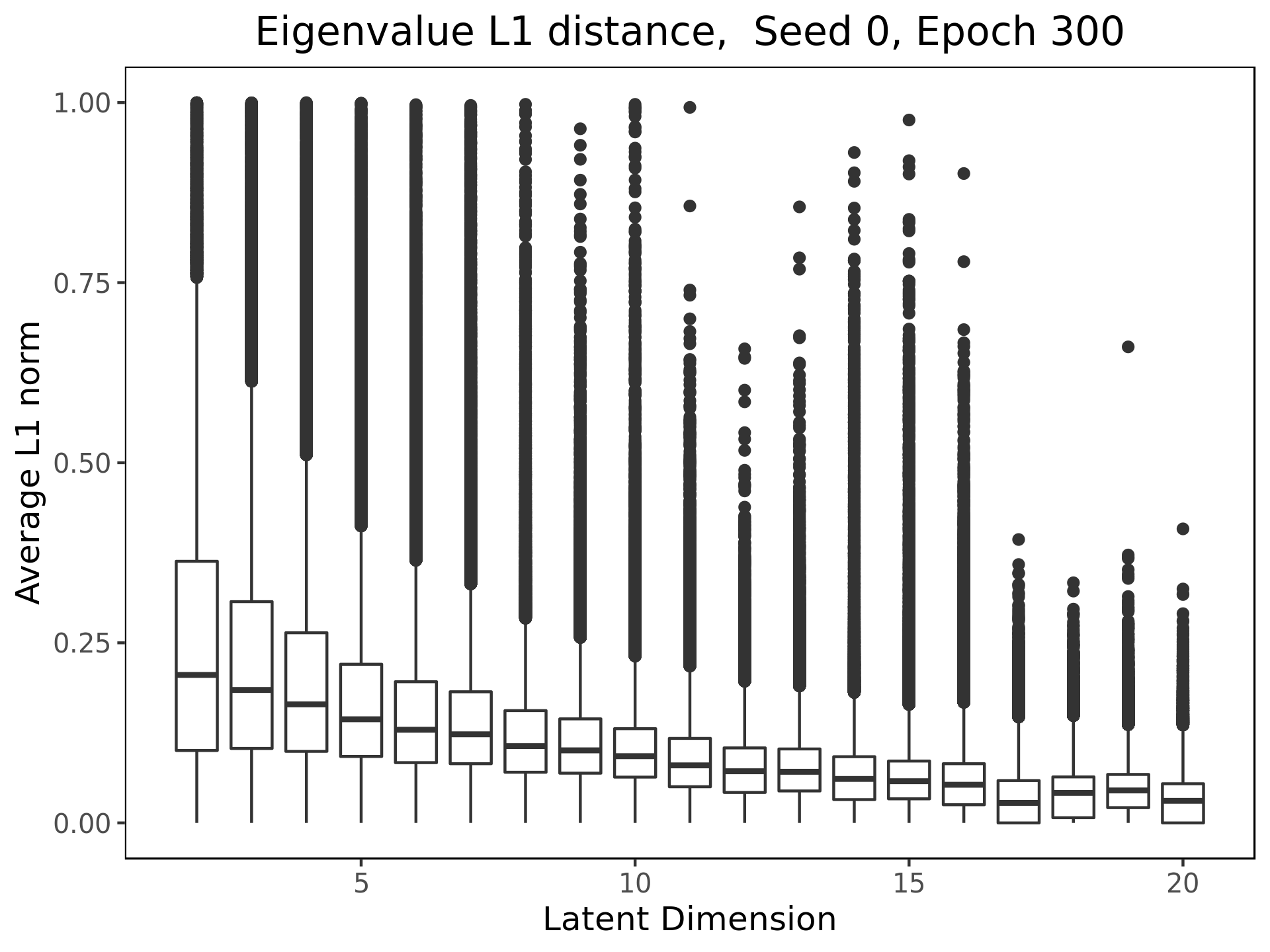}
\end{subfigure}
\newline 
\begin{subfigure}[t]{0.3\textwidth}
\includegraphics[width=\textwidth]{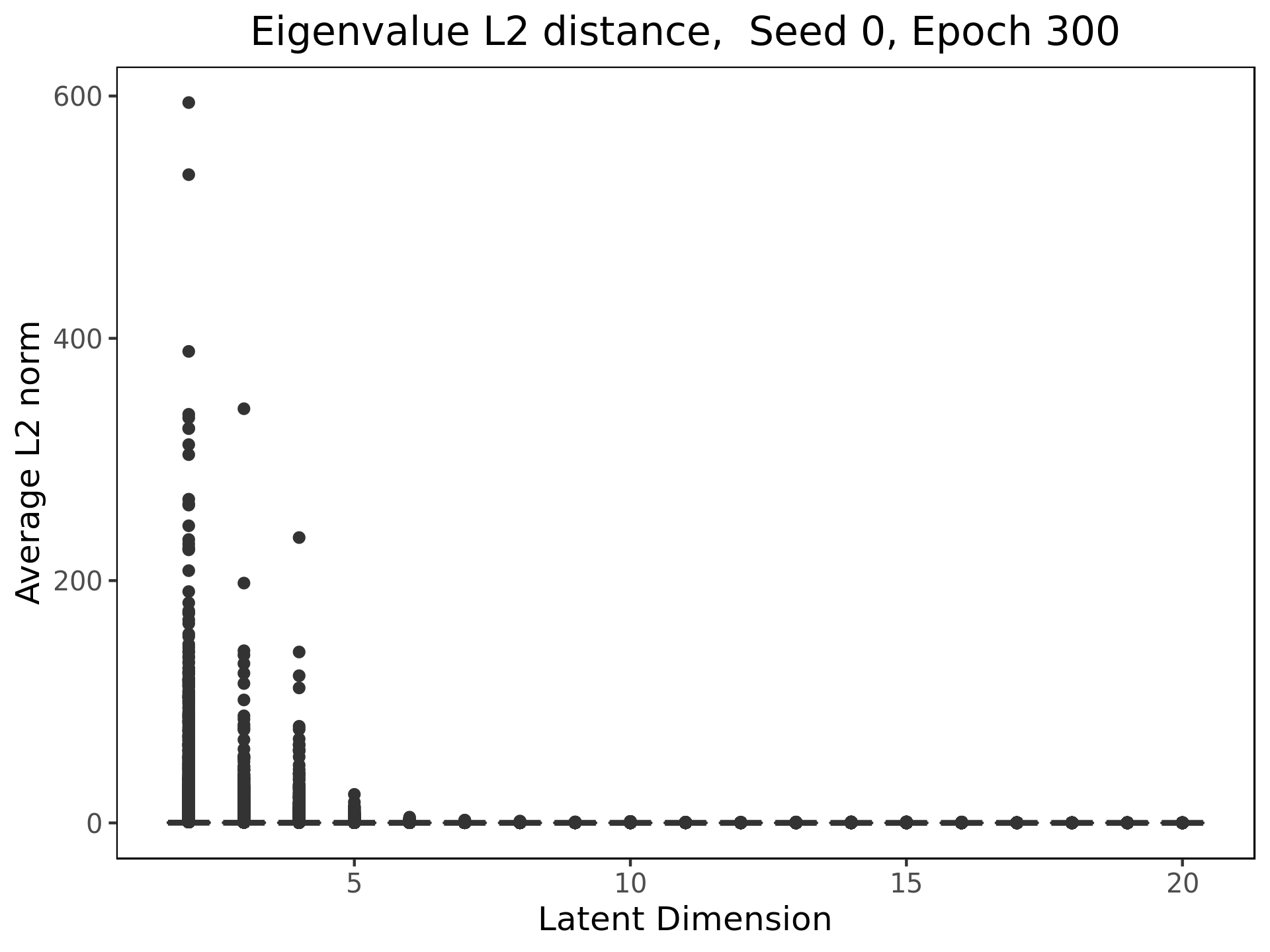}
\end{subfigure}
\begin{subfigure}[t]{0.3\textwidth}
\includegraphics[width=\textwidth]{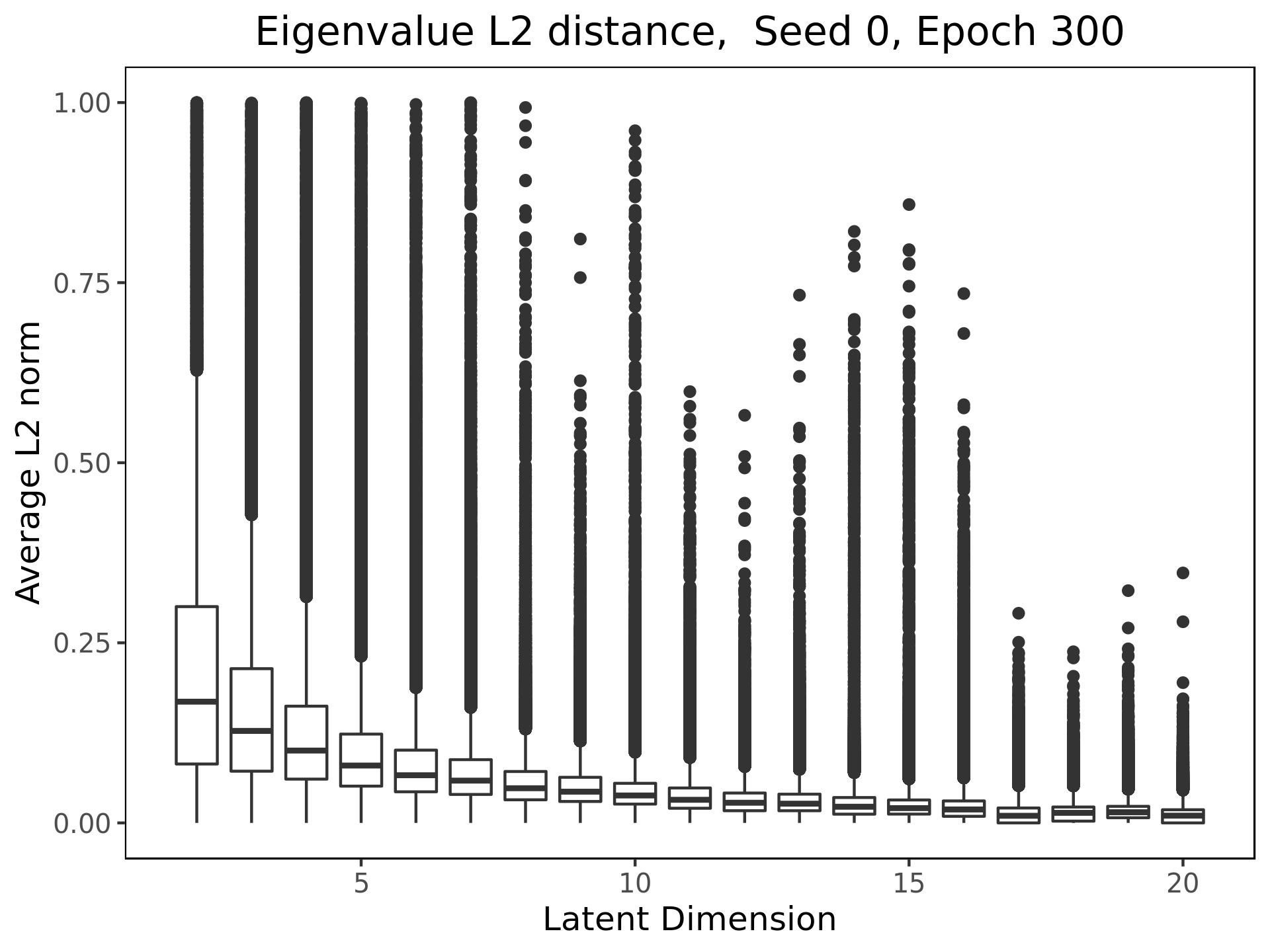}
\end{subfigure}

\caption{The ratio of the $L_1$ norm of the difference in eigenvalues to the latent dimension (top) and the ratio of the $L_2$ norm of the difference in eigenvalues to the latent dimension (bottom) is small and decreases both in median and standard deviation as latent dimension increases. } \label{fig:eigennorms}
\end{figure}

\section{Experimental Results \label{sec:results}}

In this Section, we present the empirical findings on the eigenvalues of the autoencoder. We also show that not only are $\Tr(J_\cL)$, $\Tr(J_\cI)$, $\omega_\cI$ and $\omega_\cL$ good predictors of the reconstruction error, in expectation, as these quantities increase, the expected reconstruction error for test points is higher than the expected reconstruction error for training points. In other words, these quantities are good predictors of where a network will fail to generalize. 

\subsection{Architectures \label{sec:setup}}
We trained a large set of autoencoder structures across many different seeds on the MNIST dataset to test our conjectures. This Section describes the specifics of the various architectures.

The dimension of the input and reconstruction space in this case is $\dim(\cI) = \dim(\cR) = 784$, since MNIST images are 28x28 and the images are flattened to a vector before running the autoencoder. We developed a set of 3 layer, 4 layer and 5 layer autoencoders, each with latent dimension varying from $2$ to $20$, and trained each of the resulting set of 57 different autoencoders across 10 different random seeds. All experiments performed for this paper are on fully connected. For all autoencoders, there are ReLU activation functions between each layer of the encoder and decoder, and there is a $\tanh$ activation function after the final layer of decoder to normalize the outputs to match the inputs. All autoencoders are trained using random batches of size 128 for 500 epochs, and parameters are saved every 100 epochs to measure changes in behavior through training. The Adam optimizer was used with default parameters for learning rate and epsilon, and no weight decay was used.  The inputs were normalized to be in the range $[-1, 1]$.

The first base autoencoder structure, exp-3-layer, has three layers in the encoder and decoder. The encoder half consists of layers $(\cI = \R^{784}, \R^{128}, \R^{32}, \R^{d})$, and the decoder half consists of layers $(\R^{d}, \R^{32}, \R^{128}, \cR = \R^{784} )$. The second autoencoder structure, exp-default, has four layers in the encoder and decoder. The encoder half consists of layers $(\cI = \R^{784}, \R^{128}, \R^{64}, \R^{32}, \R^{d})$, and the decoder half consists of layers $(\R^{d}, \R^{32}, \R^{64}, \R^{128}, \cR = \R^{784} )$. The final autoencoder structure, exp-5-layer, has 6 layers in the encoder and decoder. The encoder half consists of layers $(\cI = \R^{784}, \R^{128}, \R^{96}, \R^{64}, \R^{32}, \R^{d})$, and the decoder half consists of layers $(\R^{d}, \R^{32}, \R^{64}, \R^{96}, \R^{128}, \cR = \R^{784} )$.

\subsubsection{Explicit computations of Jacobians \label{sec:explicitJacobian}}
The computation of the Jacobians $J_\cI$ and $J_\cL$ is done by applying the chain rule sequentially forward through both the encoder and decoder layers. Note that the derivative of a layer is applied to the output from the last layer:
\begin{equation*} \label{eq:jacobian}
    F = F_n \circ \cdots \circ F_1, \textrm{ then } \D F  = \prod \D F_i |_{F_{i-1}\circ \cdots \circ F_1(x)} \;.
\end{equation*}
Explicitly the derivative of a affine transformation is the associated matrix is 
\begin{equation*} \label{eq:d_affine}
    \frac{\partial (Ax + b)}{\partial x} = A \;.
\end{equation*}
Recall that the ReLU function is, componentwise the piecewise function \bas ReLU(x) = \begin{cases} 0 & x <0 \\ x & x \geq 0 \end{cases}\;.\eas The derivative of the ReLU function applied to an $N$ dimensional vector is the Heaviside function embedded along the diagonals of an $N \times N$:
\begin{equation*} \label{eq:d_relu}
    \frac{\partial ReLU(x)}{\partial x} = H(x) =
    \begin{cases}
        1, & x \geq 0 \\
        0, & x < 0
    \end{cases} \;.
\end{equation*} and the derivative of the $\tanh$ function applied to an $N$ dimensional vector is the derivative embedded along the diagonals of an $N \times N$ dimensional matrix is 
\begin{equation*} \label{eq:d_tanh}
    \frac{\partial \tanh(x)}{\partial x} = 1 - \tanh^2(x)
\end{equation*}
Note that in the autoencoder, the ReLU and the $\tanh$ are applied component wise to the outputs of the previous layer. Thus the corresponding term in the Jacobian calculation is a matrix with the functions $H(x)$ or $\tanh(x)$ along the diagonals.

Explicitly, for the first autoencoder structure, exp-2-layer, we may write
\bas f_{enc, i}(x) = 
\begin{cases}
ReLU_{128}(A_{128 \times 784} x + b_{128}) & i = 1 \\ 
ReLU_{32}(A_{32 \times 128} x + b_{32} ) & i = 2 \\
A_{d \times 32} x + b_d  & i = 3 \;,
\end{cases}
\quad
;
\quad f_{dec, i}(x)
\begin{cases}ReLU_{32}(A_{32 \times d} x + b_{32}) & i = 1 \\
ReLU_{128}(A_{128 \times 32} x + b_{128} ) & i = 2 \\
\tanh_{784}(A_{784 \times 128} x + b_{784})  & i = 3 \;,
\end{cases}
\eas
where $A_{n \times m}$ is a real $n \times m$ matrix, $b_m$ is a $m$ dimensional column vector, and $ReLU_m : \R^m \rightarrow \R^m$ is the vector valued function that is $ReLU$ in each component (similarly for $\tanh_m$). 




As a particular example, we can compute that the derivative of $f_{enc,1}$ is
\[
\D f_{enc,1}(x)=H(A_{128\times784}x+b)\bI_{128}A_{128\times784}
\]
and the other derivatives can be computed similarly, as can the derivatives of compositions (using the chain rule).

\subsection{Eigenvalues \label{sec:evals}}
For the rest of this paper, we refer to the Jacobians $J_\cL$ and $J_\cI$ for the three different architectures as $J_{\cL, n}$, of $J_{\cI,n}$ with $n \in \{3, 4, 5\}$ respectively, corresponding to the 3 layer, 4 layer or 5 layer autoencoder. In this Section, we present various results about the structure of the eigenvalues of $J_{\cL, n}$ and $J_{\cI, n}$. We see that while the eigenvalues are not consistent with the autoencoders being projections maps onto a data manifold $M_\cD$, neither are they too far from said condition. 

First we observe that, while we are working over $\C$ the imaginary parts of most of these eigenvalues are small. Figure \ref{fig:angle} shows distribution of the arguments of the eigenvalues, measured in radians. Across all architecture types and latent dimension, the median argument of the eigenvalues remains small ($< 0.06$ radians). 

If the autoencoder locally defined projections, we would expect the eigenvalues to be real, clustering around $0$ and $1$.  Because we see some complex eigenvalues, we know that the autoencoder not only stretches but also rotates the tangent space.  The fact that these angles are usually small suggests that this rotation is not a significant part of the autoencoder's action, however.  Moreover, if one perturbs a matrix with real eigenvalues, some of which are equal or close to each other, one expects to see at most small angles due to the perturbation.  This is consistent with the empirical observations in Figure \ref{fig:angle}.

\begin{figure}
\begin{subfigure}[t]{0.3\textwidth}
\includegraphics[width=\textwidth]{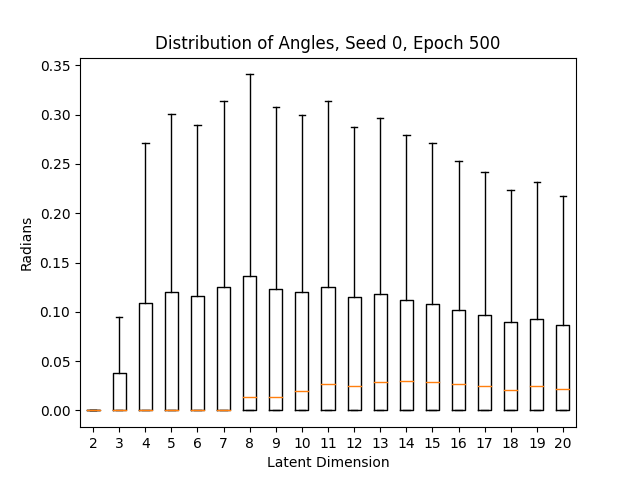}
\end{subfigure}
\begin{subfigure}[t]{0.3\textwidth}
\includegraphics[width=\textwidth]{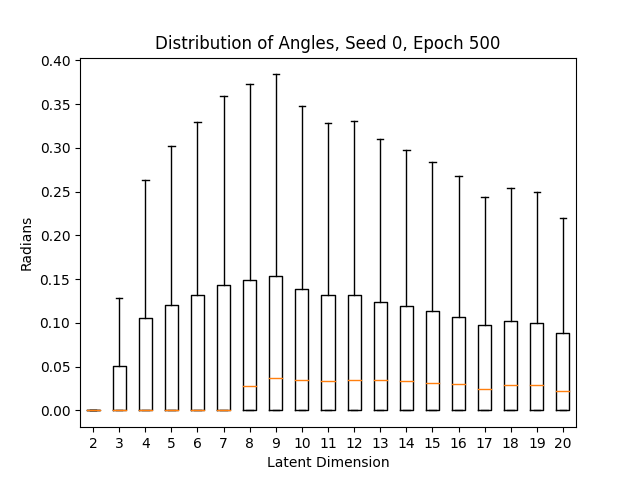}
\end{subfigure}
\begin{subfigure}[t]{0.3\textwidth}
\includegraphics[width=\textwidth]{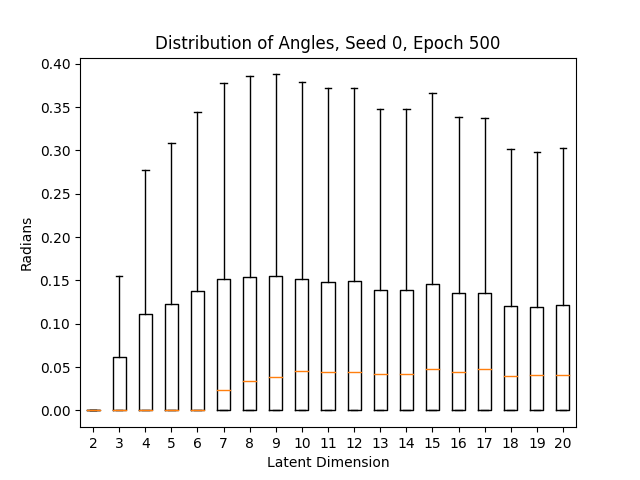}
\end{subfigure}
\newline \centering
\begin{subfigure}[t]{0.3\textwidth}
\includegraphics[width=\textwidth]{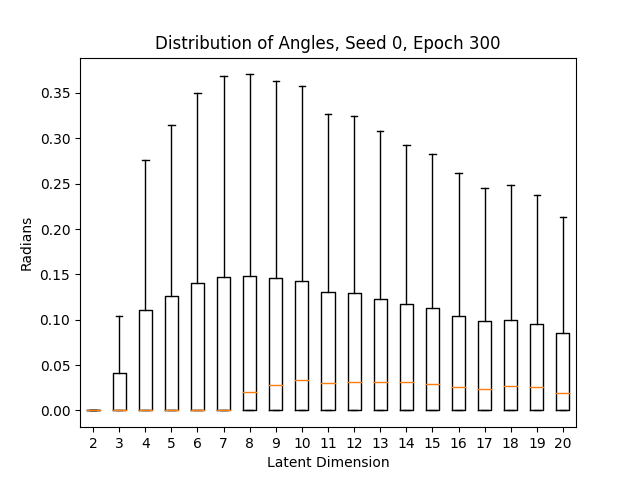}
\end{subfigure}
\caption{Distributions of the arguments of the eigenvalues of $J_{\cL, n}$ for $n \in \{3, 4,5\}$ (top). Distribution of the arguments of the eigenvalues of $J_{\cI, 4}$ (bottom). }
\label{fig:angle} \end{figure}

Next, we consider the algebraic and geometric means of the eigenvalues of $J_{\cL, n}$ for $n \in \{3, 4,5\}$. Note that when $\dim(\cL) = d$, the arithmetic mean of the eigenvalues as $\Tr(J_{\cI, n})/d$ and the geometric mean is $\omega_{\cL, n}^{1/d}$. Figure \ref{fig:arith} shows the median arithmetic means while Figure \ref{fig:geom} shows the geometric means. The second rows of these figures shows the breakdown of median means by class, while the third row compares the median means for the eigenvalues of $J_\cI$ and $J_\cL$. We see that the median arithmetic and geometric means are near 1 for every dimension and across architectures. After latent dimension 4, the median arithmetic and geometric means decrease with the latent dimension. This indicates that the latent dimensions 2 and 3 were likely too small for the information in the eigenvalues to overcome the noise of the necessary projection. It is curious to observe that at no latent dimension architecture or class are these median means above 1. In other words, while the autoencoder is close to projection near the training data as expected, it is also contracting in the directions of the eigenvectors. It is also worth noting that the arithmetic and geometric means are very similar to each other. This suggests that the eigenvalues of $J_\cI(x)$ are qualitatively close to these means, in that very few of them are either extremely large or extremely close to $0$.  If some of the eigenvalues were significantly larger or smaller than the rest, the arithmetic mean would be much larger than the geometric mean.

Looking at the breakdown of the median arithmetic and geometric means, we notice that the median means for class one in MNIST are significantly lower (but still close to one), while the other classes behave very similarly to each other. Comparing the median means of $\vec{\lambda}_\cI$ and $\vec{\lambda}_\cL$, we see that the median means (both arithmetic and geometric) of the eigenvalues of $J_\cI$ are larger than the median means of the eigenvalues of $J_\cL$. 

\begin{figure}
\begin{subfigure}[t]{0.3\textwidth}
\includegraphics[width=\textwidth]{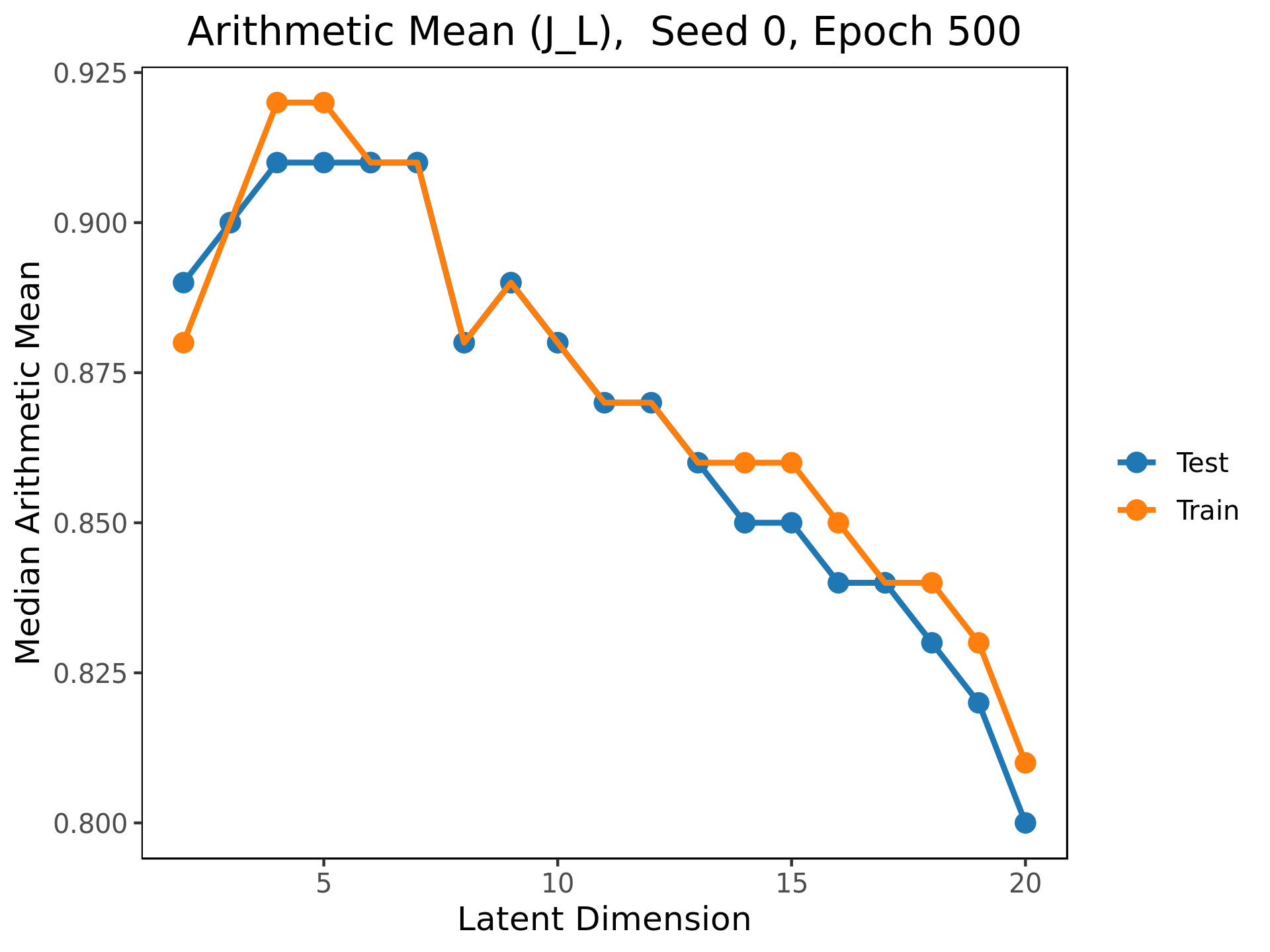}
\end{subfigure}
\begin{subfigure}[t]{0.3\textwidth}
\includegraphics[width=\textwidth]{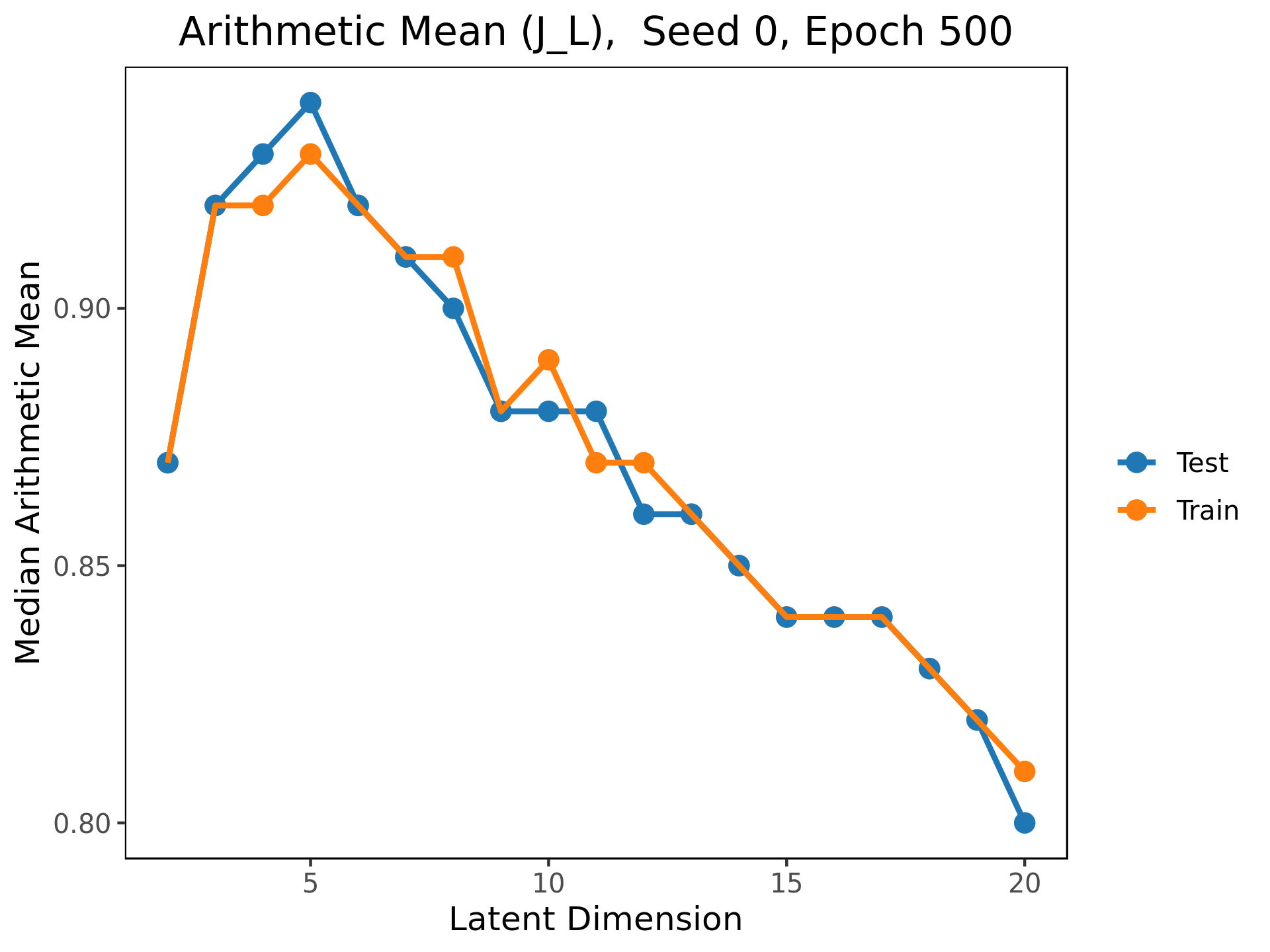}
\end{subfigure}
\begin{subfigure}[t]{0.3\textwidth}
\includegraphics[width=\textwidth]{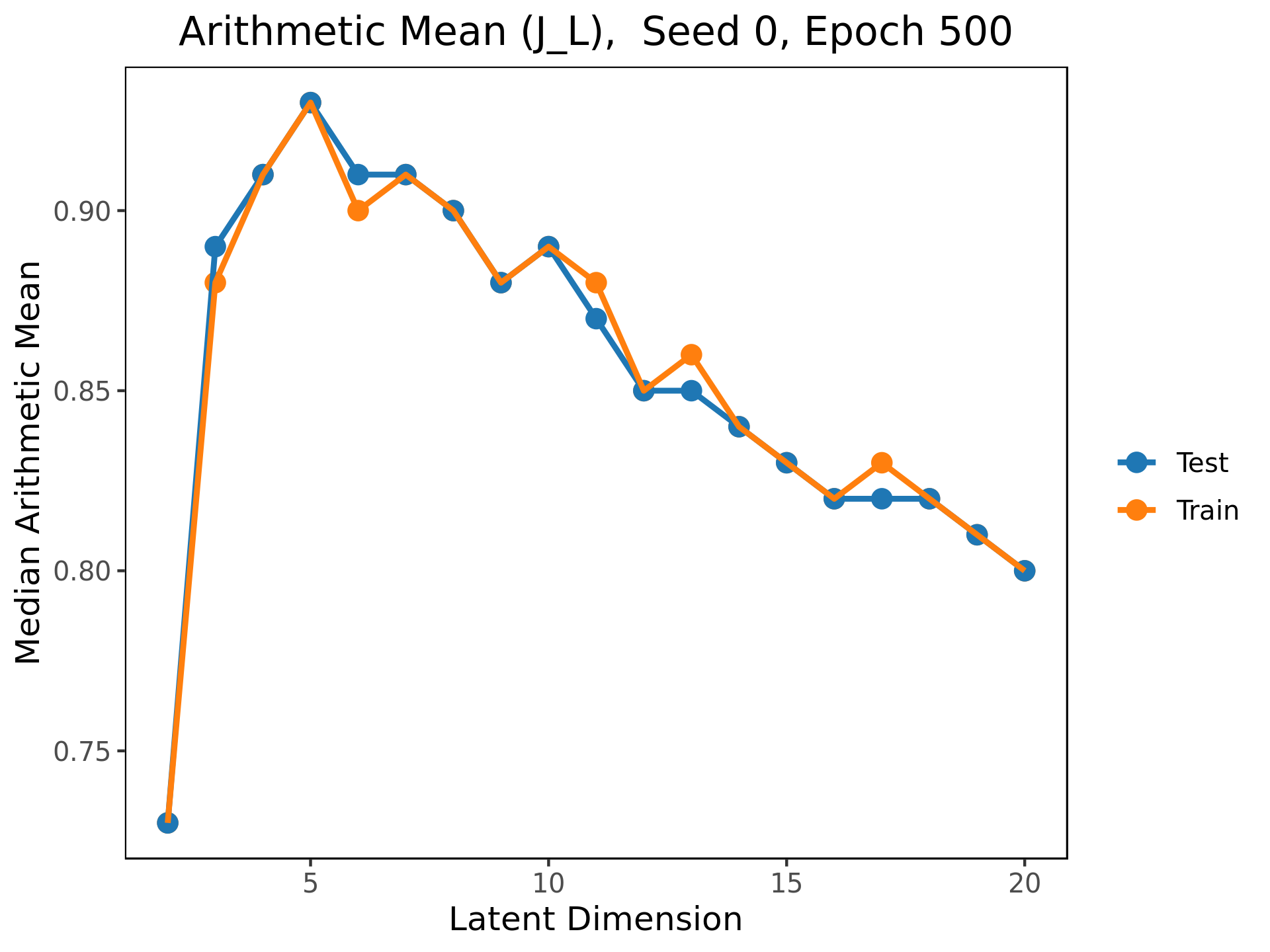}
\end{subfigure}
\newline 
\begin{subfigure}[t]{0.3\textwidth}
\includegraphics[width=\textwidth]{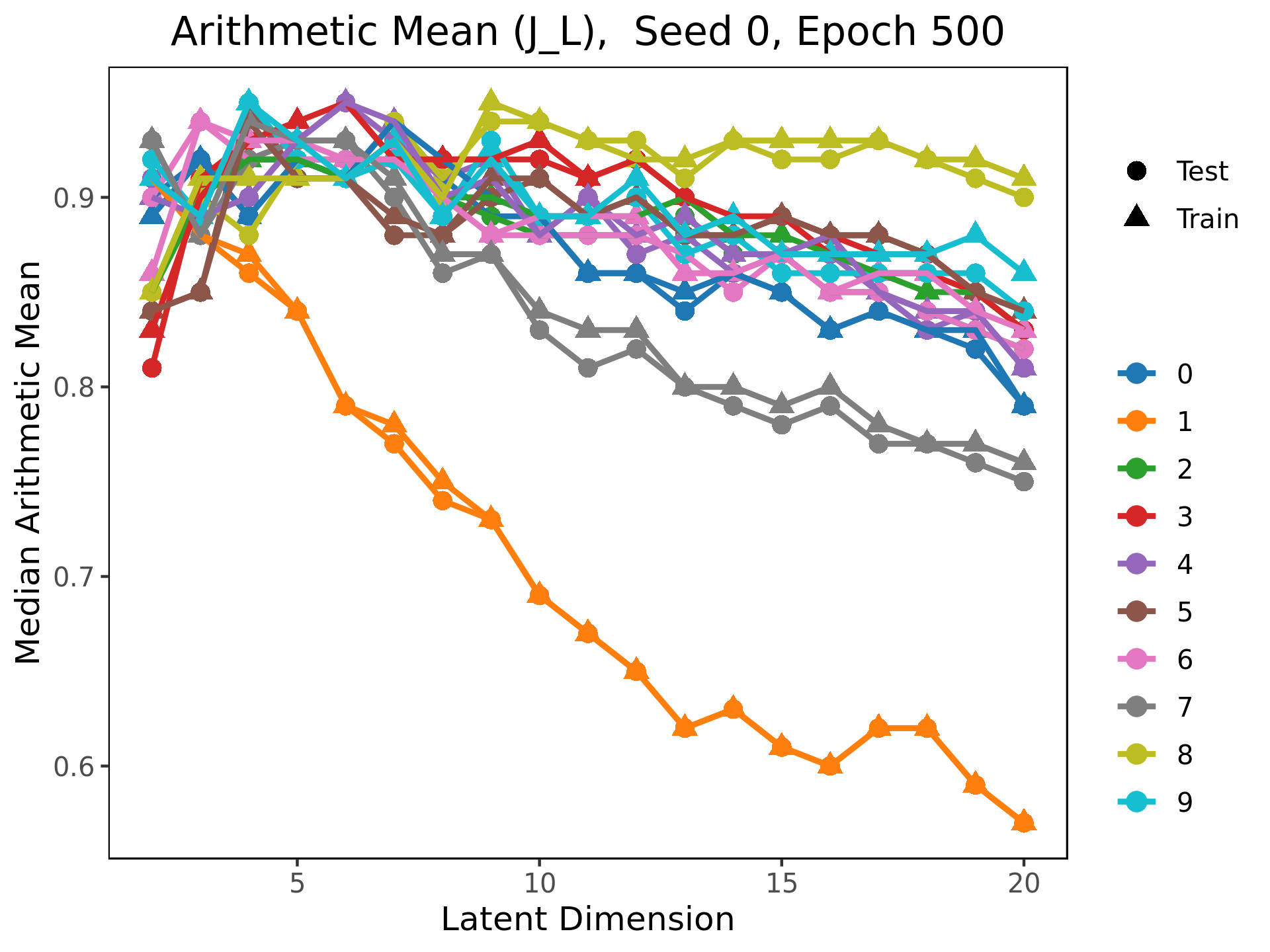}
\end{subfigure}
\begin{subfigure}[t]{0.3\textwidth}
\includegraphics[width=\textwidth]{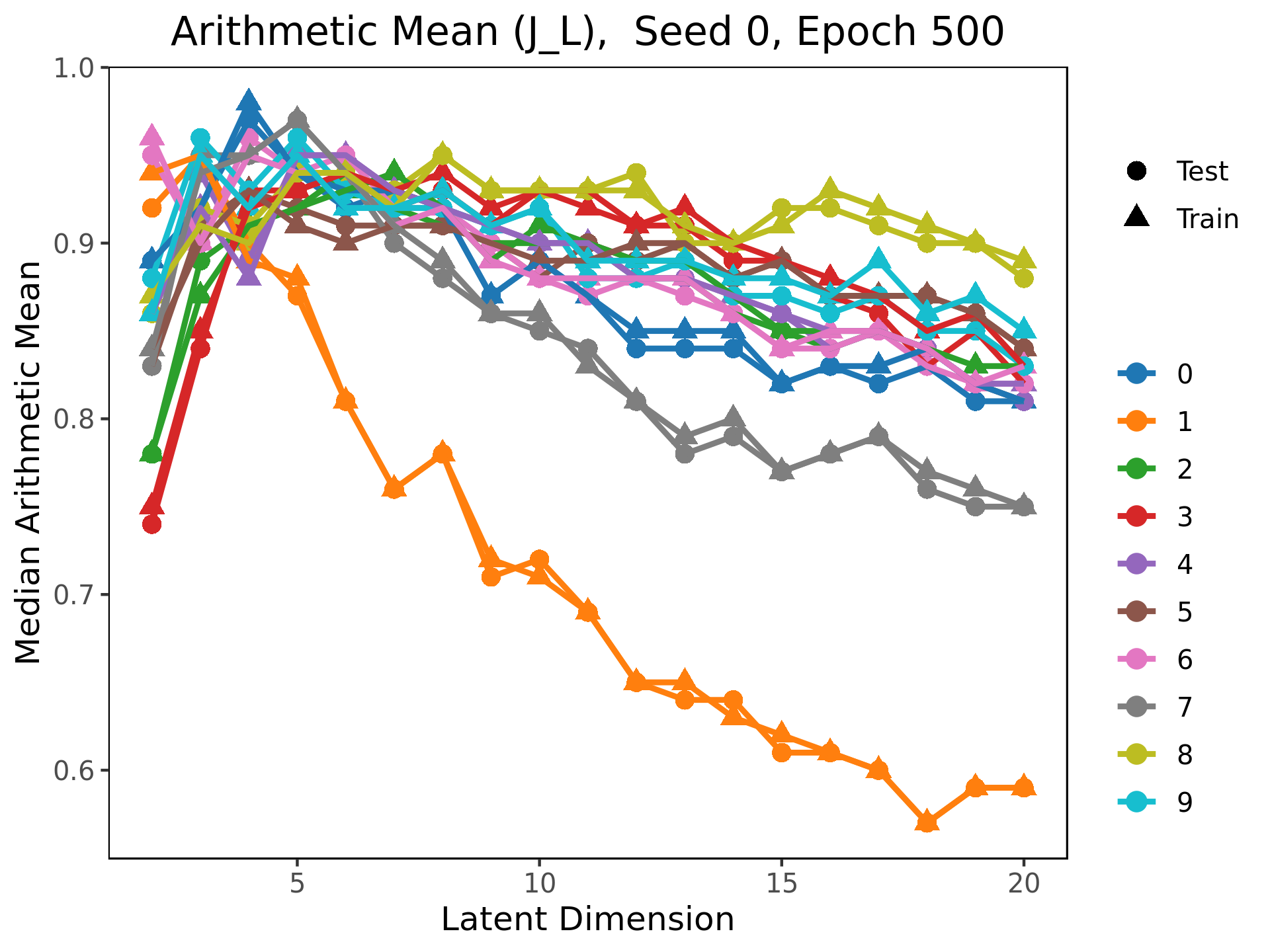}
\end{subfigure}
\begin{subfigure}[t]{0.3\textwidth}
\includegraphics[width=\textwidth]{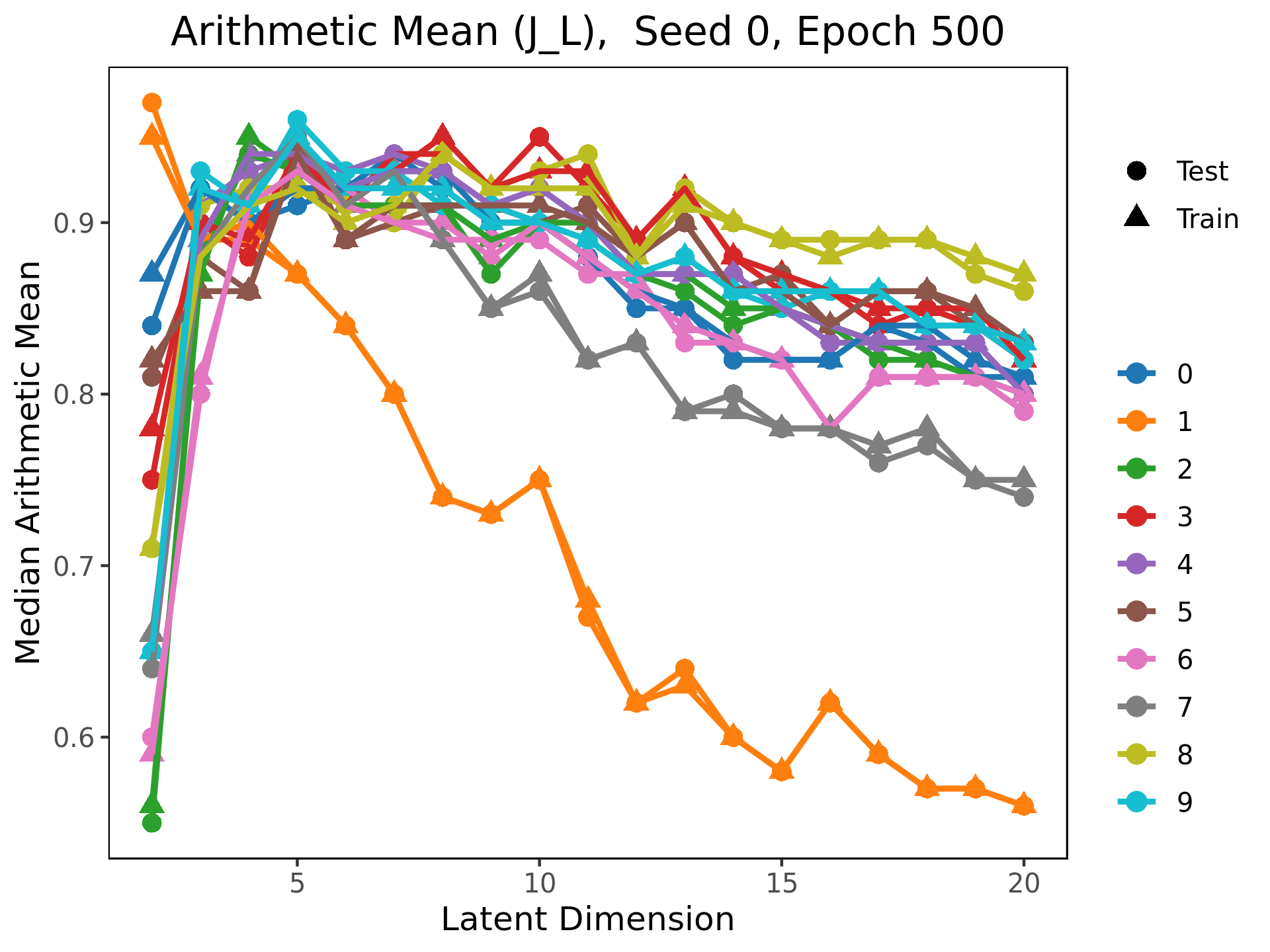}
\end{subfigure}
\newline 
\centering
\begin{subfigure}[t]{0.3\textwidth}
\includegraphics[width=\textwidth]{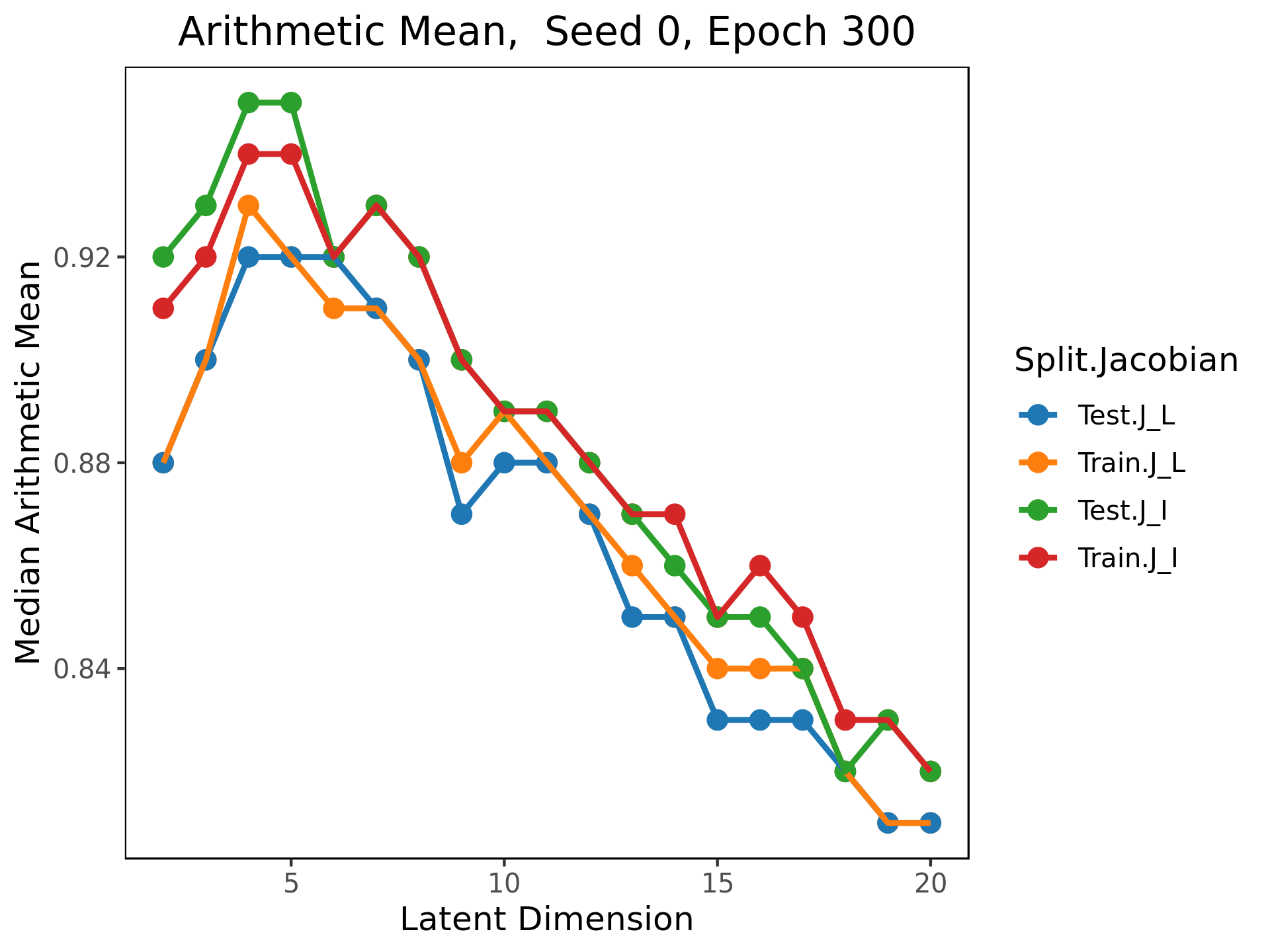}
\end{subfigure}

\caption{The arithmetic means of the eigenvalues start out close to one at low latent dimension, and decrease as latent dimension increases. This is true when the data is broken down by class as well. Note that at high latent dimension, the arithmetic means for class 1 is much lower than the rest of the classes. The median arithmetic means of $\vec{\lambda}_\cI$ tend to be higher than the median arithmetic means of $\vec{\lambda}_\cL$.}\label{fig:arith}
\end{figure}

\begin{figure}
\begin{subfigure}[t]{0.3\textwidth}
\includegraphics[width=\textwidth]{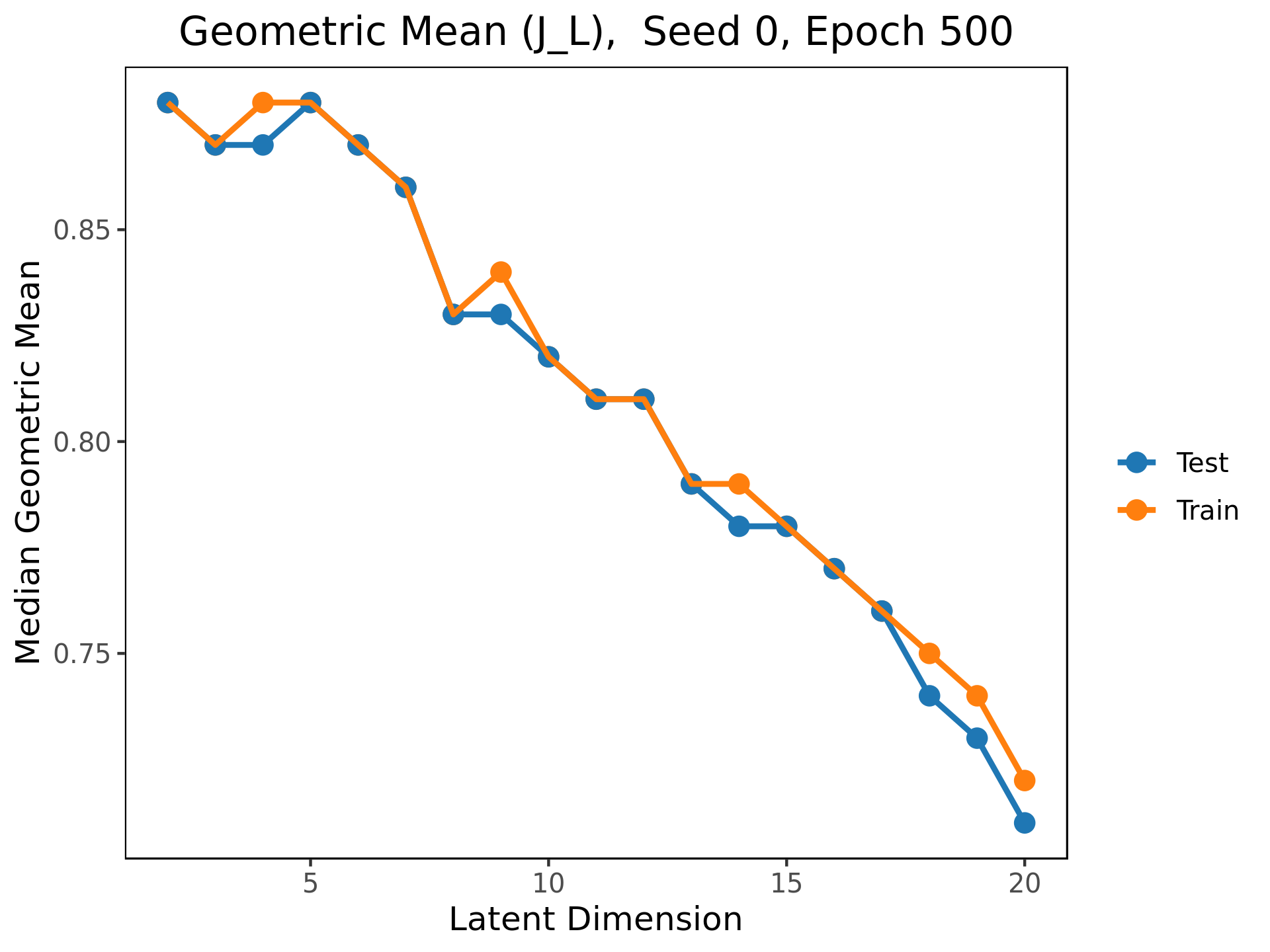}
\end{subfigure}
\begin{subfigure}[t]{0.3\textwidth}
\includegraphics[width=\textwidth]{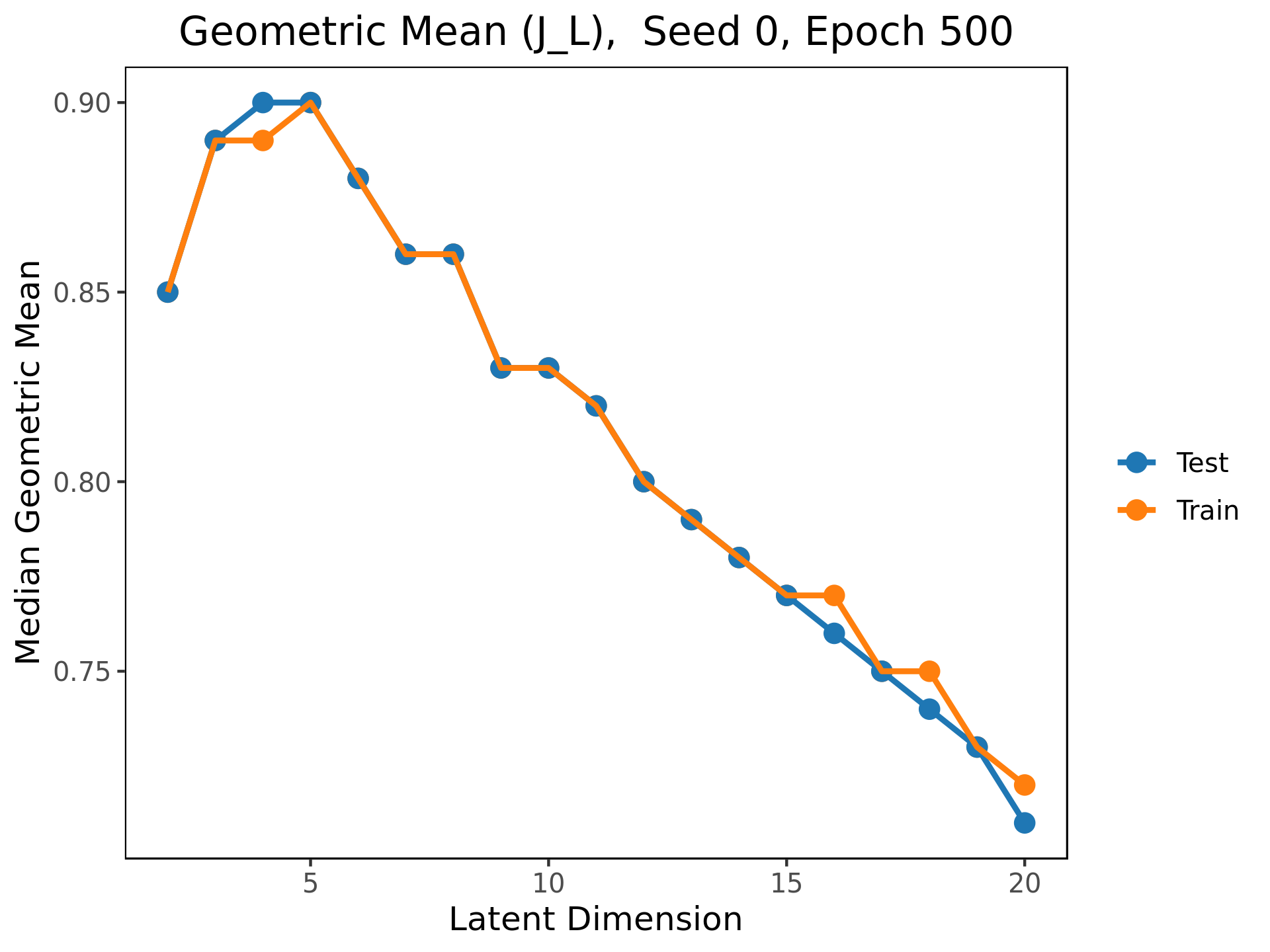}
\end{subfigure}
\begin{subfigure}[t]{0.3\textwidth}
\includegraphics[width=\textwidth]{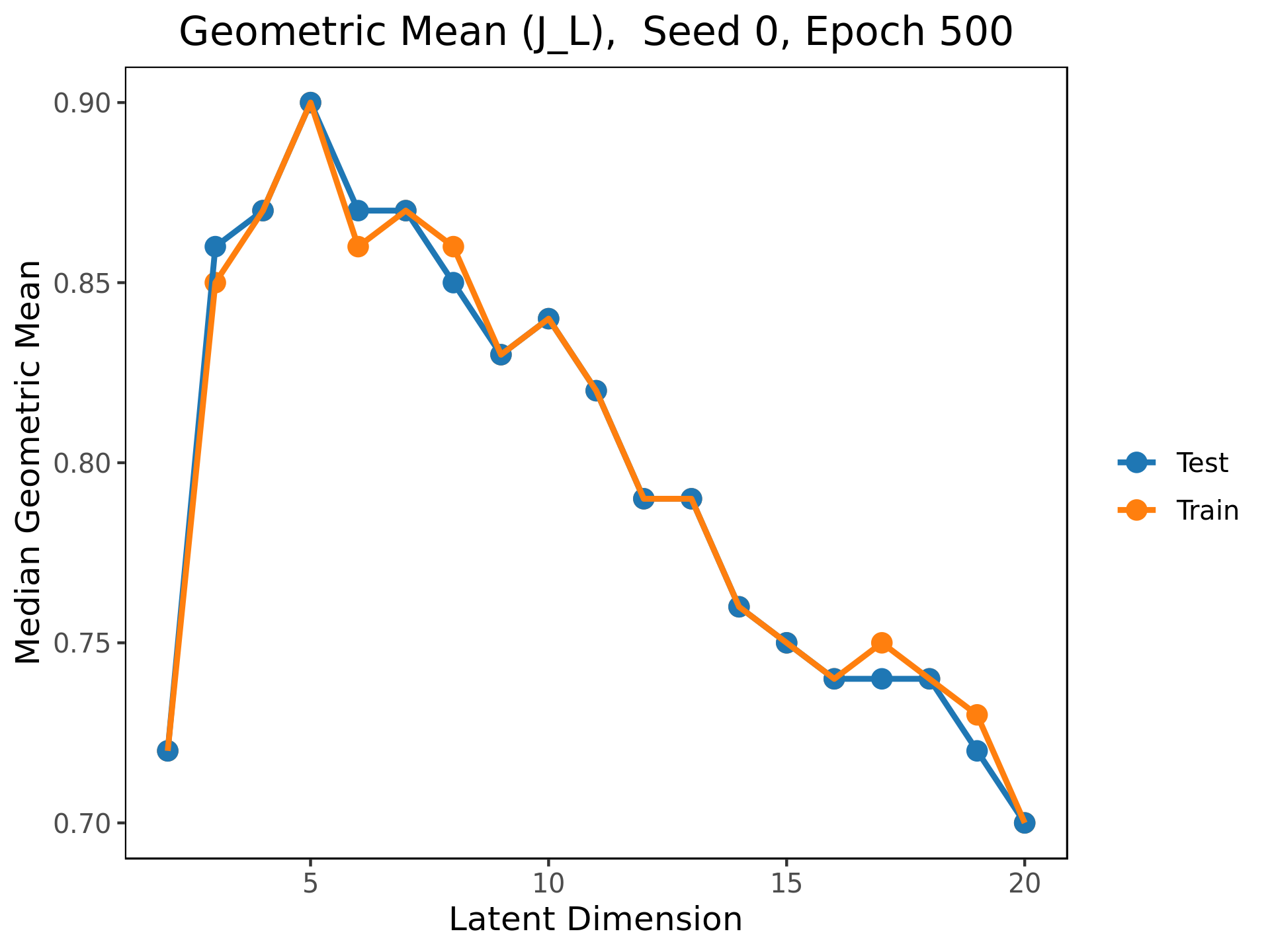}
\end{subfigure}
\newline 
\begin{subfigure}[t]{0.3\textwidth}
\includegraphics[width=\textwidth]{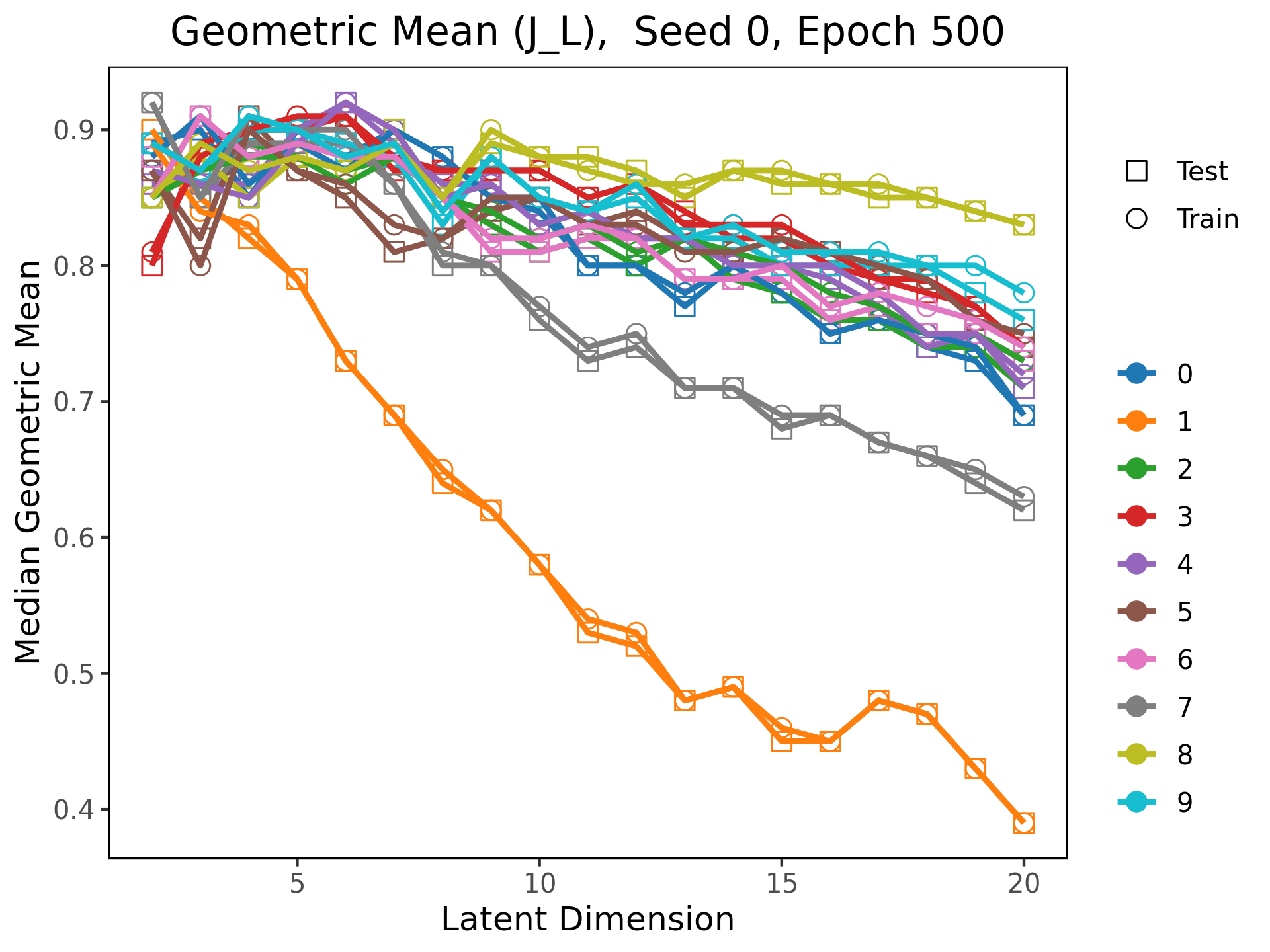}
\end{subfigure}
\begin{subfigure}[t]{0.3\textwidth}
\includegraphics[width=\textwidth]{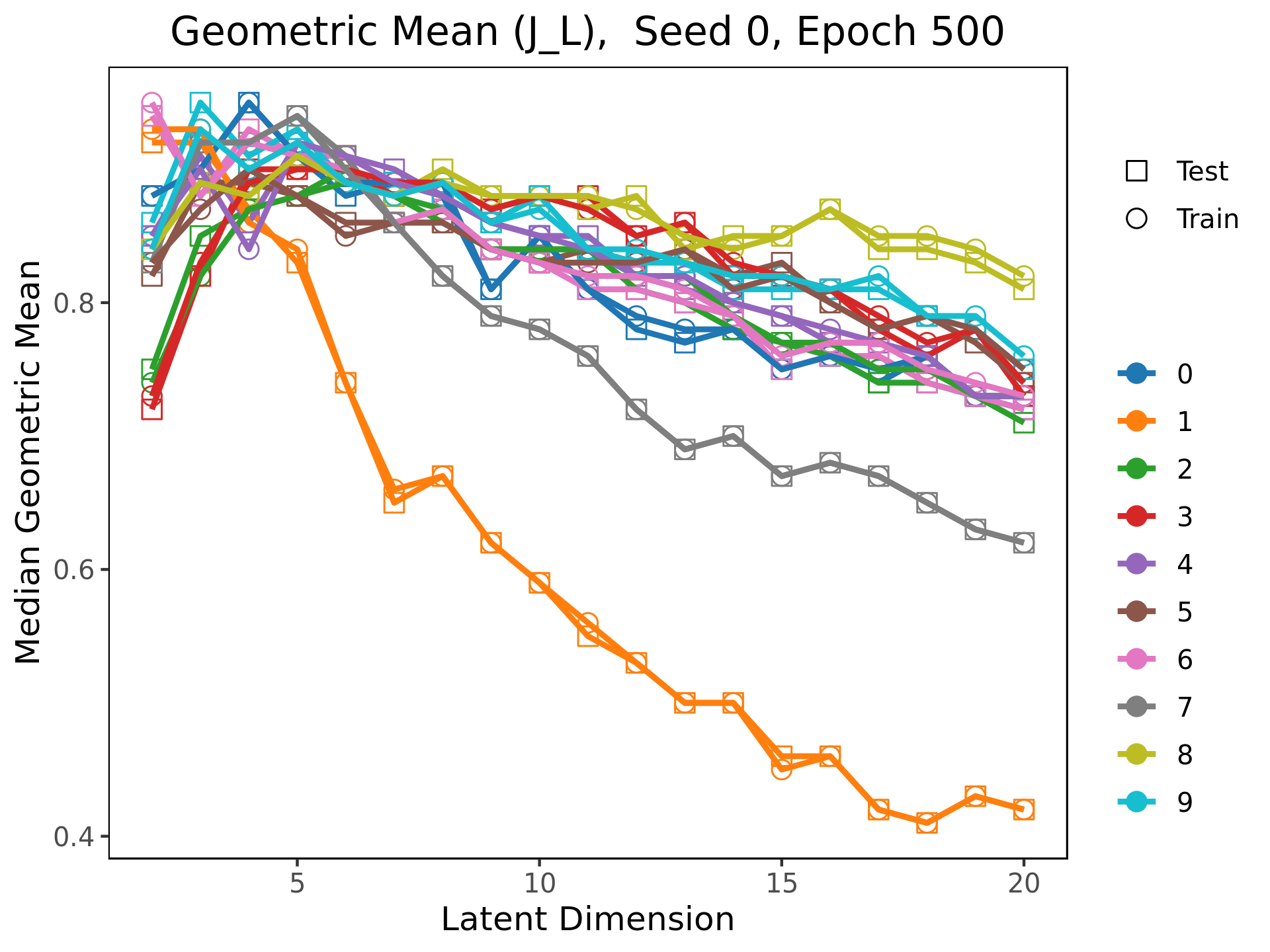}
\end{subfigure}
\begin{subfigure}[t]{0.3\textwidth}
\includegraphics[width=\textwidth]{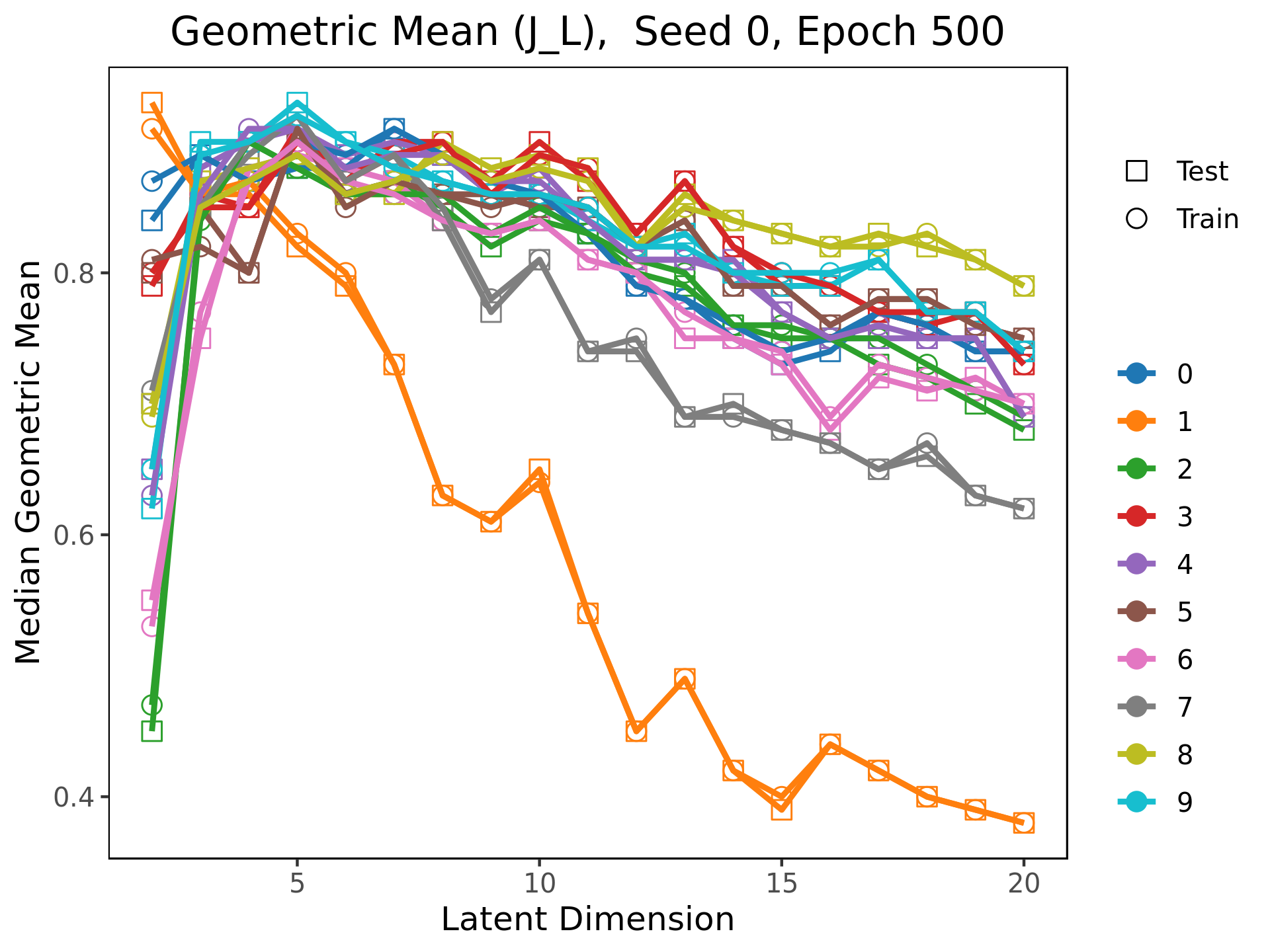}
\end{subfigure}
\newline
\centering
\begin{subfigure}[t]{0.3\textwidth}
\includegraphics[width=\textwidth]{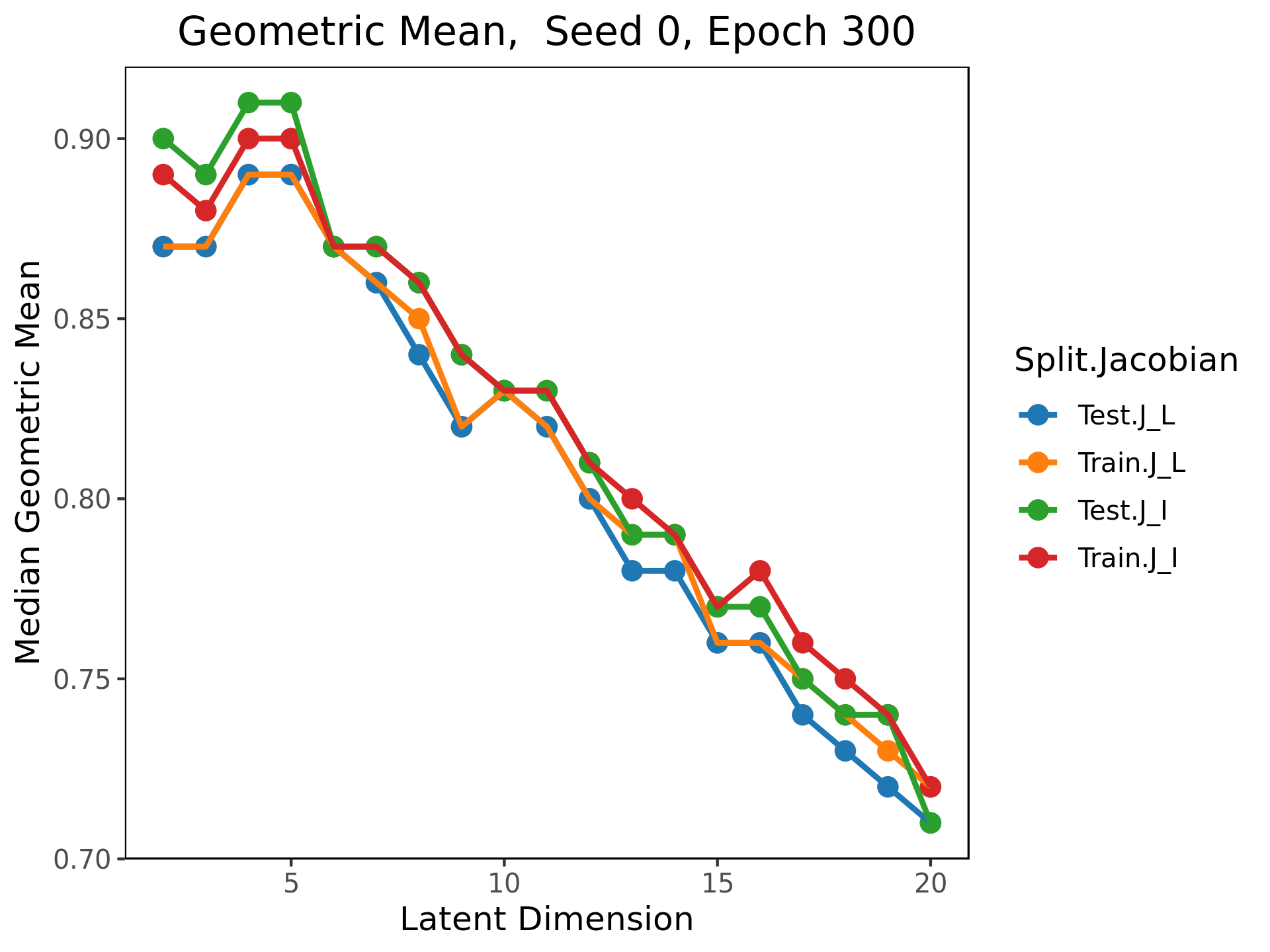}
\end{subfigure}

\caption{The geometric means of the eigenvalues start out close to one at low latent dimension, and decrease as latent dimension increases. This is true when the data is broken down by class as well. Note that at high latent dimension, the geometric means for class 1 is much lower than the rest of the classes.The median geometric means of $\vec{\lambda}_\cI$ tend to be higher than the median geometric means of $\vec{\lambda}_\cL$. }\label{fig:geom}
\end{figure}

The fact that the median arithmetic mean of the eigenvalues is less than one implies that the autoencoder cannot locally be the identity map onto its image.  Despite this, the empirics show that it is not qualitatively too far from this.  In particular, we see that very few of the eigenvalues are close to $0$, a pattern which persists across the architecture, latent dimension, and class. (see Figure \ref{fig:zeroes}).

\begin{figure}
\begin{subfigure}[t]{0.3\textwidth}
\includegraphics[width=\textwidth]{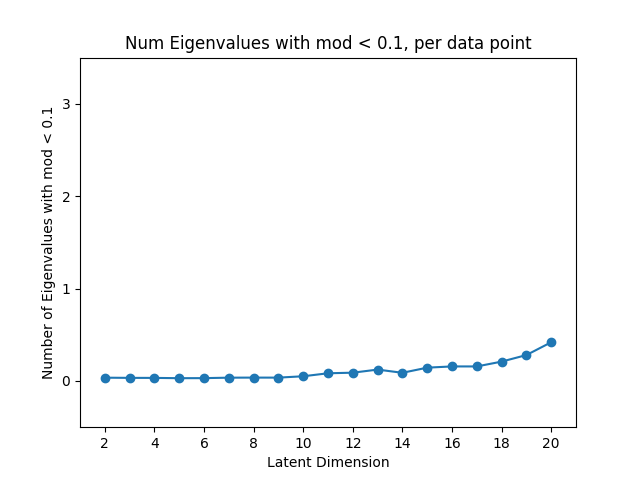}
\end{subfigure}
\begin{subfigure}[t]{0.3\textwidth}
\includegraphics[width=\textwidth]{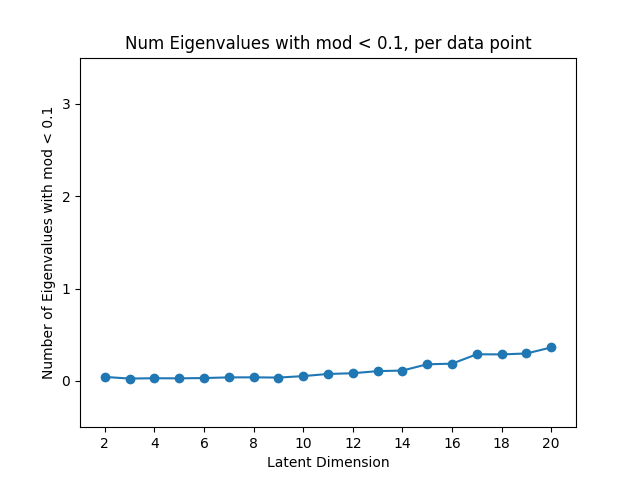}
\end{subfigure}
\begin{subfigure}[t]{0.3\textwidth}
\includegraphics[width=\textwidth]{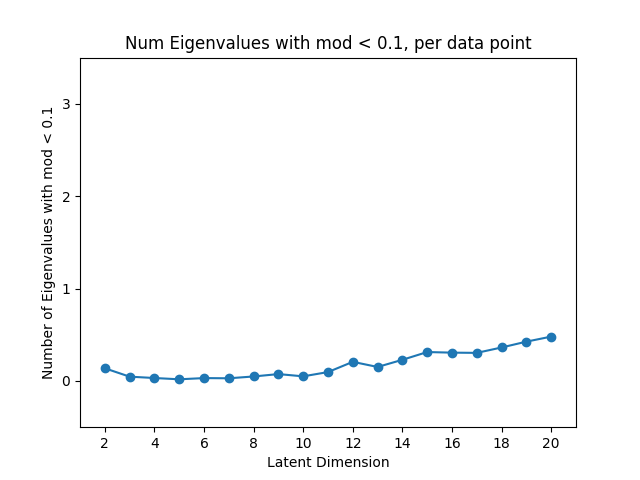}
\end{subfigure}
\newline 
\begin{subfigure}[t]{0.3\textwidth}
\includegraphics[width=\textwidth]{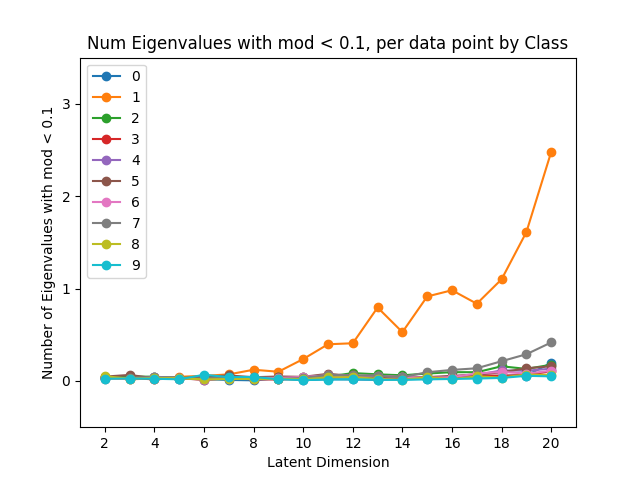}
\end{subfigure}
\begin{subfigure}[t]{0.3\textwidth}
\includegraphics[width=\textwidth]{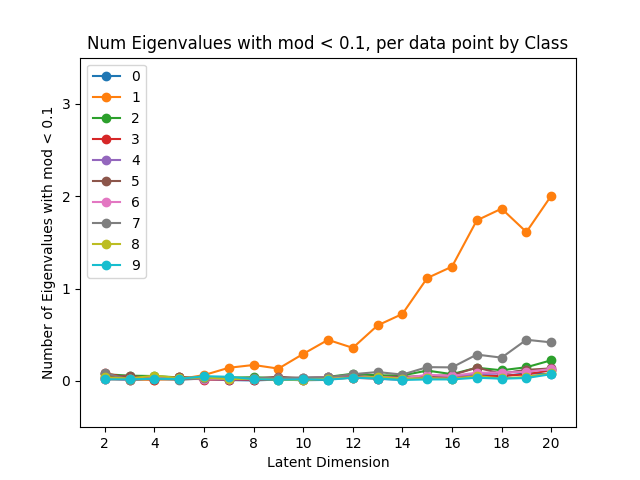}
\end{subfigure}
\begin{subfigure}[t]{0.3\textwidth}
\includegraphics[width=\textwidth]{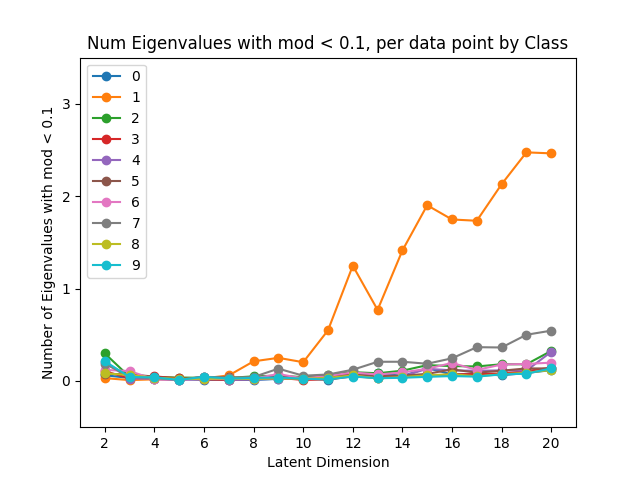}
\end{subfigure}

\caption{The number of eigenvalues which have absolute value less than $0.1$, for $J_{\cL,n}$, $n\in\{3,4,5\}$ (top), and the number of such eigenvalues, broken down by class, for $J_{\cL,n}$, $n\in\{3,4,5\}$ (bottom).  We observe that very few eigenvalues have absolute value less than $0.1$, except in class 1.  Even in class 1, the number of eigenvalues less than $0.1$ is at most $2$, in latent dimension $20$.}
\label{fig:zeroes}
\end{figure}

Finally, we see see that $J_\cL$ is orientation reversing for a very small fraction of the data points. That is, $\omega_*(x)<0$. Figure \ref{fig:orientation} shows the proportion of test and training data points that are orientation reversing vary from $\approx 6\%$ at low latent dimension to $< 1\%$ at higher latent dimension. Furthermore, we see a similar pattern for the product of the top $d$ eigenvalues of $J_\cI(x)$, though in general, there rate of orientation reversals for $J_\cI$ is slightly higher than that for $J_\cL$. This is another qualitative sense in which $J_\cL,J_\cI$ resemble the identity or a projection (respectively), as those maps are never orientation reversing.

\begin{figure}
\begin{subfigure}[t]{0.3\textwidth}
\includegraphics[width=\textwidth]{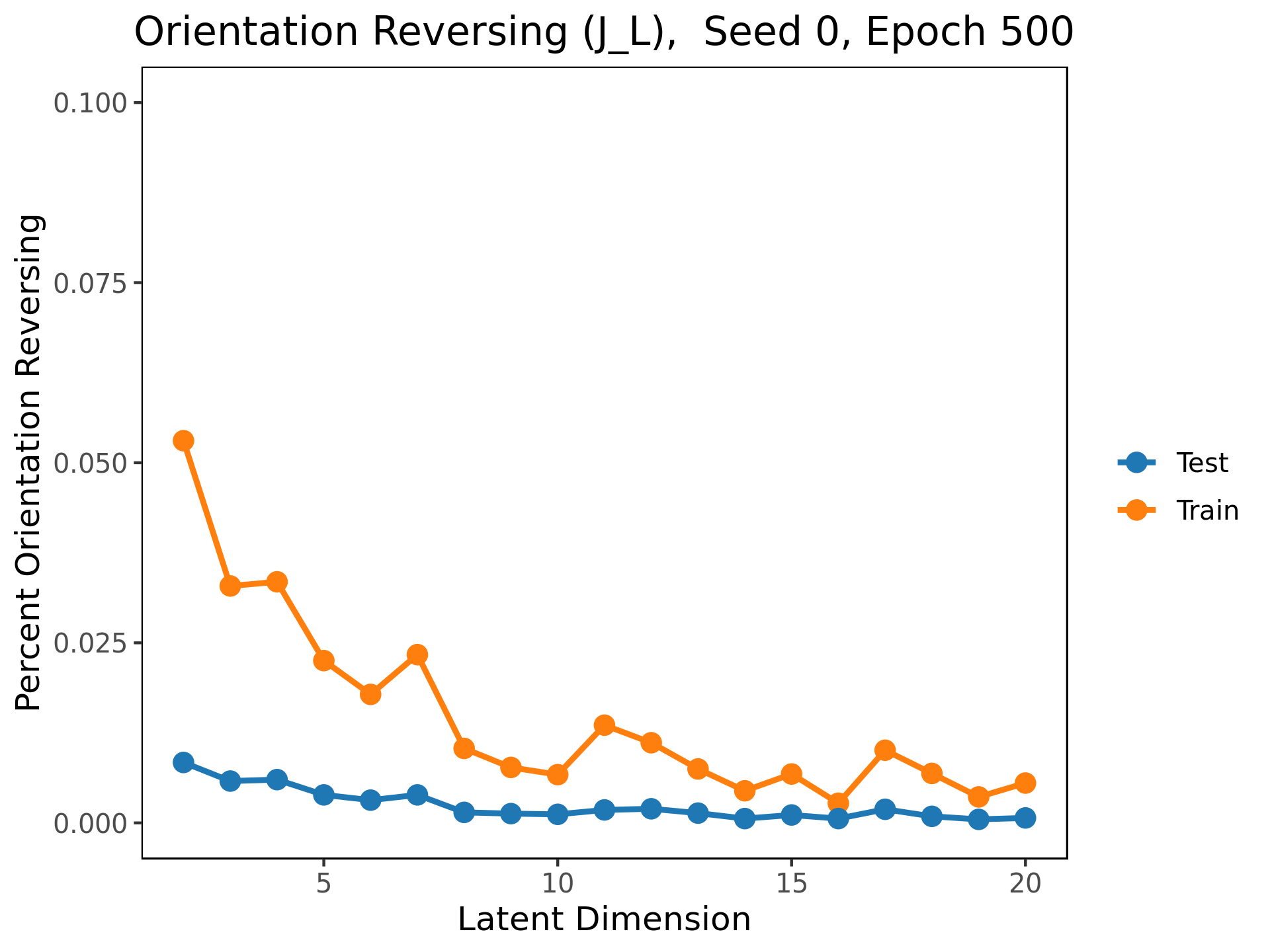}
\end{subfigure}
\begin{subfigure}[t]{0.3\textwidth}
\includegraphics[width=\textwidth]{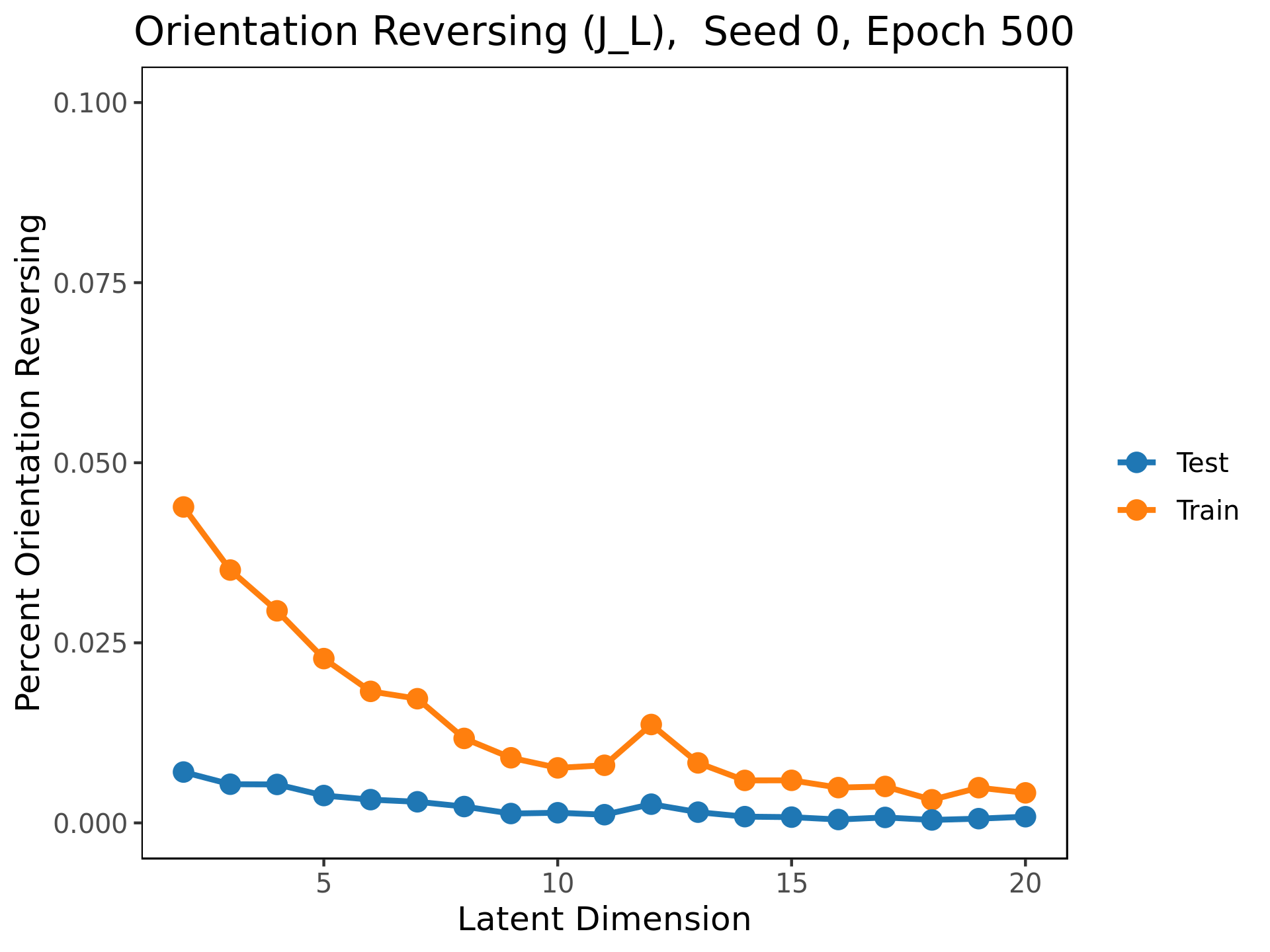}
\end{subfigure}
\begin{subfigure}[t]{0.3\textwidth}
\includegraphics[width=\textwidth]{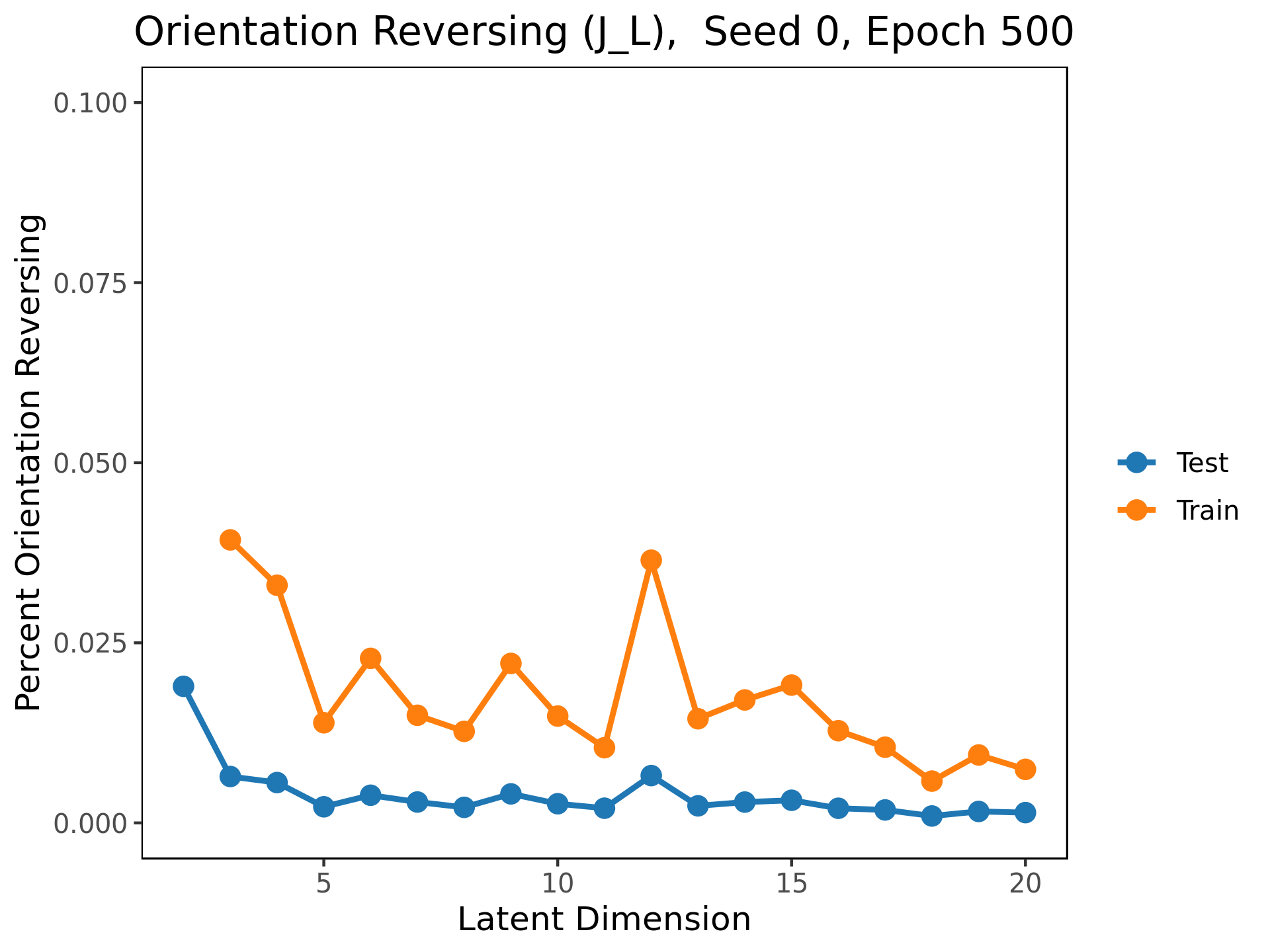}
\end{subfigure}
\newline
\centering
\begin{subfigure}[t]{0.3\textwidth}
\includegraphics[width=\textwidth]{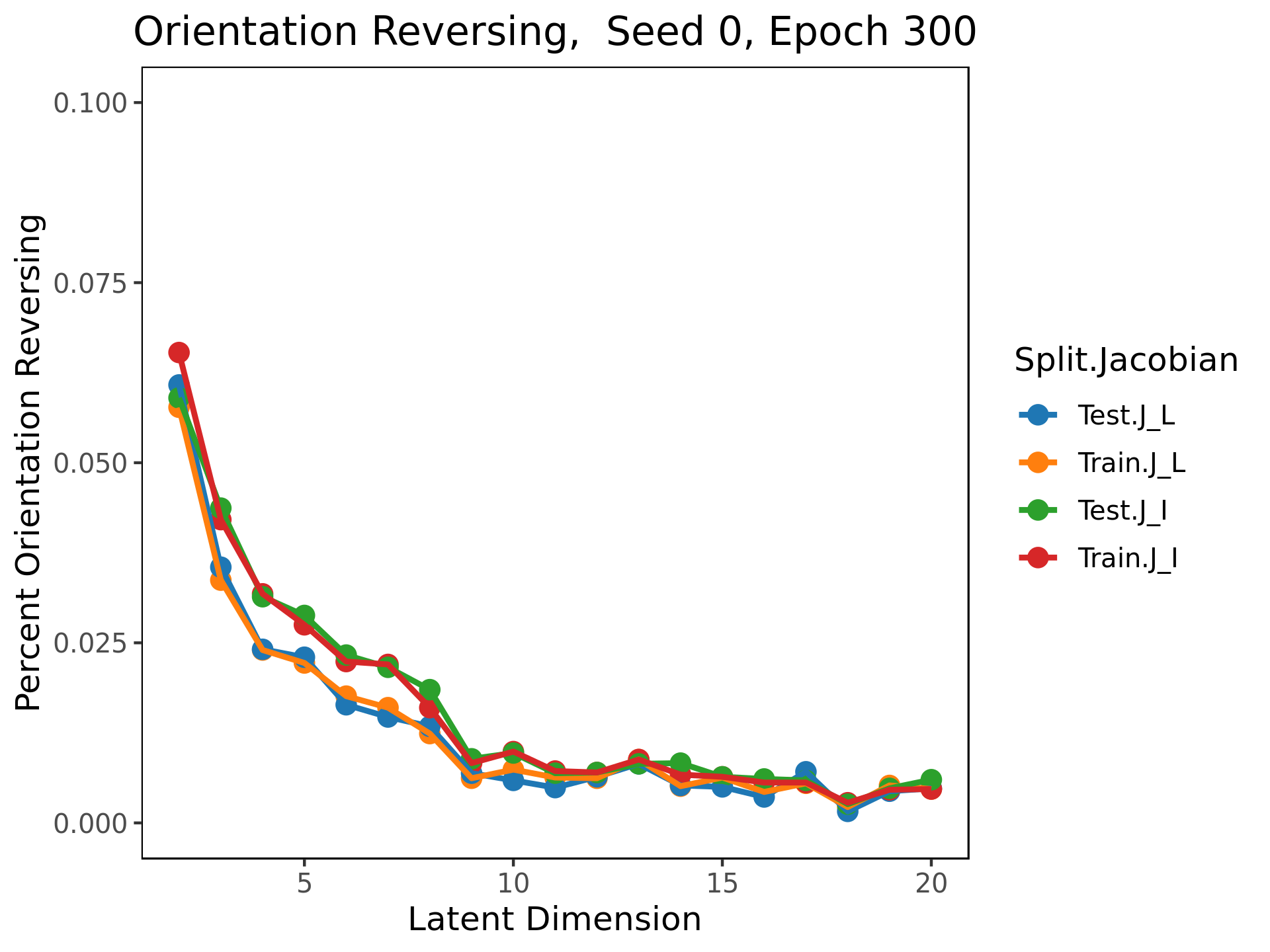}
\end{subfigure}

\caption{Proportion of points in $\cD$ such that $\omega_*(x)<0$.}\label{fig:orientation}
\end{figure}

\subsection{Predicting reconstruction error \label{sec:predictions}}

Next, we consider the predictive value of quantities $\log(\omega_{\cL, n})$ and $\log(\omega_{\cI, n})$ on the MSE of the test points. We model this with the linear regression of the form \ba MSE = \beta_0 + \beta_1 \log(\omega_*) + \beta_2 test\_indicator +  \beta_3 test\_indicator \times \log(\omega_*) \;,  \label{eq:regression} \ea where $\omega_* \in \{\omega_{\cI,n} , \omega_{\cL,n}\}$ and \bas test\_indicator(x) = \begin{cases} 1 & x \in \cD_{test} \\ 0 & x \in \cD_{train} \end{cases}\;.\eas The reason for this more complicated form of the regression is that the coefficient $\beta_3$ gives the excess change in MSE \emph{if} the point is a test point. When this coefficient is positive, we learn that increasing the independent variable ($\log(\omega_*)$) predicts a greater increase in MSE on test points than on traiing points. This would indicate that  the independent variable is a good predictor of whether the learned model will generalize well on a given new data point. 

In order to account for differences in the scale of the MSE and the volume forms acrosss different dimensions, all regressions are taken with respect to scaled variables. 

Note that, for $x \in \cD_{train}$, this reduces to \bas MSE = \beta_0 + \beta_1 \log(\omega_*) \;.\eas However, on test points, the linear coefficient of this regression is given by $\beta_1 + \beta_3$. The coefficients for this regressions in the four layer experiment is given in Table \ref{table:defaultregression}. Figure \ref{fig:log_pred} shows the slopes of this regression for each experiment across latent dimension. Note from Table \ref{table:defaultregression} that the intercepts for these regressions remain near $0$. 

Further note that  as latent dimension increases, the coefficient $\beta_3$ is consistently positive. That is, the same increase in $\ln \omega_*$ predicts a larger increase in expected reconstruction loss if the point is a test point, rather than a training point. In other words, observing larger $\log \omega_*$ on points that were not in the training set means that the corresponding reconstruction error on these points will be even greater than on the training points. I.e. the larger $\log(\omega_*)$ on test points, the higher the expected probability that the network has not properly generalized to that test point. When we break this down by class, we see the same pattern persist.


\begin{table}[h]
\tiny
\centering
\resizebox{\textwidth}{!}{
\csvreader[
	tabular = |l||r|r|r|r|r||r|r|r|r|r|,
	table head = \hline \bfseries{dim} & \multicolumn{5}{|c||}{{\bf Coefficients for the latent space }$\cL$} & \multicolumn{5}{|c|}{{\bf Coefficients for the input space }$\cI$}\\\hline,
	late after last line=\\\hline

]{figs_and_tabs/graphs_seed-0_epochs-300/exp-default/coef_df_Log_VF_combined2.csv}{}{\csvlinetotablerow}
}
\caption{Coefficients of the linear regression in equation \eqref{eq:regression} for $\omega_\cI$ and $\omega_\cL$ across latent dimensions. Run on data from seed 0, exp-default and 300 epochs.}
\label{table:defaultregression}
\end{table}

\begin{figure}
\begin{subfigure}[t]{0.3\textwidth}
\includegraphics[width=\textwidth]{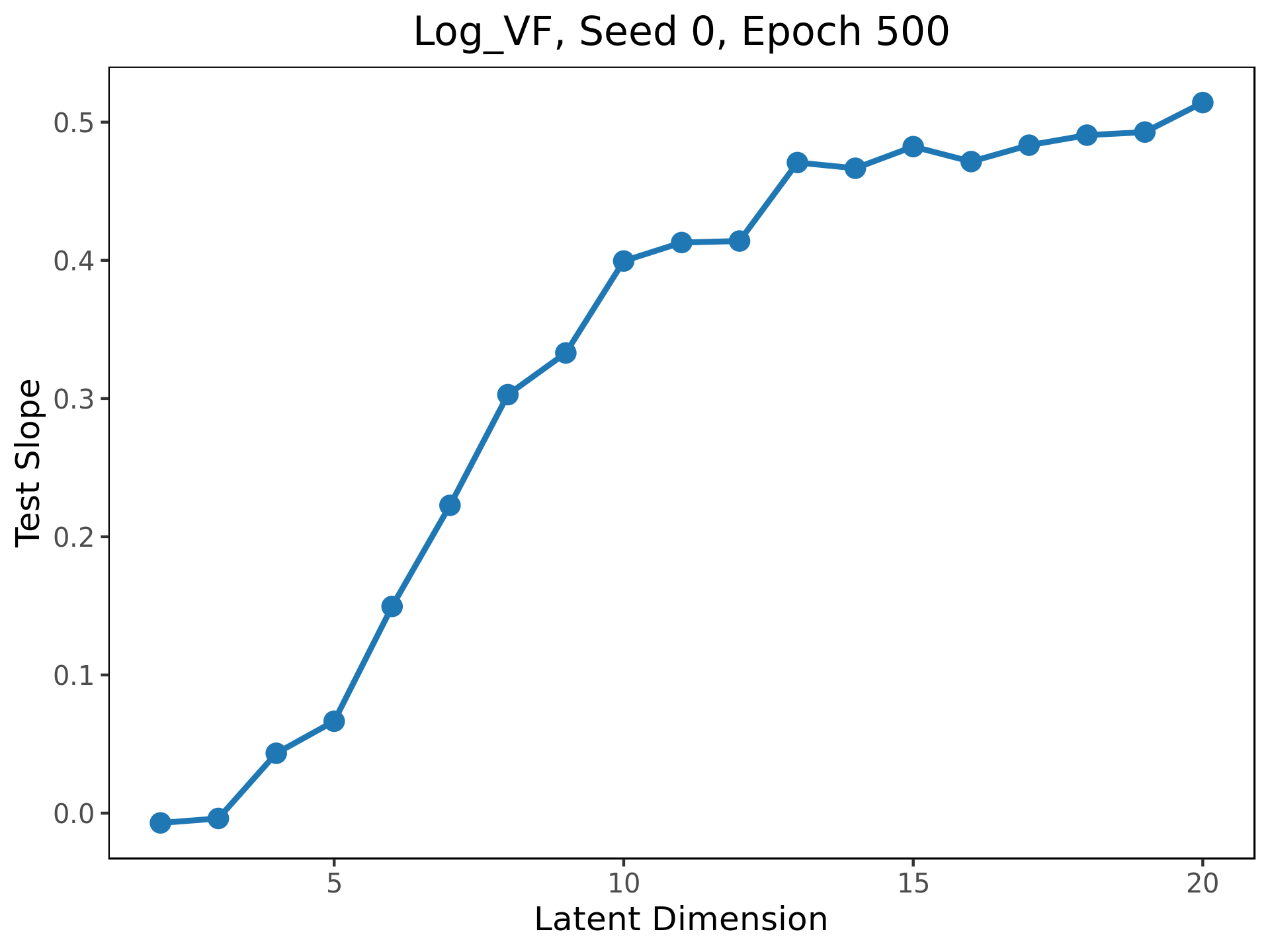}
\end{subfigure}
\begin{subfigure}[t]{0.3\textwidth}
\includegraphics[width=\textwidth]{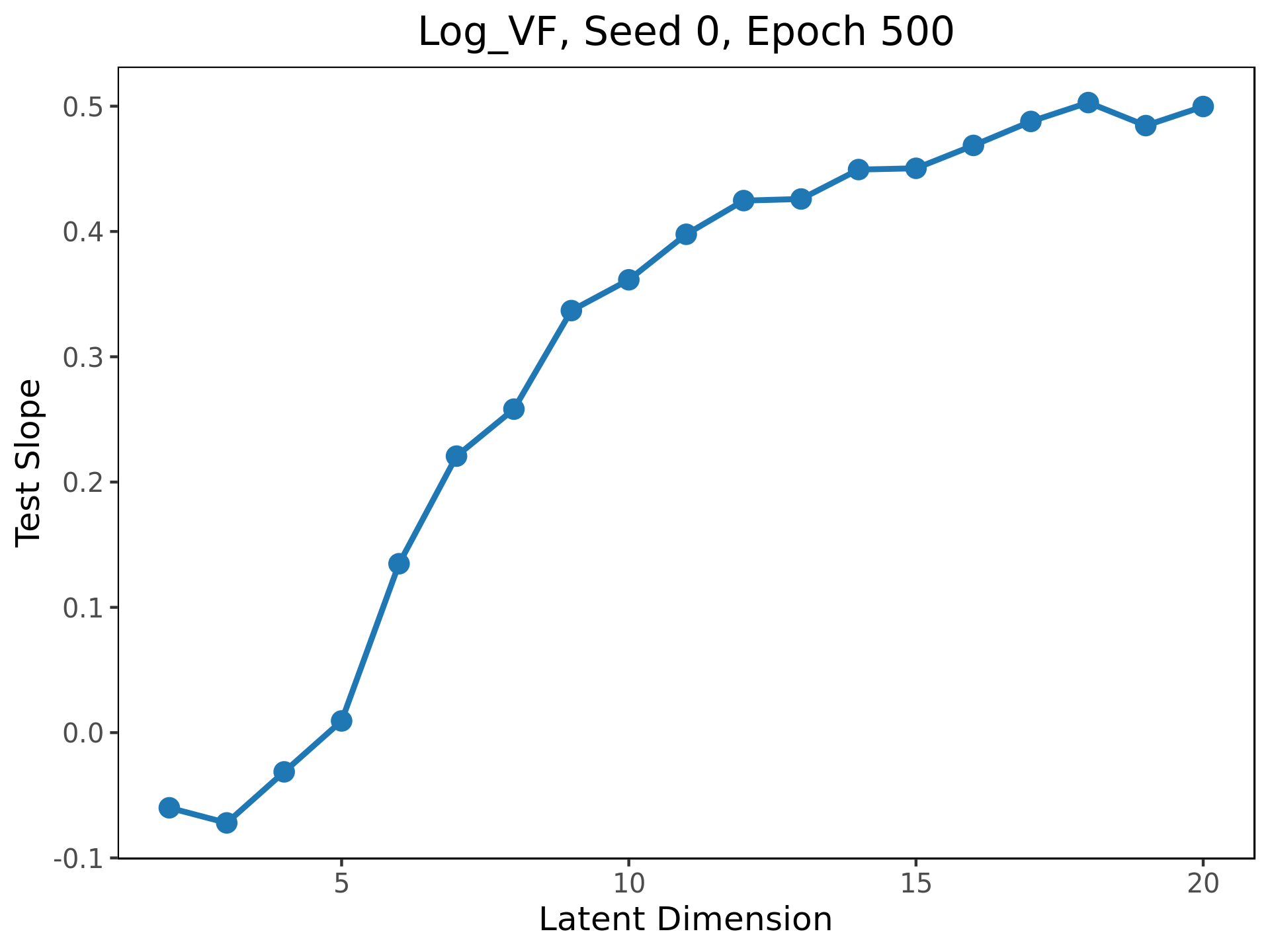}
\end{subfigure}
\begin{subfigure}[t]{0.3\textwidth}
\includegraphics[width=\textwidth]{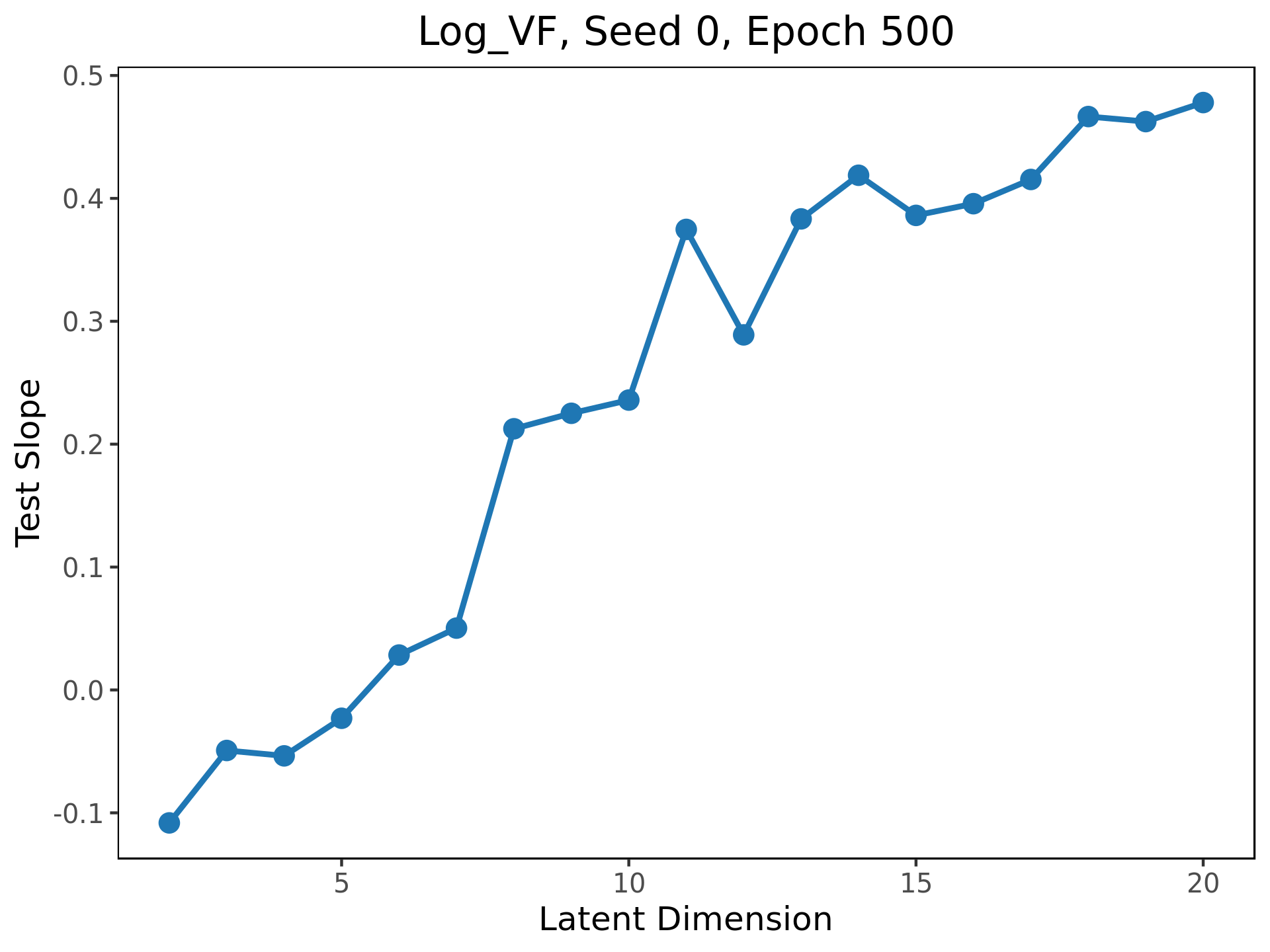}
\end{subfigure}
\newline 
\begin{subfigure}[t]{0.3\textwidth}
\includegraphics[width=\textwidth]{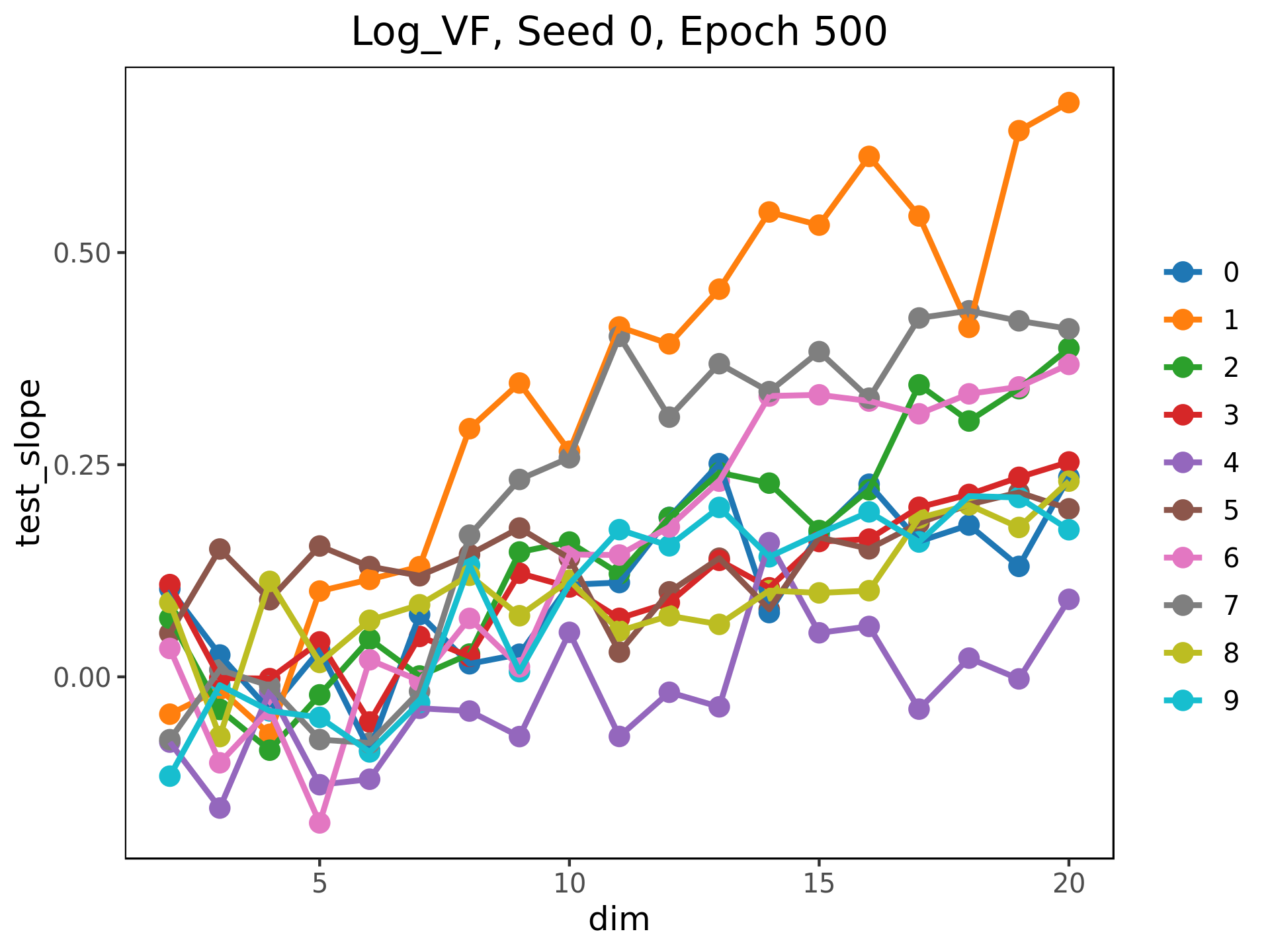}
\end{subfigure}
\begin{subfigure}[t]{0.3\textwidth}
\includegraphics[width=\textwidth]{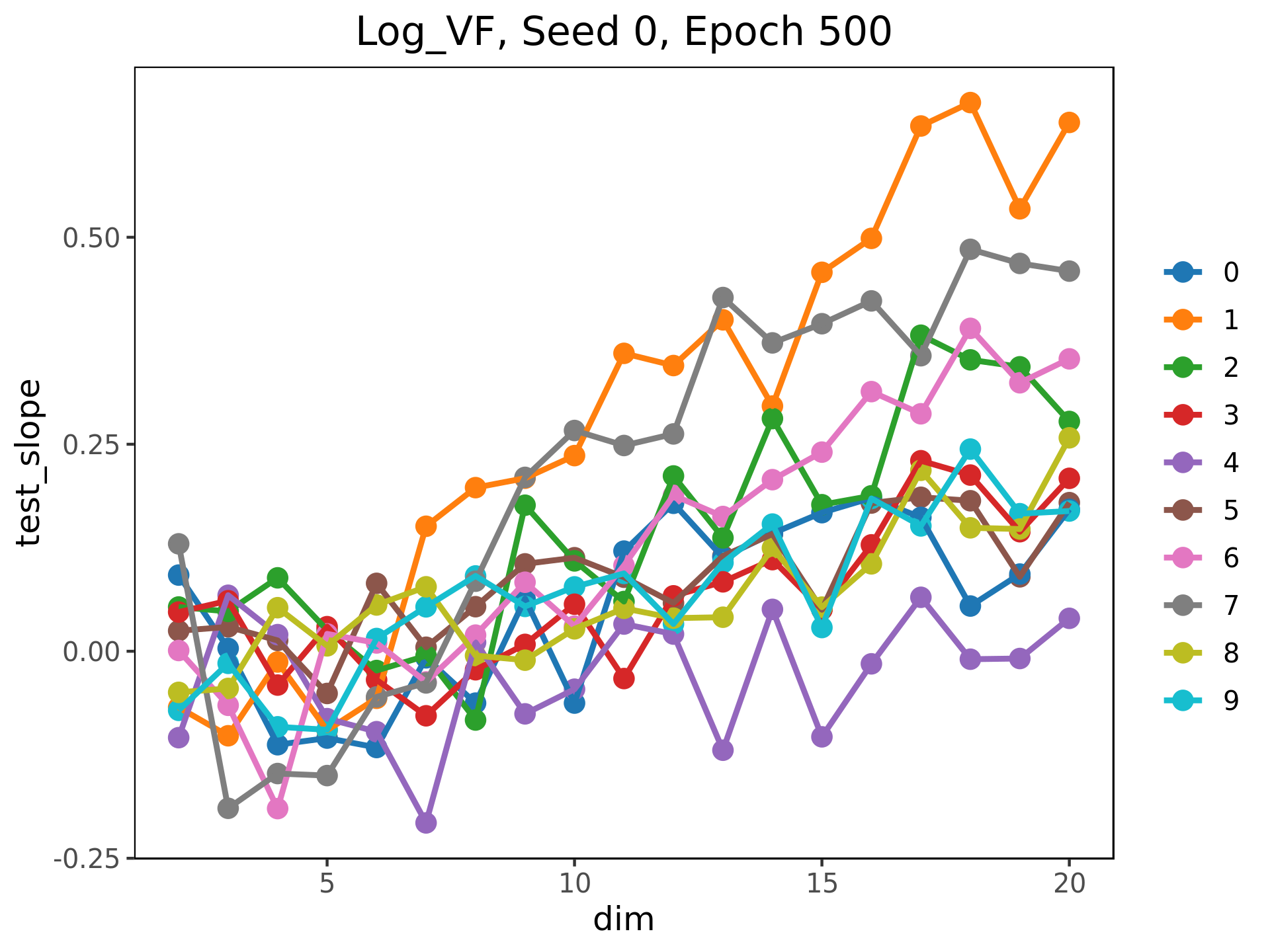}
\end{subfigure}
\begin{subfigure}[t]{0.3\textwidth}
\includegraphics[width=\textwidth]{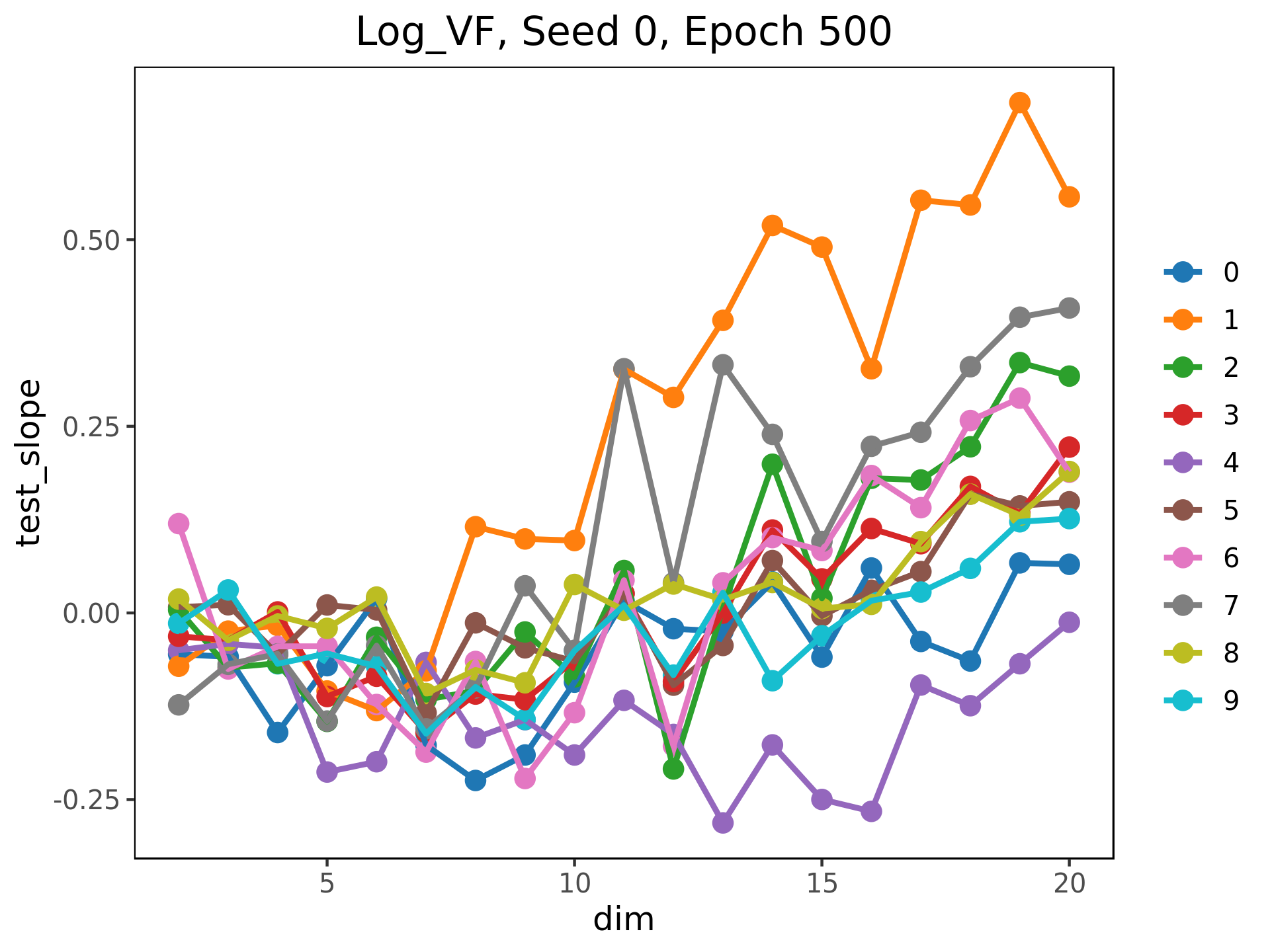}
\end{subfigure}
\caption{The expected increase (in units of standard deviation) of MSE for a test point given a standard deviation increase in log volume form increases with latent dimension. This figure shows the growth for $J_{\cL, 5}$.}\label{fig:log_pred}
\end{figure} 

Finally, it is worth noting that computing $\ln(\omega_*)$ is computationally expensive, especially for $* = \cI$. A much simpler quantity is $\Tr(J_{\cL})$ and $\Tr(J_\cI)$. We also test the predictive value of the trace of the Jacobians on the reconstruction error by running the regression \ba MSE = \beta_0 + \beta_1 Trace + \beta_2 test\_indicator +  \beta_3 test\_indicator \times Trace \;.  \label{eq:traceregression}\ea The coefficients for this regressions in the four layer experiment is given in Table \ref{table:traceregression}. Figure \ref{fig:trace_pred} shows the slopes of this regression for each experiment across latent dimension. Note from Table \ref{table:traceregression} that the intercepts for these regressions remain near $0$. 

As with the regressions involving $\log(\omega_*)$, note that  as latent dimension increases, the coefficient $\beta_3$ is consistently positive. In other words, the value of the trace as a predictor of the reconstruct error on test points is similar to that of $\log(\omega_*)$. Both of these are good candidate predictors of whether or not a trained network will generalize on a new data point. Therefore, while the trace may be a less satisfactory candidate for geometrically understanding the distortion between the encoder and the decoder halves of an autoencoder, in practice, it seems to perform as well, with the added advantage of being simpler to compute.


\begin{table}[h]
\centering
\resizebox{\textwidth}{!}{
\csvreader[
	tabular = |l||r|r|r|r|r||r|r|r|r|r|,
	table head = \hline \bfseries{dim} & \multicolumn{5}{|c||}{{\bf Coefficients for the latent space }$\cL$} & \multicolumn{5}{|c|}{{\bf Coefficients for the input space }$\cI$}\\\hline,
	late after last line=\\\hline
]{figs_and_tabs/graphs_seed-0_epochs-300/exp-default/coef_df_Trace_combined.csv}{}{\csvlinetotablerow}
}
\caption{Coefficients of the linear regression in equation \eqref{eq:traceregression} for $\Tr J_\cI$ and $\Tr J_\cL$ across latent dimensions. Run on data from seed 0, exp-default and 300 epochs.}
\label{table:traceregression}
\end{table}

\begin{figure}
\begin{subfigure}[t]{0.3\textwidth}
\includegraphics[width=\textwidth]{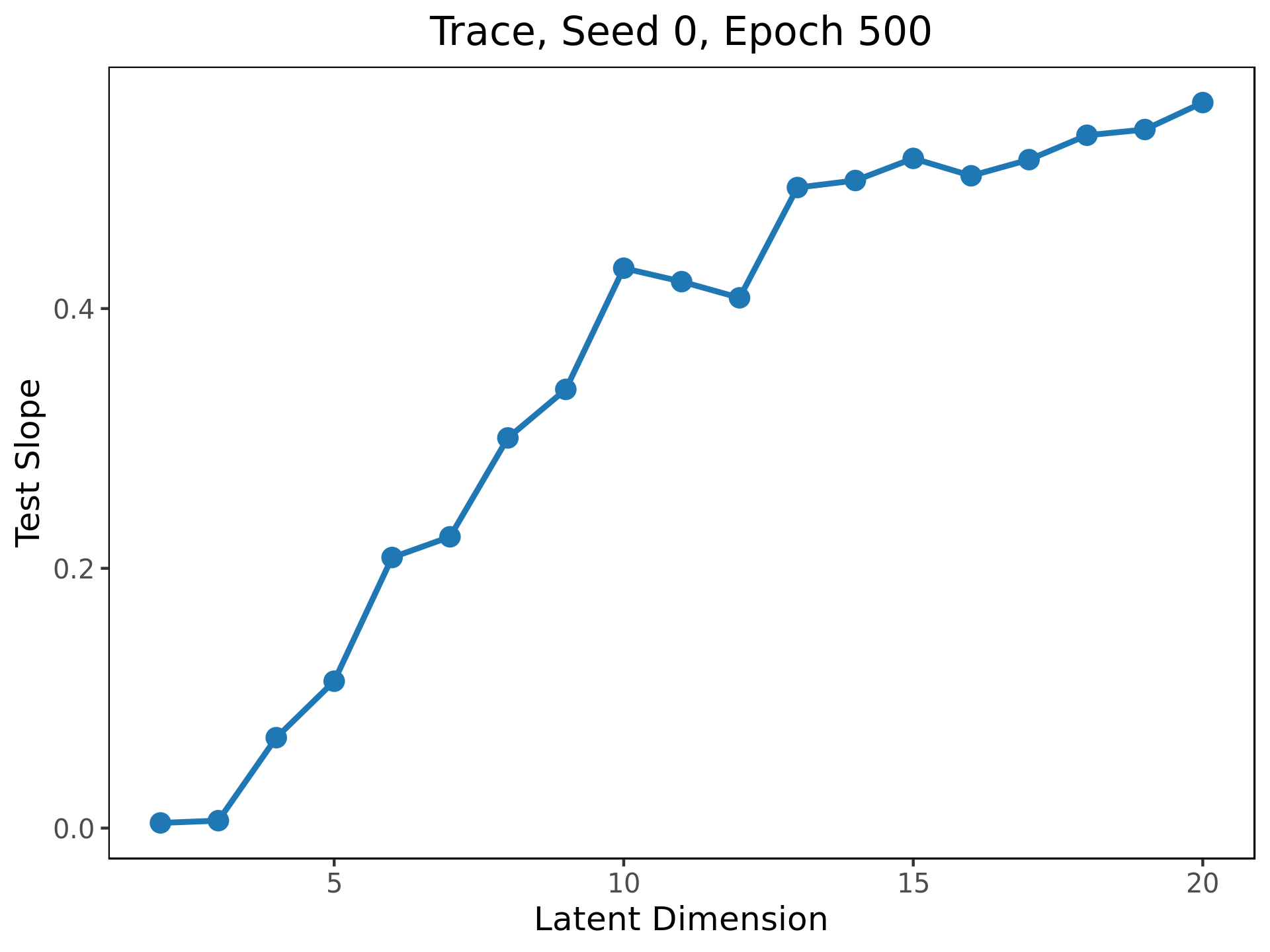}
\end{subfigure}
\begin{subfigure}[t]{0.3\textwidth}
\includegraphics[width=\textwidth]{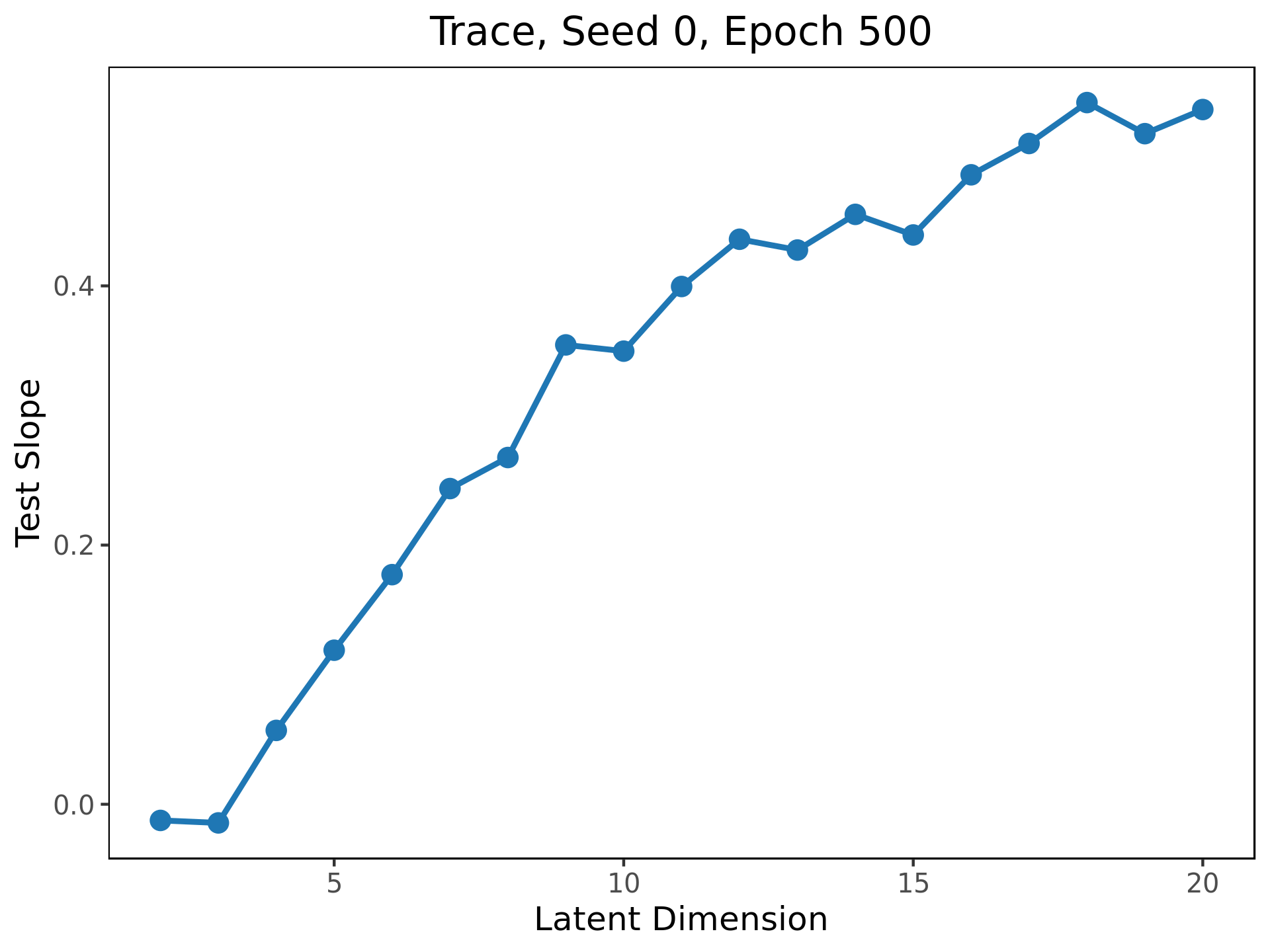}
\end{subfigure}
\begin{subfigure}[t]{0.3\textwidth}
\includegraphics[width=\textwidth]{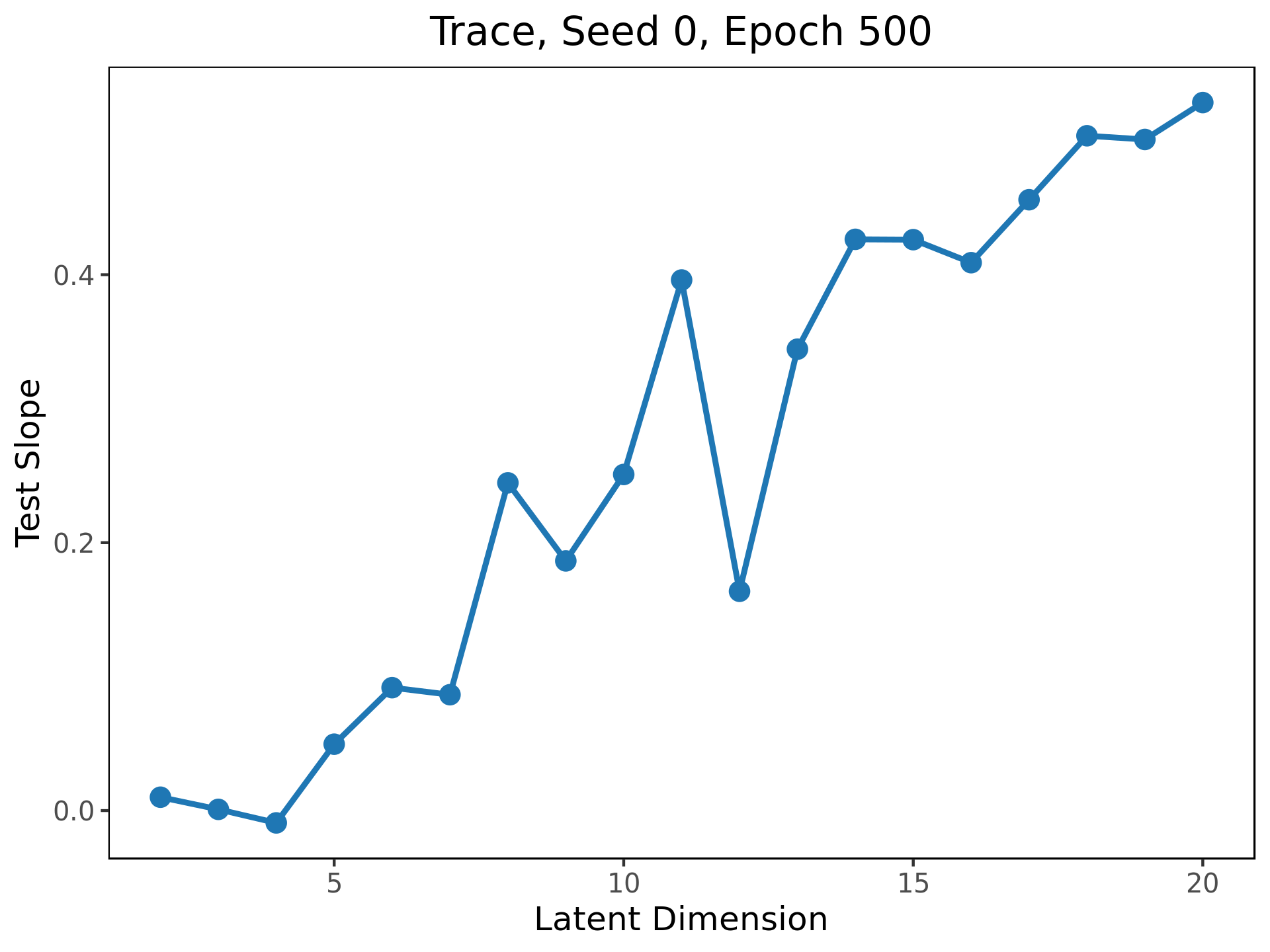}
\end{subfigure}
\newline 
\begin{subfigure}[t]{0.3\textwidth}
\includegraphics[width=\textwidth]{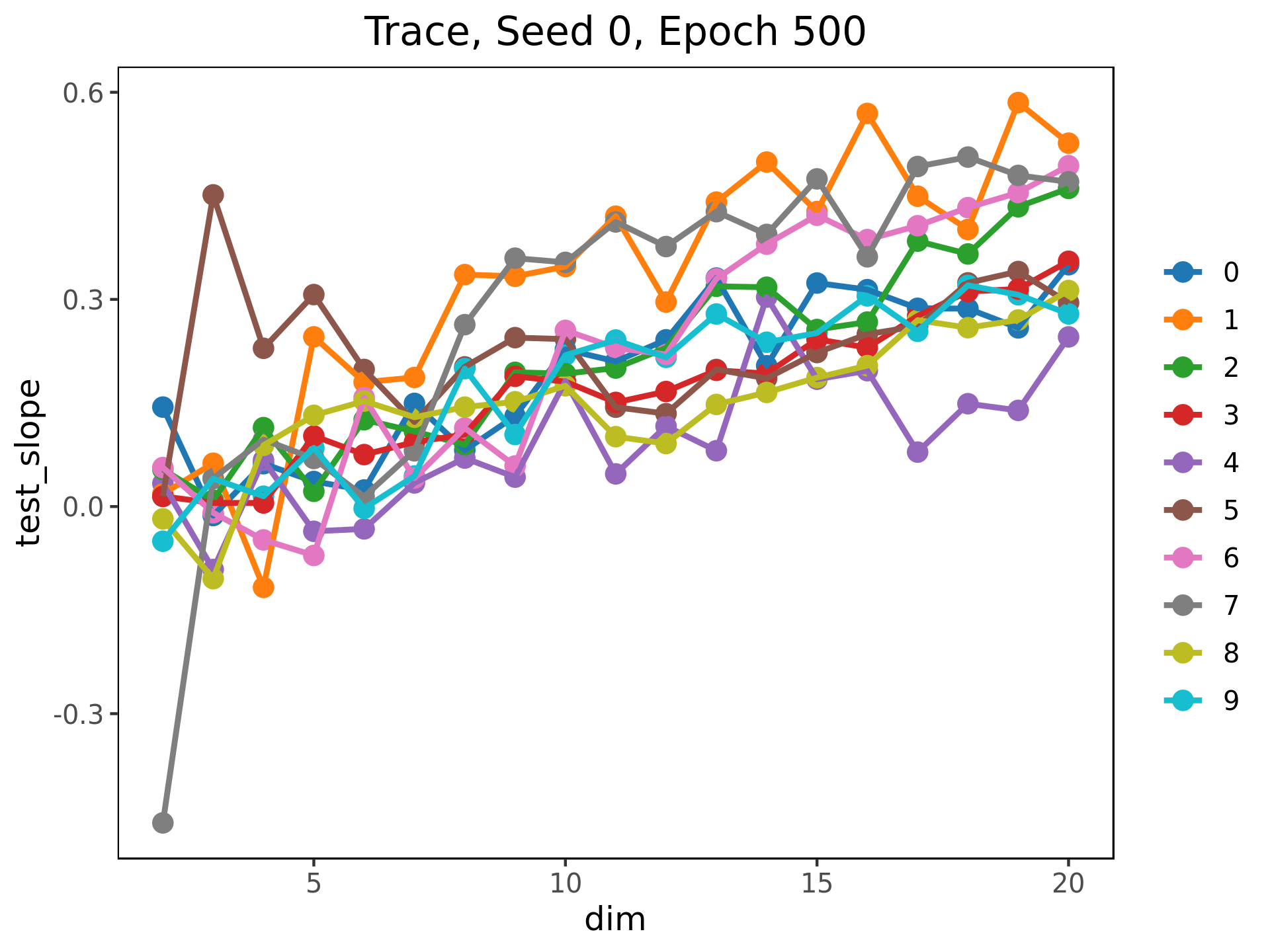}
\end{subfigure}
\begin{subfigure}[t]{0.3\textwidth}
\includegraphics[width=\textwidth]{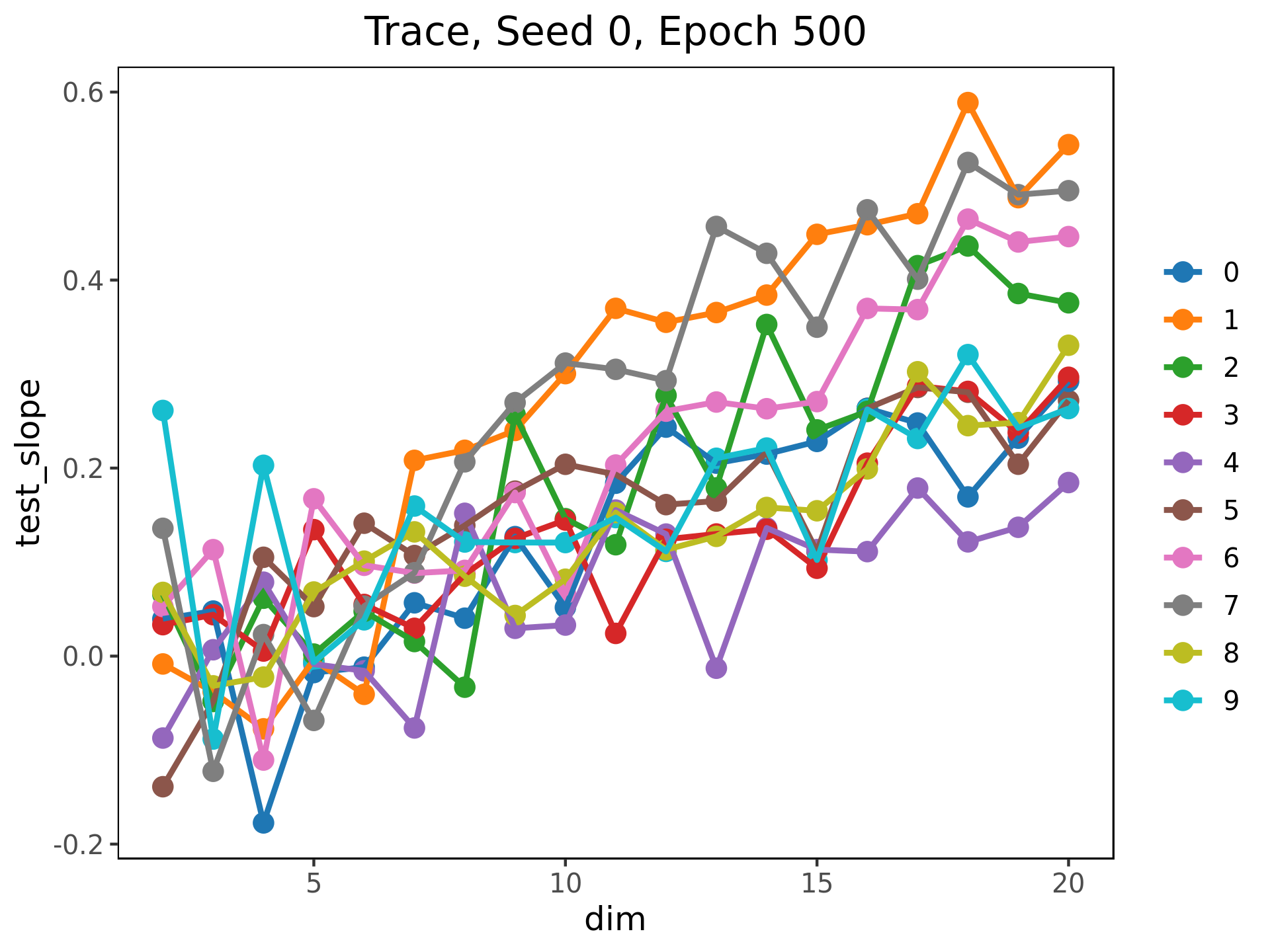}
\end{subfigure}
\begin{subfigure}[t]{0.3\textwidth}
\includegraphics[width=\textwidth]{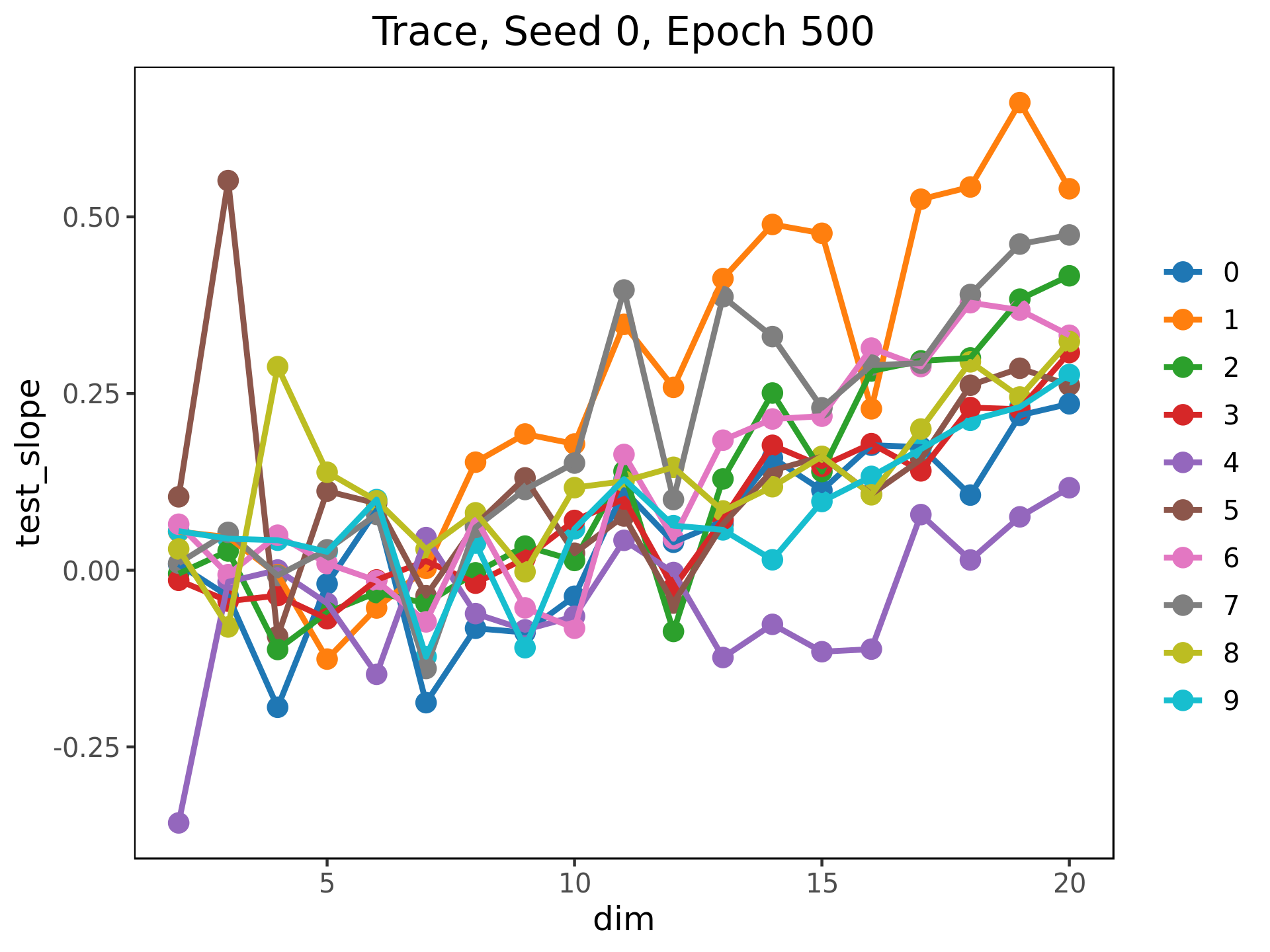}
\end{subfigure}
\caption{The expected increase (in units of standard deviation) of MSE for a test point given a standard deviation increase in trace increases with latent dimension.}\label{fig:trace_pred}
\end{figure}

\section{Conclusion and future work}
In this paper, we have examined the eigenvalues of two Jacobians associated to an autoencoder, $J_\cI$ and $J_\cL$. In doing so, we have seen that the eigenvalues are qualitatively consistent with a situation where the autoencoder is, to first order, locally a projection onto some data manifold $M_\cD$. Recall that the autoencoder is trained to be close to the identity on the set $\cD$ to $0^{th}$ order, where $\cD$ is conjectured to lie noisily around $M_\cD$. Therefore, it is noteworthy that a network trained to perform a zeroth order approximation also gives a good first order approximation. 

Furthermore, we note that two quantities derived from the eigenvalues of $J_{\cI}$ ($\Tr(J_\cL)$ and $\log(\omega_\cL)$) and $J_{\cI}$ ($\Tr(J_\cI)$ and $\log(\omega_\cI)$) are good predictors of where the autoencoder fails to generalize. Namely, all four of these quantities predict a higher reconstruction loss for test points than they do for training points. The beauty of using these quantities as predictors of a trained network's generalizability is that these methods do not require any knowledge of the dataset used for training and validation. In future work, we will apply the networks trained in for the experiments in this paper to related, but distinct datasets (for example, other datasets in the MNIST family) to see how well these candidate predictors perform on deployment data that is different from the training and validation data.

While the natural quantities to study are the eigenvalues of $J_\cI$, the structure of the autoencoder presents a problem in this case. Since the latent dimension is small ($\dim(\cL) <<\dim(\cI)$), the matrix $J_\cI$ does not have full rank. Therefore, in order to calculate $\omega_\cI$, one must first calculate the top $d = \dim(\cL)$ eigenvalues of $J_\cI$. However, we find that the eigenvalues of $J_\cI$ are close to the eigenvalues of $J_\cL$ (which is consistent with the autoencoder being close to an identity on $M_\cD$). Therefore, we propose using the quantity $\log(\omega_\cL)$, which measures the distortion of the latent space induced by the function $E_{model}\circ D_{model}$ as a predictor of the reconstruction error instead of $\log(\omega_\cI)$.

There is one unresolved curiosity of note in this work. We observe that the median eigenvalues of $J_\cI$ and $J_\cL$ are consistently less than 1. For more complicated architectures (for instance, in the $5$ layer autoencoders) we see that training for more epochs increases these magnitude of the median eigenvalues, but never exceeds 1. 

Furthermore, it is well known in the literature that large Frobenius norms \cite{weightdecay, spectralnorm1, Frobenius} of $J_\cI$ leads to a loss of generalizability of the network. However, the analysis in this paper indicates that this is not an effect that is driven purely by the very large norms. In fact, even when the median eigenvalues of $J_\cI$ are less than 1, increasing the quantity $\log(\omega_\cL)$ predicts a higher reconstruction loss on both the test and train points. 

This implies that the phenomenon being observed is more complicated than the ideal scenario laid out in Section \ref{sec:model}, as that discussion would imply that the predicted reconstruction loss would decrease as the eigenvalues approached $1$ from below, then increased again as they surpassed it. This needs further investigation.

We also note that in studying the eigenvalues of a trained autoencoder, we do not find zero eigenvalues appearing after the conjectured intrinsic dimensions of various classes (see for instance, \cite{ID}. If the autoencoder actually projected onto the data manifold precisely, we would expect to see more zero eigenvalues, because we would expect only $\dim(M_\cD)$ nonzero eigenvalues.  However, simple autoencoders, such as the ones trained here, are likely to also approximate some ``noise" directions in addition to the data directions.  This should result in more eigenvalues being drawn away from zero, as the autoencoder has learned more features than are present in the data manifold and will thus try to map onto a larger-dimensional space.  It is our hope that a method which identifies these noise directions separately, such as a variational autoencoder, will show eigenvalues more consistent with those of a projection.


\end{document}